\DeclareMathOperator*{\argmax}{argmax}
\newcommand*{\iid}{\overset{\text{i.i.d.}}{\sim}}
\newcommand{\erdosrenyi}{Erd\H{o}s-R\'enyi{}}
\def\thmhead@plain#1#2#3{%
  \thmname{#1}\thmnumber{\@ifnotempty{#1}{ }\@upn{#2}}%
  \thmnote{{\the\thm@notefont#3}}}
\let\thmhead\thmhead@plain
\DeclareMathOperator{\probdist}{ASD}
\DeclareMathOperator{\gmm}{GMM}
\newcommand*{\prob}{ASD{}}
\newcommand{\tGMM}{\text{GMM}}
\newcommand{\mle}[1]{\widehat{A}_{#1}}
\newcommand{\mleS}{\widehat{A}_{\mathcal{S}}}
\newcommand{\containA}{\breve{\mathcal{S}}(A)}
\newcommand{\mleT}{\widehat{T}}
\newcommand{\asdS}{\probdist_{\mathcal{S}}(A,\mu)}
\newcommand{\alphaGMM}{\widehat{\alpha}_{\tGMM}}
\newcommand{\muGMM}{\widehat{\mu}_{\tGMM}}
\newcommand{\likX}{L_{\alpha+\kappa, \mu+\tau}(\mathbf{X})}
\newcommand\email[2][]%
   {\newaffiltrue\let\AB@blk@and\AB@pand
      \if\relax#1\relax\def\AB@note{\AB@thenote}\else\def\AB@note{\relax}%
        \setcounter{Maxaffil}{0}\fi
      \begingroup
        \let\protect\@unexpandable@protect
        \def\thanks{\protect\thanks}\def\footnote{\protect\footnote}%
        \@temptokena=\expandafter{\AB@authors}%
        {\def\\{\protect\\\protect\Affilfont}\xdef\AB@temp{#2}}%
         \xdef\AB@authors{\the\@temptokena\AB@las\AB@au@str
         \protect\\[\affilsep]\protect\Affilfont\AB@temp}%
         \gdef\AB@las{}\gdef\AB@au@str{}%
        {\def\\{, \ignorespaces}\xdef\AB@temp{#2}}%
        \@temptokena=\expandafter{\AB@affillist}%
        \xdef\AB@affillist{\the\@temptokena \AB@affilsep
          \AB@affilnote{}\protect\Affilfont\AB@temp}%
      \endgroup
       \let\AB@affilsep\AB@affilsepx
}
\title{\vspace{-4em}Quantifying and Reducing Bias in Maximum Likelihood Estimation of Structured Anomalies}
\author[1]{Uthsav Chitra}
\author[1]{Kimberly Ding}
\author[2]{Jasper C.H. Lee}
\author[1]{Benjamin J. Raphael}
\affil[1]{Department of Computer Science, Princeton University}
\email{\texttt{\{uchitra,kding,braphael\}@princeton.edu}\vspace{0.75em}}
\affil[2]{Department of Computer Science, Brown University}
\email{\texttt{jasperchlee@brown.edu}}
\date{\vspace{-3em}}                     
\begin{document}


\maketitle

\begin{abstract}
Anomaly estimation, or the problem of finding a subset of a dataset that differs from the rest of the dataset, is a classic problem in machine learning and data mining.
In both theoretical work and in applications, the anomaly is assumed to have a specific structure defined by membership in an \emph{anomaly family}.
For example, in temporal data the anomaly family may be time intervals, while in network data the anomaly family may be connected subgraphs.
The most prominent approach for anomaly estimation is to compute the Maximum Likelihood Estimator (MLE) of the anomaly; however, it was recently observed that for normally distributed data, the MLE is a \emph{biased} estimator for some anomaly families.
In this work, we demonstrate that in the normal means setting, the bias of the MLE depends on the size of the anomaly family. We prove that if the number of sets in the anomaly family that contain the anomaly is sub-exponential, then the MLE is asymptotically unbiased. We also provide empirical evidence that the converse is true: if the number of such sets is exponential, then the MLE is asymptotically biased. 
Our analysis unifies a number of earlier results on the bias of the MLE for specific anomaly families.
Next, we derive a new anomaly estimator using a mixture model, and we prove that our anomaly estimator is asymptotically unbiased regardless of the size of the anomaly family.
We illustrate the advantages of our estimator versus the MLE on disease outbreak and highway traffic data.
\end{abstract}

\section{Introduction}

Anomaly identification --- the discovery of rare, irregular, or otherwise anomalous behavior in data --- is a fundamental problem in machine learning and data mining with numerous applications \cite{Chandola2009}. In temporal/sequential data, applications of anomaly identification include change-point detection and inference \cite{Page1955,Hinkley1970,Adams2007, Zhai2016}; in matrix data, applications include bi-clustering \cite{Hartigan1972,Tanay2005,Kolar2011} and gene expression analysis \cite{Ideker2002,Dittrich2008}; in spatial data, applications include disease outbreak and event detection \cite{Neill2004,Neill2005, Neill2012};
and in network data, applications include large-scale network surveillance \cite{Arias-Castro2011,Sharpnack2013,Sharpnack2013b} and outbreak detection \cite{Wong2003,Leskovec2007}.
In many applications, the anomalous behavior is assumed to have a specific structure described by membership in an \emph{anomaly family}.
For example, in temporal data the anomaly family may be time intervals;
in matrix data the anomaly family may be submatrices; and
in network data the anomaly family may be connected subgraphs.

Anomaly identification can be divided into two different but closely related problems: \emph{anomaly detection} and \emph{anomaly estimation}.  Given a dataset, 
the goal of anomaly detection is to decide whether or not there exists an \emph{anomaly}, or a subset of the data,
that is distributed according to a different probability distribution compared to the rest of the data. The goal of anomaly estimation is to determine the data points in the anomaly.
The distinction between anomaly detection and anomaly estimation is analogous to the distinction between property testing and proper learning in statistical learning theory \cite{Goldreich1998}: just as property testing is ``easier" than proper learning (with difficulty measured by sample complexity), anomaly detection is easier than anomaly estimation (with difficulty measured by the separation between the distributions of the anomaly and the rest of the data).
Different choices of the anomaly family give rise to different versions of the anomaly detection and estimation problems; e.g.
change-point detection versus change-point inference in temporal data \cite{Arias-Castro2005,Hinkley1971,Jeng2010}, or submatrix detection versus submatrix estimation in matrix data \cite{Hajek2017,Butucea2013,Ma2015,Brennan2019,Chen2016,Banks2018,Liu2019,Gamarnik2019}.


Most of the theoretical literature on anomaly detection and estimation focuses on \emph{structured normal means} problems \cite{Sharpnack2013b,Krishnamurthy2016}. In this setting, each data point is drawn from one of two normal distributions, with the data points from the anomaly drawn from the normal distribution with the higher mean; the structure of the anomaly is determined by the anomaly family.  
Normal means problems have a long history in statistics and machine learning as many statistical tests commonly used in  scientific disciplines are asymptotically normal, e.g.  see \cite{Arias-Castro2011,Donoho2004,Cai2007,Kolar2011,Sharpnack2013b,Chen2016,Liu2019}.
In this paper we also focus on the structured normal means setting, but we emphasize that our results algorithms can be readily extended to other probability distributions from the exponential family as in earlier works \cite{Butucea2013,Liu2019}.


The most widely used techniques for both anomaly detection and anomaly estimation problems are likelihood models: the generalized likelihood ratio (GLR) test for the detection problem, and the maximum likelihood estimator (MLE) for the estimation problem. Both the GLR test statistic and the MLE can be expressed using a \emph{scan statistic}, or the maximization of a function across all members of the anomaly family \cite{Kulldorff1997,Glaz2010}.  In fact,  as we note in Proposition \ref{thm:glr_mle_asd}, both the GLR test statistic and the MLE involve the maximization of the \emph{same} function. 

Despite this close relationship between the GLR test and the MLE, the two quantities have different theoretical guarantees for their respective problems.
The GLR test is known to be asymptotically ``near-optimal" for solving the anomaly detection problem across many different anomaly families, including intervals \cite{Arias-Castro2005}, submatrices \cite{Butucea2013}, subgraphs with small cut-size \cite{Sharpnack2013b}, and connected subgraphs \cite{Qian2014}. 
In contrast, the MLE is known to be asymptotically near-optimal for solving the anomaly estimation problem only when the anomaly family is intervals \cite{Jeng2010} or submatrices \cite{Liu2019}.
In fact, \cite{NetMix} recently observed that the MLE is a \emph{biased} estimator of the size of the anomaly when the anomaly family is connected subgraphs of a biological network.
 
These varying results for anomaly estimation across different anomaly families suggest that the bias of the MLE depends on the anomaly family, and thus raise the following two questions: (1) For which anomaly families is the MLE biased? (2) Are there anomaly estimators that are less biased than the MLE?

In this work we address both of these questions.
First, we show that the bias in the MLE depends on the size of the anomaly family.\footnote{All proofs are given in the Appendix.} We prove that if the number of sets in the anomaly family that contain the anomaly is sub-exponential, then the MLE is an asymptotically unbiased estimator. We also provide empirical evidence that the converse is true by examining many common anomaly families including intervals, submatrices, connected subgraphs, and subgraphs with low-cut size.  Our results unify a number of previous results in the literature including the asymptotic optimality of the MLE when the anomaly family is intervals \cite{Jeng2010} or submatrices \cite{Liu2019}, and the observation that the MLE is biased when the anomaly family is connected subgraphs \cite{NetMix}.

Next, we derive a reduced-bias estimator of the anomaly based on a Gaussian mixture model (GMM).  Our estimator is motivated by previous work that models \emph{unstructured} anomalies using GMMs \cite{Cai2007,Donoho2004}.
We prove that our GMM-based estimator is asymptotically unbiased,
regardless of the size of the anomaly family or the number of sets containing the anomaly. We empirically demonstrate the small bias of our estimator for several anomaly families including intervals, submatrices, and connected subgraphs. We illustrate the advantages of our estimator versus the MLE on both disease outbreak data and a highway traffic dataset.




\section{Background: Structured Anomalies and Maximum Likelihood Estimation}
\subsection{Problem Formulation}

Suppose one is given observations $(X_1, \dots, X_n)$, where a subset $A \subseteq \{1, \dots, n\}$ of these observations, the \emph{anomaly}, are drawn from a normal distribution $N(\mu,1)$ with elevated mean and the remaining observations are drawn from the standard normal distribution $N(0,1)$.  Using the notation $[n] = \{1, \dots, n\}$ and $\mathcal{P}_n$ for the power set of $[n]$, or the set of all subsets of $[n]$ for a positive integer $n$, we define the distribution of the observations $(X_1, \dots, X_n)$ as follows.

\begin{namedproblem}[\textbf{Anomalous Subset Distribution (ASD)}]
Let $\mu > 0$, let $\mathcal{S} \subseteq \mathcal{P}_n$ be a family of subsets of $[n]$ and let $A \in \mathcal{S}$.
We say $\mathbf{X} = (X_1, \dots, X_n)$ is distributed according to the \emph{Anomalous Subset Distribution} $\probdist_{\mathcal{S}}(A, \mu)$ provided the $X_i$ are independently distributed as
\begin{equation}
    X_i \sim 
    \begin{cases}
    N(\mu, 1), & \text{if } i \in A,\\
    N(0, 1), & \text{otherwise}.
    \end{cases}
\end{equation}
\end{namedproblem}
The distribution $\probdist_{\mathcal{S}}(A, \mu)$ has three parameters: the \emph{anomaly family} $\mathcal{S}$, the \emph{anomaly} $A$, and the \emph{mean} $\mu$.  

\begin{figure*}[t]
    \centering
\includegraphics[width=\linewidth]{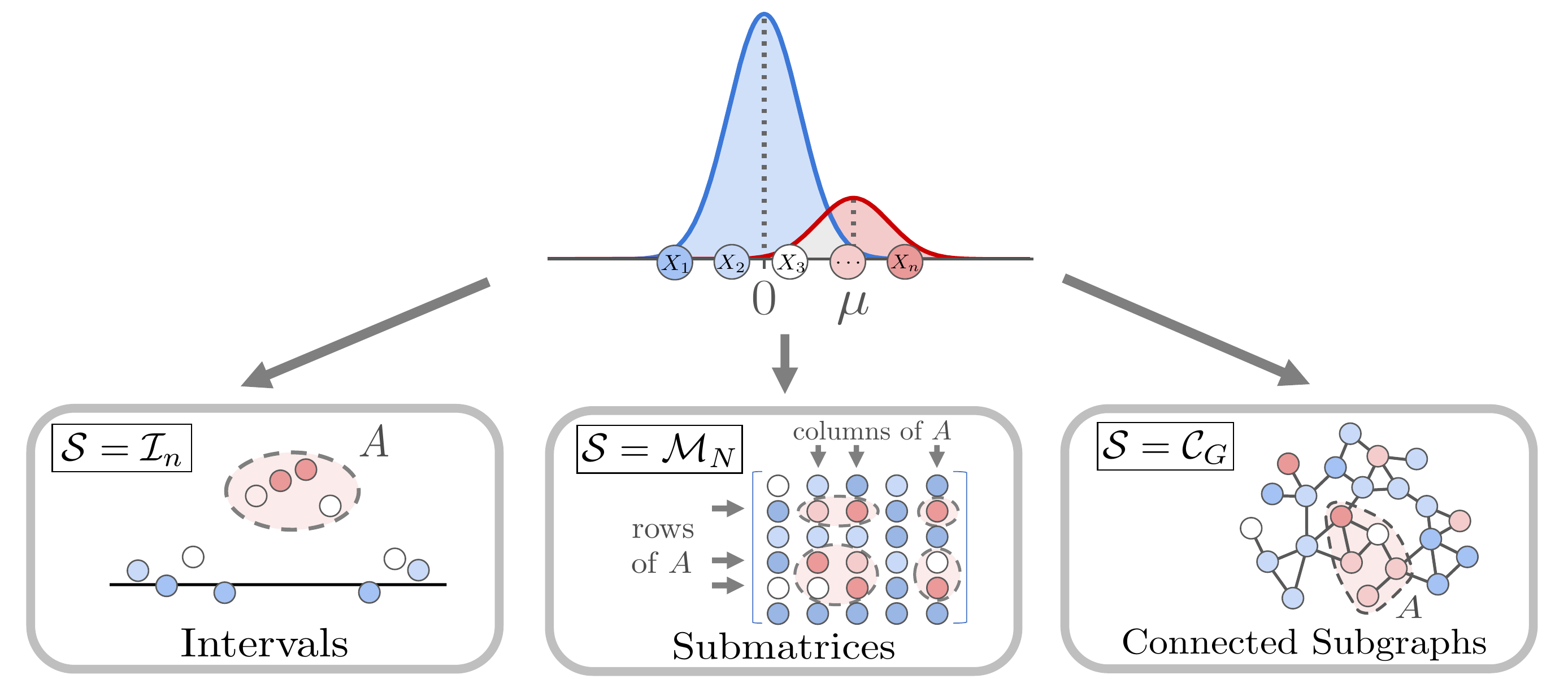}
    \caption{Observations $(X_1, \dots, X_n)$ from the Anomalous Subset Distribution $\probdist_{\mathcal{S}}(A, \mu)$ for three anomaly families $\mathcal{S}$.}
    \label{fig:overview}
\end{figure*}

The goal of anomaly estimation is to learn the anomaly $A$, given data $\mathbf{X} \sim \probdist_{\mathcal{S}}(A, \mu)$ and anomaly family $\mathcal{S}$. 
We formalize this problem as the following estimation problem.

\begin{namedproblem}[\textbf{\prob{} Estimation Problem}]
Given $\mathbf{X} = (X_1, \dots, X_n) \sim \probdist_{\mathcal{S}}(A, \mu)$ and $\mathcal{S}$, find $A$.
\end{namedproblem}

A related problem is the decision problem of deciding whether or not data $\mathbf{X}$ contains an anomaly.  We formalize this problem as the following hypothesis testing problem.

\begin{namedproblem}[\textbf{\prob{} Detection Problem}]
Given $\mathbf{X} = (X_1, \dots, X_n) \sim \probdist_{\mathcal{S}}(A, \mu)$ and $\mathcal{S}$, test between the hypotheses $H_0: A = \varnothing$ and $H_1: A \neq \varnothing$.
\end{namedproblem}

The ASD Detection and Estimation Problems are also called \emph{structured normal means} problems \cite{Krishnamurthy2016,Sharpnack2013b},  where the structure comes from the choice of the anomaly family $\mathcal{S}$.

Many well-known problems in machine learning correspond to the ASD Detection and Estimation Problems for different anomaly families $\mathcal{S}$. 
In particular, we note the following examples.
\begin{itemize}
    \item $\mathcal{S} = \mathcal{I}_n$, the set of all intervals $\{i, i+1, \dots, j\} \subseteq [n]$. We call $\mathcal{I}_n$ the \emph{interval} family, and we call $\probdist_{\mathcal{I}_n}(A, \mu)$ the \emph{interval ASD}.
    The interval ASD is used to model change-points, or abrupt changes, in sequential data including time-series and DNA sequences \cite{Hinkley1970,Hinkley1971,Basseville1993,Jeng2010}.
    
    \item $\mathcal{S} = \mathcal{C}_G$, the set of all connected subgraphs of a graph $G = (V, E)$ with vertices $V = \{1, \dots, n\}$. 
    We call $\mathcal{C}_G$ the \emph{connected} family, and we call $\probdist_{\mathcal{C}_G}(A, \mu)$ the \emph{connected ASD}. 
    The connected ASD is used to model anomalous behavior in different types of networks including social networks, sensor networks and biological networks \cite{Qian2014,Aksoylar2017,Ideker2002,NetMix}. Note that the interval family  $\mathcal{I}_n$ is a special case of the connected family $\mathcal{C}_{P_n}$ for the path graph $P_n$ with $n$ vertices.
    
    \item $\mathcal{S} = \mathcal{T}_{G, \rho}$, the set of all subgraphs $H$ of a graph $G$ with $|\{(i, j) \in E : i \in H, j \not\in H\}| \leq \rho$. We call $\mathcal{T}_{G, \rho}$ the \emph{graph cut family}, and we call $\probdist_{\mathcal{T}_{G, \rho}}(A, \mu)$ the \emph{graph cut ASD}.  The graph cut ASD is also used to model anomalous behavior in networks \cite{Sharpnack2013,Sharpnack2013b,Sharpnack2016}.
    \item $\mathcal{S} = \mathcal{E}_{G,\delta}$, the set of all subgraphs $H$ of a graph $G$ with edge-density at least $\delta$.  We call $\mathcal{E}_{G, \delta}$ the \emph{edge-dense family}, and we call $\probdist_{\mathcal{E}_{G, \delta}}(A, \mu)$ the \emph{edge-dense ASD}.  The edge-dense ASD is also used to model anomalous behavior in networks \cite{Cadena2018b}.
    
        \item $\mathcal{S} = \mathcal{M}_N$, the set of all submatrices of a square matrix $N$ with $n$ entries (each observation $X_i$ corresponds to an entry of $N$). We call $\mathcal{M}_N$ the \emph{submatrix family}, and we call $\probdist_{\mathcal{M}_N}(A, \mu)$  the \emph{submatrix ASD}. 
The clustering literature often uses the submatrix ASD to model biclusters in matrix data
\cite{Kolar2011,Butucea2013,Brennan2019,Liu2019}.

      \item $\mathcal{S} = \mathcal{B}_{P, \epsilon}$, the set of all $\epsilon$-balls of points $P=\{p_1, \dots, p_n\} \subseteq \mathbb{R}^d$ in space. 
        We call  $\mathcal{B}_{P, \epsilon}$ the \emph{$\epsilon$-ball} family.
The spatial scan statistic is a standard tool for solving the ASD Estimation Problem with the $\epsilon$-ball family $\mathcal{B}_{P, \epsilon}$ \cite{Kulldorff1997,Glaz2010}.
    
    \item $\mathcal{S} = \mathcal{P}_n$, the power set of $\{1, \dots, n\}$. We call $\mathcal{P}_n$ the \emph{unstructured family}, and we call $\probdist_{\mathcal{P}_n}(A, \mu)$ the \emph{unstructured ASD}. 
\end{itemize}

\subsection{Maximum Likelihood Anomaly Estimation}
\label{subsec:mle}

A standard approach in statistics for solving a hypothesis testing problem is to use the generalized likelihood ratio (GLR) test, which the Neyman-Pearson lemma \cite{Lehmann2005} shows is the most powerful test for any significance level.  Likewise, a standard approach for solving an estimation problem is to compute a maximum likelihood estimator (MLE). For the ASD Detection and Estimation problems, the GLR test statistic and the MLE, respectively, have explicit formulas that involve the maximization of the same function, $\Gamma(S) = \frac{1}{\sqrt{S}} \sum_{v\in S} X_v$. We write out these formulas below; see \cite{Arias-Castro2011,Sharpnack2013b,NetMix} for proofs.

\begin{prop}
\label{thm:glr_mle_asd}
Let $\mathbf{X} \sim \probdist_{\mathcal{S}}(A, \mu)$ be distributed according to the ASD. The Generalized Likelihood Ratio (GLR) test statistic $\widehat{t}_{\mathcal{S}}$ for the ASD Detection Problem is
\begin{equation}
\label{eq:glr_test_stat}
\widehat{t}_{\mathcal{S}} = \max_{S \in \mathcal{S}} \; \Gamma(S) = \max_{S \in \mathcal{S}} \frac{1}{\sqrt{|S|}} \sum_{v\in S} X_v.
\end{equation}
The Maximum Likelihood Estimator (MLE) $\mleS$ of the anomaly $A$ is
\begin{equation}
\label{eq:mle_def}
\mleS = \argmax_{S \in \mathcal{S}} \; \Gamma(S) = \argmax_{S \in \mathcal{S}} \frac{1}{\sqrt{|S|}} \sum_{v\in S} X_v.
\end{equation}
\end{prop}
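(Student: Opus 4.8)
The plan is to derive both formulas from the joint likelihood of $\mathbf{X}$ under the $\probdist$, treating the candidate anomaly $S \in \mathcal{S}$ and the mean $\mu > 0$ as the parameters to be fit. First I would write the likelihood as a product of Gaussian densities, $L(S, \mu) = \prod_{i \in S} \phi(X_i - \mu) \prod_{i \notin S} \phi(X_i)$ where $\phi$ is the standard normal density, and pass to the log-likelihood. Expanding the squares and collecting terms, the quantity $-\tfrac12 \sum_{i=1}^n X_i^2$ is independent of both $S$ and $\mu$ and can be discarded, leaving the log-likelihood equal, up to this additive constant, to $\ell(S, \mu) = \mu \sum_{i \in S} X_i - \tfrac12 |S| \mu^2$.

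The crux of the argument is to profile out $\mu$, a nuisance parameter that appears nowhere in the final formula for $\Gamma$. For each fixed $S$, I would maximize $\ell(S, \mu)$ over $\mu$: the stationary point is $\widehat{\mu}_S = \tfrac{1}{|S|} \sum_{i \in S} X_i$, and substituting back yields the profile log-likelihood $\tfrac12 \bigl( \tfrac{1}{\sqrt{|S|}} \sum_{i \in S} X_i \bigr)^2 = \tfrac12 \Gamma(S)^2$. This single computation is precisely where the $1/\sqrt{|S|}$ normalization in $\Gamma$ originates, and it is shared by both the estimation and the detection problems. The positivity constraint $\mu > 0$ must be handled: when $\sum_{i \in S} X_i \le 0$ the constrained optimum sits at the boundary $\mu \to 0^+$ and contributes $0$, so the profile log-likelihood is really $\tfrac12 (\Gamma(S)_+)^2$; I would note that this does not affect any of the conclusions below.

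Given the profile log-likelihood, both formulas follow. For the MLE I would maximize $\tfrac12 (\Gamma(S)_+)^2$ over $S \in \mathcal{S}$; since $x \mapsto \tfrac12 (x_+)^2$ is nondecreasing, its argmax coincides with $\argmax_{S} \Gamma(S)$ (the maximizer attaining the largest, and in the non-degenerate case positive, value of $\Gamma$), giving $\mleS$ of Eq.~(\ref{eq:mle_def}). For the GLR statistic I would form the likelihood ratio of the $H_1$ fit against the $H_0$ likelihood, where all $X_i \sim N(0,1)$, i.e. $S = \varnothing$; the common constant $-\tfrac12\sum_i X_i^2$ cancels, so $\log \Lambda = \tfrac12 \bigl( \max_{S \in \mathcal{S}} \Gamma(S) \bigr)_+^2$, which is a strictly increasing function of $\widehat{t}_{\mathcal{S}} = \max_{S \in \mathcal{S}} \Gamma(S)$. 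Hence the two test statistics are equivalent, giving Eq.~(\ref{eq:glr_test_stat}).

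The only real obstacle is the careful treatment of $\mu$ as an unknown nuisance parameter together with its constraint $\mu > 0$: one must verify that profiling still produces the clean $\Gamma(S)$ form and that replacing $\Gamma(S)^2$ by its positive part leaves both the argmax (for the MLE) and the monotone correspondence (for the GLR) intact. Everything else is the routine Gaussian algebra sketched above, and agrees with the derivations in the cited references.
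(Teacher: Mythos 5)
Your proposal is correct. Note that the paper itself does not prove Proposition \ref{thm:glr_mle_asd}; it defers to the cited references \cite{Arias-Castro2011,Sharpnack2013b,NetMix}, and your profile-likelihood derivation (write the Gaussian likelihood for a candidate pair $(S,\mu)$, discard the $S$- and $\mu$-independent term $-\tfrac12\sum_i X_i^2$, profile out $\mu$ to get $\tfrac12\Gamma(S)^2$, then maximize over $S$ for the MLE and take the ratio against the null for the GLR) is exactly the standard argument those references use. Your handling of the constraint $\mu>0$ via the positive part is the right refinement; the only residual imprecision is that $\tfrac12\bigl(x_+\bigr)^2$ is nondecreasing rather than strictly increasing, so in the degenerate event $\max_{S\in\mathcal{S}}\Gamma(S)\le 0$ the profile likelihood is flat and the identity $\mleS=\argmax_S\Gamma(S)$ is only a convention --- but you flag this, it occurs with vanishing probability under $H_1$, and the test equivalence for the GLR holds for all thresholds in the relevant positive range, so nothing of substance is affected.
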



A key question in the statistics literature is: for what anomaly families $\mathcal{S}$ and means $\mu$ (i.e. parameters of the ASD) do the GLR test and the MLE solve the ASD Detection and Estimation problems, respectively?

For many anomaly families $\mathcal{S}$, 
it has been shown that the GLR test is asymptotically ``near-optimal". This means that
there exists a value $\mu_{\text{detect}} > 0$ such that the following is true: if $\mu \geq \mu_{\text{detect}}$ then
the GLR test asymptotically solves the ASD Detection Problem with the probability of a type 1 or type 2 error going to $0$ as $n \to \infty$, while if $\mu$ is not much smaller than $\mu_{\text{detect}}$ then there does not exist any test with such guarantees on its type 1 or type 2 error probabilities.
Anomaly families $\mathcal{S}$ for which the GLR test is known to be asymptotically near-optimal include the interval family $\mathcal{S} = \mathcal{I}_n$ \cite{Arias-Castro2005}, the submatrix family $\mathcal{S} = \mathcal{M}_N$ \cite{Butucea2013}, the graph cut family $\mathcal{S} = \mathcal{T}_{G, \rho}$ \cite{Sharpnack2013,Sharpnack2013b}, and the connected family $\mathcal{S} = \mathcal{C}_G$ \cite{Qian2014,Qian2014b}.

For a few anomaly families $\mathcal{S}$, the MLE $\mleS$ has also been shown to optimally solve the ASD Estimation Problem. For the interval family $\mathcal{S} = \mathcal{I}_n$,
\cite{Jeng2010} showed that if $\mu \geq \mu_{\text{detect}}$, then 
$\displaystyle\lim_{n\to\infty} P(\mle{\mathcal{I}_n} \neq A) = 0$.  \cite{Liu2019} proved an analogous result for the submatrix family $\mathcal{S} = \mathcal{M}_N$ using a regularized MLE.
Note that these results require $\mu \geq \mu_{\text{detect}}$, as it is not possible to estimate the anomaly without first detecting the anomaly's presence.




The MLE $\mleS$ is also used in the bioinformatics literature to solve the ASD Estimation Problem for the connected family $\mathcal{S} = \mathcal{C}_G$, where $G$ is a biological network \cite{Ideker2002, Dittrich2008}.
However, the MLE $\mle{\mathcal{C}_G}$ for the connected family $\mathcal{C}_G$ does not have any theoretical guarantees, 
unlike the previously mentioned results for the interval and submatrix families.
In fact, \cite{Nikolayeva2018} and \cite{NetMix} empirically observed that the size $|\mle{\mathcal{C}_G}|$ of the MLE $\mle{\mathcal{C}_G}$ is a biased estimate of the size $|A|$ of the anomaly, in the sense of the following definition.

\begin{defn}
Given data $\mathbf{X}=(X_1, \dots, X_n)$,
let $\hat \theta = \hat \theta(\mathbf{X})$ be an estimator of a parameter $\theta$ of the distribution of $\mathbf{X}$.
The quantity $\text{Bias}_{\theta}(\hat \theta) = E[\hat \theta] - \theta$ is the \emph{bias} of the estimator $\hat \theta$.
We say that $\hat \theta$ is a \emph{biased} estimator of $\theta$ if $\text{Bias}_{\theta}(\hat \theta) \neq 0$, and that $\hat \theta$ is an \emph{unbiased} estimator of $\theta$ otherwise.
When it is clear from context, we omit the subscript $\theta$ and write $\text{Bias}(\hat \theta)$ for the bias of estimator $\hat\theta$.
\end{defn}

\cite{NetMix} also empirically observed a similar bias for the MLE $\mle{\mathcal{P}_n}$  for the unstructured family $\mathcal{S} = \mathcal{P}_n$.

We note that while many papers in statistics study the bias of MLEs for different distributions (e.g. \cite{Firth1993,Mardia1999,Giles2013}), to our knowledge, the bias of the MLE $\mleS$ for the ASD has previously only been studied in \cite{NetMix}.

\section{Relating MLE Bias to Size of the Anomaly Family}
\label{sec:mle_bias}


The observations in the previous section lead to the following question: 
for which anomaly families $\mathcal{S}$ is the size $|\mleS|$ of the MLE $\mleS$ a biased estimate of the size $|A|$ of the anomaly $A$?
In this section, we provide theoretical and experimental evidence that the key quantity that determines the bias of the MLE $\mleS$ is the quantity $\containA = \{S \in \mathcal{S} : S \supseteq A\}$, or the collection of sets in the anomaly family $\mathcal{S}$
that contain the anomaly $A$.

%
%

First, we show that if the number $|\containA|$ of sets containing the anomaly $A$ is sub-exponential in $n$, then the size $|\mleS|$ of the MLE $\mleS$ is asymptotically unbiased.

\begin{theorem}
\label{thm:subexp}
Let $\mathbf{X} = (X_1, \dots, X_n) \sim \probdist_{\mathcal{S}}(A, \mu)$ where $
|\mathcal{S}| = \Omega(n)$ and $|A| = \alpha n$ with $0 < \alpha < 0.5$.
Suppose $|\containA|$ is sub-exponential in $n$. If $\displaystyle\lim_{n\to\infty} P(A \subseteq \mleS) = 1$,
then $\displaystyle\lim_{n\to\infty} \textnormal{Bias}(|\mleS| / n) = 0$.
\end{theorem}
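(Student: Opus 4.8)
The plan is to reduce the bias claim to a statement about convergence in probability and then exploit that $|\mleS|/n$ is bounded in $[0,1]$. Since $\mathrm{Bias}(|\mleS|/n) = E[|\mleS|/n] - |A|/n = E[|\mleS|/n] - \alpha$, it suffices to show $|\mleS|/n \to \alpha$ in probability: bounded convergence then gives $E[|\mleS|/n]\to\alpha$, hence $\mathrm{Bias}(|\mleS|/n)\to 0$. The lower tail is free from the hypothesis: on the event $\{A\subseteq\mleS\}$, whose probability tends to $1$, we have $|\mleS|\ge |A| = \alpha n$, so $P(|\mleS|/n < \alpha)\le P(A\not\subseteq\mleS)\to 0$. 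Thus all the work is in the upper tail: for every $\epsilon>0$, show $P(|\mleS| > (\alpha+\epsilon)n)\to 0$.

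To control the upper tail I would condition on $\{A\subseteq\mleS\}$, where the maximizer lies in $\containA$ and can be written $\mleS = A\cup B$ with $B\subseteq A^{c}$. Decompose the scan statistic as $\Gamma(A\cup B) = (a + b_B)/\sqrt{\alpha n + |B|}$, where $a \colonequals \sum_{v\in A}X_v$ and $b_B \colonequals \sum_{v\in B}X_v$. The signal term $a\sim N(\alpha n\mu,\alpha n)$ is a single Gaussian sum, so $a = \alpha n\mu + O_p(\sqrt n)$ and $\Gamma(A) = a/\sqrt{\alpha n} = \mu\sqrt{\alpha n} + O_p(1)$. The noise terms are mean-zero Gaussians $b_B\sim N(0,|B|)$ with $|B|\le n$, and the only admissible $B$ are those with $A\cup B\in\containA$.

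The crux — and the step I expect to be the main obstacle — is a uniform bound on the noise, $M \colonequals \max_{S\in\containA}\sum_{v\in S\setminus A}X_v = o_p(n)$, which is exactly where the sub-exponential hypothesis enters. For fixed $B$ the Gaussian tail gives $P(b_B > t)\le \exp(-t^2/2|B|)\le\exp(-t^2/2n)$, and a union bound over the $|\containA| = e^{o(n)}$ admissible sets yields $P(M > cn)\le \exp(o(n) - c^2 n/2)\to 0$ for every constant $c>0$, so $M/n\to 0$ in probability. The delicate point is that $|B|$ may be as large as $(1-\alpha)n$, giving per-set variance $\Theta(n)$; one must verify that multiplying the $e^{o(n)}$ cardinality by $e^{-\Theta(n)}$ still wins, which holds precisely because $\log|\containA| = o(n)$ is dominated by the $\Theta(n)$ exponent at scale $t=\Theta(n)$.

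Finally I would combine the estimates. On $\{A\subseteq\mleS\}$ with $|\mleS| > (\alpha+\epsilon)n$, write $\mleS = A\cup B$ with $|B| > \epsilon n$; since $A\in\containA$, optimality forces $\Gamma(A\cup B)\ge\Gamma(A)$. Using $b_B\le M$ and lower-bounding the (positive) denominator via $|B|>\epsilon n$,
\[
\Gamma(A\cup B) \le \frac{a + M}{\sqrt{\alpha n + |B|}} \le \frac{\alpha n\mu + o_p(n)}{\sqrt{(\alpha+\epsilon)n}} = \sqrt{n}\,\frac{\alpha\mu + o_p(1)}{\sqrt{\alpha+\epsilon}},
\]
whereas $\Gamma(A) = \sqrt{n}\,(\mu\sqrt{\alpha} + o_p(1))$. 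Since $\alpha\mu/\sqrt{\alpha+\epsilon} < \alpha\mu/\sqrt{\alpha} = \mu\sqrt{\alpha}$ strictly, the two sides are separated by a gap of order $\sqrt n$, which dominates the $o_p(\sqrt n)$ fluctuations, so $\Gamma(A\cup B) < \Gamma(A)$ with probability tending to $1$ — contradicting optimality. Hence $P(|\mleS| > (\alpha+\epsilon)n)\to 0$, which together with the lower tail and $P(A\not\subseteq\mleS)\to 0$ gives $|\mleS|/n\to\alpha$ in probability, and the bias conclusion follows by bounded convergence.
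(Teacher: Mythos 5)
Your proposal is correct, and its engine is the same as the paper's: on the event $\{A \subseteq \mleS\}$ you compare $\Gamma(\mleS) \geq \Gamma(A)$, split $\Gamma(\mleS)$ into the signal $X_A$ and the noise $X_{\mleS\setminus A}$, control the noise uniformly over $\containA$ by a union bound (exactly where sub-exponentiality of $|\containA|$ enters in both arguments), and use Gaussian concentration for the signal. The differences are in the execution, and they are genuine. First, you bound the noise at the crude scale $cn$, so the union bound $e^{o(n)}\cdot e^{-c^2 n/2}\to 0$ goes through for admissible sets of any size; the paper instead proves the sharper bound $\max X_{B\setminus A} \leq \sqrt{2n\log|\containA|}$ (Lemma \ref{lem:gaussian_lem_size}), whose proof only works for sets of size at most $n/2$, and therefore needs the auxiliary Lemma \ref{lem:mle_size_n_2} showing $|\mleS| \leq n/2$ with high probability. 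Your route dispenses with that lemma entirely. Second, you convert the high-probability statement into a bias statement by proving $|\mleS|/n \to \alpha$ in probability and invoking bounded convergence, where the paper uses its ad hoc Lemma \ref{lem:whp_exp_bound}; these are equivalent in force, and yours is the more standard argument. What you lose is quantitative content: the paper's detour through Lemmas \ref{lem:main_lem} and \ref{lem:paper_lem} produces the explicit conclusion that $\textnormal{Bias}(|\mleS|/n)\geq\gamma$ forces $|\containA| \geq (C_{\mu,\gamma,\alpha})^n\, e^{-\Theta(\sqrt{n\log n})}$, which is the Lemma advertised in the main text as a result of independent interest; your fixed-$\epsilon$ contradiction yields no such rate. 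Two small points to tidy in a final write-up: passing from $\bigl(a+M\bigr)/\sqrt{\alpha n + |B|}$ to $\bigl(a+M\bigr)/\sqrt{(\alpha+\epsilon)n}$ requires the numerator to be nonnegative, which holds on your high-probability events since $a = \alpha n\mu + O(\sqrt{n\log n}) > 0$ and $M \geq 0$ (the set $A$ itself contributes an empty noise sum); and, exactly like the paper, your $\sqrt{n}$-scale gap argument implicitly treats $\mu$ as a constant bounded away from $0$.
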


A key component of our proof of Theorem \ref{thm:subexp} is the following Lemma relating the $\text{Bias}(|\mleS| / n)$ of the MLE $\mleS$ to the number $|\containA|$ of sets containing the anomaly $A$.

\begin{namedproblem}[\textbf{Lemma}]
\label{lem:main_lem}
Let $\mathbf{X} \sim \probdist_{\mathcal{S}}(A, \mu)$ where $|\mathcal{S}| = \Omega(n)$ and $|A| = \alpha n$ with $0 < \alpha < 0.5$. Assume $\displaystyle\lim_{n\to\infty} P(A \subseteq \mleS) = 1$. If $n$ is sufficiently large and $\text{Bias}(|\mleS| / n) \geq \gamma$, then 
\begin{equation}
|\containA| \geq (C_{\mu, \gamma, \alpha})^n \cdot e^{-\Theta(\sqrt{n \log n})},
\end{equation}
where
$C_{\mu, \alpha, \gamma} = \exp\left( \frac{1}{2} \mu^2 \alpha^2\left( \sqrt{1+\frac{\gamma}{4\alpha}} - 1 \right)^2   \right).$
\end{namedproblem}

We make two mild assumptions in Theorem \ref{thm:subexp}. First, we assume the proportion $\alpha = \frac{|A|}{n}$ of anomalous observations is a positive constant independent of $n$.
Second, we assume that the anomaly family $\mathcal{S}$ has size $|\mathcal{S}| = \Omega(n)$; this assumption is satisfied by many commonly-used anomaly families including the interval family $\mathcal{S} = \mathcal{I}_n$, the submatrix family $\mathcal{S} = \mathcal{M}_N$, the connected family $\mathcal{S} = \mathcal{C}_G$ for any graph $G$,
and the unstructured family $\mathcal{S}= \mathcal{P}_n$.


We also require that $\lim_{n\to\infty} P(A \subseteq \mleS) = 1$, which is a technical condition needed for the proof of Theorem \ref{thm:subexp}. We conjecture that this condition can be replaced by the condition $\mu \geq \mu_{\text{detect}}$. This conjecture is based on the empirical observation that if $\mu \geq \mu_{\text{detect}}$, then the MLE $\mleS$ contains most of the elements in $A$ (Figure \ref{fig:justify_conj}), suggesting that the condition $\mu \geq \mu_{\text{detect}}$ is only a slightly weaker condition than $\lim_{n\to\infty} P(A \subseteq \mleS) = 1$.


Theorem \ref{thm:subexp} generalizes earlier results showing that the MLEs $\mle{\mathcal{I}_n}$ and $\mle{\mathcal{M}_N}$ for the interval family $\mathcal{I}_n$ and submatrix family $\mathcal{M}_N$ are asymptotically unbiased \cite{Jeng2010,Liu2019} as these families satisfy the conditions of Theorem \ref{thm:subexp}.
Moreover, Theorem \ref{thm:subexp} implies that the regularization of the MLE used in \cite{Liu2019} is not necessary to prove asymptotic unbiasedness (see Appendix \ref{sec:append:submat_regularized}).

Informally, Theorem \ref{thm:subexp} says that if the number $|\containA|$ of subsets that contain the anomaly $A$ is sub-exponential in $n$, then the MLE $\mleS$ is an asymptotically unbiased estimator of the size $|A|$ of the anomaly $A$.
Next, we prove that for the unstructured family $\mathcal{P}_n$, where $|\containA|$ is exponential in $n$, 
the MLE $\mle{\mathcal{P}_n}$is asymptotically biased for all $\mu$. 
This result settles a conjecture posed by \cite{NetMix}.

\begin{figure*}[t]
    \centering
\includegraphics[scale=0.43]{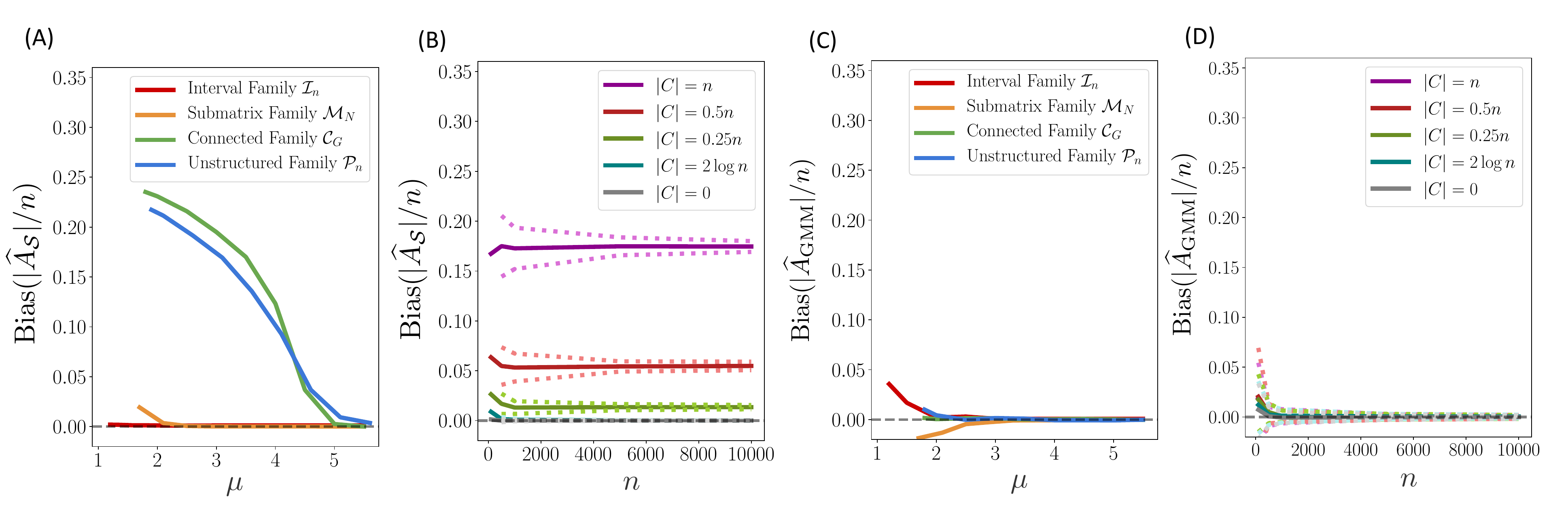}
\caption{Bias in estimators of the size $|A|$ of the anomaly $A$ computed from $n=900$ samples $\mathbf{X} = (X_1, \dots, X_n) \sim \probdist_{\mathcal{S}}(A, \mu)$ from the ASD for different choices of anomaly family $\mathcal{S}$. In each sample, the anomaly $A$ of size $|A| = 0.05n$ is chosen uniformly at random from $\mathcal{S}$.
\textbf{(A)} $\text{Bias}(|\mleS|/n)$ of the MLE vs $\mu$ for means $\mu \geq \mu_{\text{detect}}$.   For small $\mu$, the MLE shows positive bias for the connected family $\mathcal{C}_G$ and unstructured family $\mathcal{P}_n$, consistent with Conjecture \ref{conj:central_claim} and Theorem \ref{thm:unconstr}.
\textbf{(B)} $\text{Bias}(|\mleS|/n)$ vs $n$ for $\mu = 3$ for the connected family $\mathcal{S} = \mathcal{C}_G$, where $G = (V, E)$ is a graph whose vertices $V = P \cup C$ are partitioned into a path graph $P$ and a clique $C$. Dotted lines indicate first and third quartiles in the estimate of the bias. The positive bias for $|C| = \Theta(n)$ does not decrease as $n$ increases, consistent with Conjecture \ref{conj:central_claim}.
\textbf{(C)} $\text{Bias}(|\widehat{A}_{\tGMM}|/n)$ of GMM-based estimator vs $\mu$ for  means $\mu \geq \mu_{\text{detect}}$.  In contrast to (A), the bias is zero for all anomaly families and sufficiently large mean $\mu$, consistent with Corollary \ref{cor:gmm1}. \textbf{(D)} $\text{Bias}(|\widehat{A}_{\tGMM}|/n)$ vs $n$ for $\mu = 3$ and the same connected anomaly family as in (B).  The GMM-based estimator appears to be unbiased for sufficiently large $n$, consistent with Corollary \ref{cor:gmm1}.
}
    \label{fig:mle}
\end{figure*}

 \begin{theorem}
\label{thm:unconstr}
Let $\mathbf{X} \sim \probdist_{\mathcal{P}_n}(A, \mu)$ where $|A| = \alpha n$ with $0 < \alpha < 0.5$.
Then $\lim_{n\to\infty} \textnormal{Bias}(|\mle{\mathcal{P}_n}|/n) > 0$. 
\end{theorem}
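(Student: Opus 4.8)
The plan is to reduce the theorem to a one-dimensional variational problem about the mixture distribution of the data. For the unstructured family $\mathcal{P}_n$ every subset is admissible, so by a rearrangement argument the maximizer of $\Gamma$ among sets of a fixed size is the set of the largest observations; writing $X_{(1)} \ge \cdots \ge X_{(n)}$ for the order statistics and $g_n(k) = \frac{1}{\sqrt{k}}\sum_{i=1}^{k} X_{(i)}$, the MLE is a prefix and $|\mle{\mathcal{P}_n}| = \widehat k$ with $\widehat k = \argmax_{1\le k\le n} g_n(k)$. The observations are i.i.d.\ from the mixture $F = (1-\alpha)\Phi + \alpha\,\Phi(\,\cdot - \mu)$, so by a Glivenko--Cantelli / trimmed-sum (L-statistic) argument the empirical partial sum $\frac1n\sum_{i\le tn} X_{(i)}$ converges uniformly in $t$ to $m(t) := \mathbb E[X\,\mathbb 1(X\ge q(t))]$, where $q(t) := F^{-1}(1-t)$ is the upper $t$-quantile. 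Hence $n^{-1/2} g_n(\lfloor tn\rfloor) \to \phi(t) := m(t)/\sqrt t$ uniformly on compacts of $(0,1)$, and the argmax converges, $\widehat k/n \to t^\star := \argmax_{t\in(0,1)} \phi(t)$ in probability. Since $\widehat k/n\in[0,1]$ is bounded, the expectation converges as well, giving $\textnormal{Bias}(|\mle{\mathcal{P}_n}|/n) \to t^\star - \alpha$. Thus it suffices to prove $t^\star > \alpha$.

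To locate $t^\star$ I would use the clean identity $m'(t) = q(t)$, obtained by differentiating $m(t) = \int_{q(t)}^\infty x\,dF$ and using $F'(q(t))\,q'(t) = -1$. Then $\phi'(t) = t^{-3/2}\bigl(t\,q(t) - \tfrac12 m(t)\bigr)$, so $\operatorname{sgn}\phi'(t) = \operatorname{sgn}\bigl(2t\,q(t) - m(t)\bigr)$ and the optimum obeys the first-order condition $2t^\star q(t^\star) = m(t^\star)$, i.e.\ the cutoff equals half the conditional mean, $q(t^\star) = \tfrac12\, \mathbb E[X \mid X \ge q(t^\star)]$. Granting that $\phi$ is increasing on $(0,\alpha]$ (equivalently, $\phi$ is unimodal with peak not below $\alpha$), positivity of the bias reduces to the single inequality $\phi'(\alpha) > 0$, namely
\begin{equation}
m(\alpha) < 2\alpha\,q(\alpha), \qquad\text{equivalently}\qquad \mathbb E[X \mid X \ge q(\alpha)] < 2\,q(\alpha).
\end{equation}
In words: the mean of the top $\alpha$-fraction lies below twice the cutoff, i.e.\ the mean excess over the cutoff is smaller than the cutoff itself (note $q(\alpha) > 0$ since $\alpha < \tfrac12$ and $\mu > 0$).

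Verifying this inequality is where the real work lies. I would expand both sides using $\int_q^\infty x\,\varphi(x)\,dx = \varphi(q)$ together with the mass-balance relation defining the quantile, $\alpha\,\Phi(q-\mu) = (1-\alpha)\,\bar\Phi(q)$ (excluded anomaly mass equals included null mass), reducing everything to a one-variable Gaussian inequality in $q = q(\alpha)$ involving $\varphi$, $\Phi$, and the hazard $h(q) = \varphi(q)/\bar\Phi(q)$. The main obstacle is that this inequality is delicate rather than crude: as $\mu \to \infty$ one has $q(\alpha) \to \mu/2$ and $\mathbb E[X\mid X\ge q(\alpha)] \to \mu$, so the leading $\Theta(\mu)$ terms cancel, and the inequality survives only through a comparison of the polynomially small quantile correction (of order $\mu^{-1}\log\frac{1-\alpha}{\alpha}$) against exponentially small Gaussian tail masses. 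Controlling this cancellation uniformly is the crux, and I expect to need sharp two-sided Mills-ratio bounds with careful sign tracking. I would also have to discharge the unimodality assumption, e.g.\ by showing $2t\,q(t) - m(t)$ changes sign at most once on $(0,1)$. Finally, I would flag the conceptual mechanism: even pure null data ($\mu=0$) has $\argmax_t \phi(t)$ strictly positive, solving $h(z) = 2z$, so the unrestricted scan always prefers to include a positive fraction of the observations; this inflation of $|\mle{\mathcal{P}_n}|$ is the source of the bias, and it is exactly what makes the inequality tight and the constant-$\alpha$ regime the genuinely subtle case to control.
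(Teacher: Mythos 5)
Your first half is, modulo a change of variables, the paper's own proof: the paper writes $\mle{\mathcal{P}_n} = \{v : X_v > \mleT\}$, replaces the scan objective by its deterministic limit via the law of large numbers, sets $T^* = \lim_{n\to\infty}\mleT$, and obtains $\lim_{n\to\infty}\textnormal{Bias}(|\mle{\mathcal{P}_n}|/n) = \alpha\bar\Phi(T^*-\mu)+(1-\alpha)\bar\Phi(T^*)-\alpha$. Your parametrization by the included fraction $t$ is related to theirs by $T=q(t)$, and your limit $t^*-\alpha$ is exactly the same quantity, so there is no methodological difference up to that point. The difference is in how positivity of this limit is handled, and that is where your proposal has a genuine gap: you reduce positivity to the single inequality $m(\alpha) < 2\alpha\,q(\alpha)$ together with unimodality of $\phi$, and you prove neither, so what you have is a plan rather than a proof.

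The deeper problem is that the inequality you propose to verify is \emph{false} on part of the parameter range covered by the statement, so no amount of Mills-ratio sharpening can finish the argument. Your own closing observation supplies the counterexample: for pure noise the optimal included fraction is $t_0 = \bar\Phi(T_0)\approx 0.27$, where $T_0\approx 0.61$ solves your equation $h(z)=2z$. Fix $\alpha\in(t_0,0.5)$, say $\alpha=0.4$, and let $\mu\downarrow 0$. Then $F\to\Phi$ and $\phi\to\phi_0$ uniformly on compact subsets of $(0,1]$, so $t^*(\mu)\to t_0<\alpha$ and the limiting bias $t^*(\mu)-\alpha$ is \emph{negative}; concretely, at $\mu\approx 0$ one has $\mathbb{E}[X\mid X\ge q(\alpha)]\approx h(\Phi^{-1}(1-\alpha))\approx 0.97$ while $2q(\alpha)\approx 0.51$, so $\phi'(\alpha)<0$. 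Positivity genuinely requires $\mu$ large relative to $\alpha$ — your large-$\mu$ expansion, where the margin is $\approx 2\alpha\mu^{-1}\log\frac{1-\alpha}{\alpha}>0$, is the regime in which the claim is true. You should also know that the paper does not fill this hole either: after deriving the same limiting expression, it simply asserts that it ``is always positive for $\alpha<0.5$'' with no supporting argument, and your analysis shows that assertion cannot be justified as stated. So your attempt stalls exactly where the paper's proof is unsupported; a correct completion of either argument needs an additional hypothesis (such as $\mu\ge\mu_{\text{detect}}$, used throughout the rest of the paper), under which your first-order condition does hold, rather than finer Gaussian estimates.
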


We prove Theorem \ref{thm:unconstr} by deriving an explicit formula for the asymptotic bias $\lim_{n\to\infty} \text{Bias}(|\mle{\mathcal{P}_n}| / n)$ of the MLE.


We conjecture that Theorems \ref{thm:subexp} and \ref{thm:unconstr}  describe the only two possible values for the asymptotic bias of the MLE $\mleS$, and furthermore that the technical conditions in Theorem~\ref{thm:subexp} can be relaxed to the simpler condition $\mu \geq \mu_{\text{detect}}$. Thus we conjecture that MLE $\mleS$ is asymptotically biased if and only if $|\containA|$ is exponential in $n$.

\begin{conjecture}
\label{conj:central_claim}
Let $\mathbf{X} \sim \probdist_{\mathcal{S}}(A, \mu)$ with $\mu \geq \mu_{\textnormal{detect}}$.  Then $\lim_{n\to\infty} \textnormal{Bias}(|\mleS| / n) > 0$ if $|\containA|$ is exponential in $n$, and $\lim_{n\to\infty} \textnormal{Bias}(|\mleS| / n) = 0$ otherwise.
\end{conjecture}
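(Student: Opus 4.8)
The plan is to prove the two implications of Conjecture~\ref{conj:central_claim} separately, since one direction is already essentially Theorem~\ref{thm:subexp} while the other is a converse of Lemma~\ref{lem:main_lem} that generalizes Theorem~\ref{thm:unconstr} from the unstructured family $\mathcal{P}_n$ to an arbitrary family $\mathcal{S}$. For the \emph{unbiased direction}, when $|\containA|$ is sub-exponential Theorem~\ref{thm:subexp} already yields $\lim \textnormal{Bias}(|\mleS|/n)=0$, except that it assumes $\lim_{n\to\infty}P(A \subseteq \mleS)=1$ rather than $\mu \geq \mu_{\textnormal{detect}}$. So the only new task is to discharge this hypothesis from $\mu \geq \mu_{\textnormal{detect}}$. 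I would try to show that for $\mu \geq \mu_{\textnormal{detect}}$ the MLE contains $A$ with high probability, or, more realistically, weaken Lemma~\ref{lem:main_lem} and Theorem~\ref{thm:subexp} so that they only require $\mleS$ to contain a $(1-o(1))$-fraction of $A$, matching the empirical behaviour reported above. Concretely, each $v \in A$ has $X_v \approx \mu$, and under $\mu \geq \mu_{\textnormal{detect}}$ the scan statistic is dominated by $\Gamma(A)$, so discarding anomalous coordinates can only decrease $\Gamma$ on average; formalizing that the maximizer therefore retains almost all of $A$ is the content of this step.

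For the \emph{biased direction}, when $|\containA| = 2^{\Theta(n)}$ I would condition on $A \subseteq \mleS$ (which holds with high probability once $\mu \geq \mu_{\textnormal{detect}}$) and restrict the $\argmax$ to supersets of $A$. Since $X_v \sim N(\mu,1)$ on $A$, we have $\Gamma(A) \approx \mu\sqrt{\alpha n}$, and adjoining a set $B$ of $k$ elements with value-sum $s$ increases $\Gamma$ exactly when $s > \mu\alpha n\big(\sqrt{1 + k/(\alpha n)} - 1\big)$, i.e. (for $k = \beta n$) when the added elements have average value above $\approx \mu/2$. The non-anomalous coordinates $X_v \sim N(0,1)$ place a constant fraction $\bar\Phi(\mu/2)$ of elements above this threshold, so if we could freely adjoin an arbitrary subset of $\Theta(n)$ addable elements we would add $\Theta(n)$ of them, forcing $|\mleS| \geq |A| + \Theta(n)$ exactly as in the $\mathcal{P}_n$ computation behind Theorem~\ref{thm:unconstr} and the positive bias seen empirically for the connected and unstructured families (Figure~\ref{fig:mle}A,B). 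To rule out competition from smaller sets, note that the best superset of size $|A| + o(n)$ gains only $o(\sqrt n)$ in its numerator, so its $\Gamma$ stays within $o(\sqrt n)$ of $\Gamma(A)$, whereas the large superset beats $\Gamma(A)$ by $\Omega(\sqrt n)$; hence the maximizer is large and the bias is bounded away from $0$.

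The \emph{main obstacle} is converting the purely cardinality assumption $|\containA| = 2^{\Theta(n)}$ into genuine freedom to adjoin an arbitrary $\Theta(n)$-subset of addable elements, since the supersets in $\containA$ may be highly correlated. A Sauer--Shelah argument looks promising: a family of size $2^{\Theta(n)}$ over a ground set of $O(n)$ addable elements has VC dimension $\Theta(n)$ and therefore shatters a set $T$ of addable elements with $|T| = \Theta(n)$, on which every pattern is realizable, so in particular the high-valued subset of $T$ can be added. The delicate point is that shattering controls only the projection of a witnessing set onto $T$; the elements a witness is forced to include outside $T$ are uncontrolled and could depress $\Gamma$, so marrying projection-freedom on $T$ with control of its complement is where the real work lies. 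I expect this combinatorial extreme-value step to be the principal hurdle, as it requires machinery genuinely beyond the explicit order-statistic calculation that suffices for $\mathcal{P}_n$; by comparison, the unbiased direction is ``only'' the removal of a technical assumption the authors already analyze.
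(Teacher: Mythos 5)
First, a point of comparison that matters here: the paper has \emph{no proof} of this statement. It is stated as Conjecture~\ref{conj:central_claim}, the Conclusion explicitly lists proving it as an open direction, and the paper's actual results are only the two special cases you cite --- Theorem~\ref{thm:subexp} (sub-exponential $|\containA|$, under the extra hypothesis $\lim_{n\to\infty}P(A\subseteq\mleS)=1$) and Theorem~\ref{thm:unconstr} (the unstructured family $\mathcal{P}_n$) --- plus experiments. So you are not being measured against a hidden argument; the only question is whether your plan closes the gap between those theorems and the conjecture, and it does not. Both halves of your plan bottom out in exactly the steps that are open, as you partly acknowledge. For the unbiased direction, replacing $\lim_{n\to\infty}P(A\subseteq\mleS)=1$ by $\mu\geq\mu_{\text{detect}}$ is precisely what the authors themselves only conjecture (Figure~\ref{fig:justify_conj} is offered as evidence, not proof). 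Your heuristic that ``discarding anomalous coordinates can only decrease $\Gamma$ on average'' does not engage the real difficulty: the maximizer can trade elements of $A$ for non-elements, and the paper's Lemma~\ref{lem:main_lem} uses \emph{full} containment both to write $X_{\mleS}=X_A+X_{\mleS\setminus A}$ and to apply Lemma~\ref{lem:gaussian_lem_size} to the family $\containA$; a $(1-o(1))$-containment version forces you to re-prove both while also controlling $X_{A\setminus\mleS}$, which is new work your proposal does not do.

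For the biased direction, the obstacle you flag --- shattering controls only traces on $T$, not what a witness contains outside $T$ --- is not a delicate point to be finessed; it is fatal to any argument that uses only the cardinality of $\containA$, because cardinality (even with linear VC dimension) does not imply bias. Consider $\mathcal{S}=\{A\}\cup\bigl\{A\cup B : B\subseteq[n]\setminus A,\ |B|=(1-\alpha)n/2\bigr\}$. Then $|\containA|=\binom{(1-\alpha)n}{(1-\alpha)n/2}=2^{\Theta(n)}$ and the trace family on $[n]\setminus A$ shatters a set of size $\Theta(n)$, yet every competitor to $A$ is forced to adjoin exactly $(1-\alpha)n/2$ elements, whose best achievable average value is $E[X\mid X>0]=\sqrt{2/\pi}$. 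A direct computation gives $\max_B\Gamma(A\cup B)\approx\bigl(\mu\alpha+(1-\alpha)/\sqrt{2\pi}\bigr)\sqrt{n}\big/\sqrt{(1+\alpha)/2}$, which is smaller than $\Gamma(A)\approx\mu\sqrt{\alpha n}$ by $\Theta(\sqrt{n})$ whenever $\mu$ exceeds a constant $\mu_0(\alpha)$, while $\mu_{\text{detect}}=o(1)$ for this family (the GLR statistic separates by $\Theta(\mu\sqrt{n})$ against $O(1)$ fluctuations). So for any fixed $\mu\geq\mu_0(\alpha)$ the MLE equals $A$ with high probability and $\lim_{n\to\infty}\text{Bias}(|\mleS|/n)=0$, even though $|\containA|$ is exponential. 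This shows the Sauer--Shelah route cannot deliver the biased direction, and indeed that the conjecture as literally stated needs additional structural hypotheses on $\mathcal{S}$ (for instance, that supersets of $A$ can be grown one element at a time, as holds for $\mathcal{P}_n$ and $\mathcal{C}_G$) or a restriction of $\mu$ to a window near $\mu_{\text{detect}}$. Any genuine proof of this direction must use such structure, not just $|\containA|=2^{\Theta(n)}$.
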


Conjecture \ref{conj:central_claim} generalizes Theorems \ref{thm:subexp} and \ref{thm:unconstr}, and is consistent with the prior work noted in Section \ref{subsec:mle} on the bias of the MLE $\mleS$ for different anomaly families $\mathcal{S}$:
\begin{itemize}
\item  $\mathcal{S} = \mathcal{I}_n$, the interval family: $|\containA| \leq |\mathcal{S}| = O(n^2)$ is sub-exponential, so $|\mle{\mathcal{I}_n}|$ is asymptotically unbiased \cite{Jeng2010}.

\item $\mathcal{S} = \mathcal{M}_N$, the submatrix family: $|\containA| \leq |\mathcal{S}| = O(2^{2\sqrt{n}})$ is sub-exponential, so $|\mle{\mathcal{M}_N}|$ is asymptotically unbiased \cite{Liu2019}.

\item $\mathcal{S} = \mathcal{C}_G$, the connected family: When $G$ has minimum degree $3$, $|\containA|$ is exponential \cite{Vince2017}, so $|\mle{\mathcal{C}_G}|$ is asymptotically biased \cite{NetMix}.

\item $\mathcal{S} = \mathcal{P}_n$, the unstructured family: When $|A| < 0.5n$, $|\containA| = 2^{n\left(1-\frac{|A|}{n}\right)} = \Omega\left( 2^{0.5n} \right)$ is exponential, so $|\mle{\mathcal{P}_n}|$ is asymptotically biased.
\end{itemize}



\subsection{Experimental Evidence for Conjecture \ref{conj:central_claim}}
\label{subsec:exp_evidence}

We provide empirical evidence for Conjecture \ref{conj:central_claim} by examining the bias of the MLE for several different anomaly families.  For each anomaly family $\mathcal{S}$, we select an anomaly $A \in \mathcal{S}$ with size $|A| = 0.05n$ uniformly at random from $\mathcal{S}$. We  draw a sample $\mathbf{X} = (X_1, \dots, X_n) \sim \probdist_{\mathcal{S}}(A, \mu)$ with $n=900$ observations, and compute the MLE $\mleS$.
We repeat for $50$ samples to estimate $\text{Bias}(|\mleS|/n)$. We perform this process for a range of means $\mu \geq \mu_{\text{detect}}$ (see Appendix \ref{sec:append:mudetect} for details on empirically calculating $\mu_{\text{detect}}$.)

\begin{wrapfigure}{l}{0.5\textwidth}
    \includegraphics[width=\linewidth]{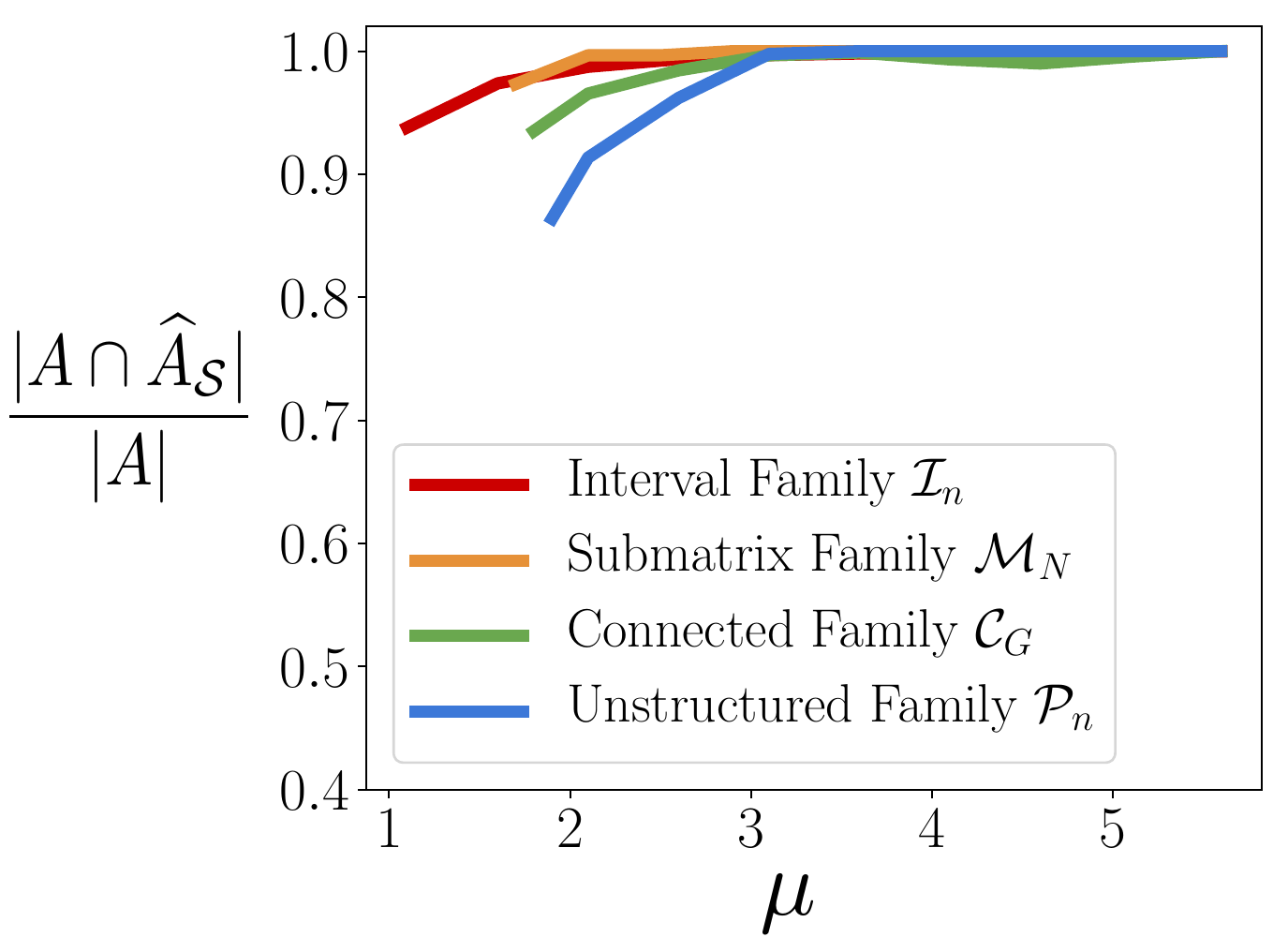}
  \caption{Normalized intersection $\frac{|A \cap \mleS|}{|A|}$ between anomaly $A$ and MLE $\mleS$ for means $\mu \geq \mu_{\text{detect}}$ for different anomaly families $\mathcal{S}$.  Data generated as in Figure \ref{fig:mle}.
For $\mu \geq \mu_{\text{detect}}$ and different anomaly families $\mathcal{S}$, the normalized intersection $\frac{|A \cap \mleS|}{|A|} > 0.85$, i.e. the MLE $\mleS$ contains at least $85\%$ of the elements in the anomaly $A$. This suggests that the condition $\displaystyle\lim_{n\to\infty} P(A\subseteq \mleS)=1$ in Theorem \ref{thm:subexp} is not much stronger than the condition $\mu\geq \mu_{\text{detect}}$.
}
  \label{fig:justify_conj}
\end{wrapfigure}

We compute $\text{Bias}(|\mleS|/n)$ for the following anomaly families: $\mathcal{S} = \mathcal{I}_n$, the interval family; $\mathcal{S} = \mathcal{M}_N$, the submatrix family with matrix $N \in \mathbb{R}^{30\times 30}$; $\mathcal{S} = \mathcal{C}_G$, the connected family with an \erdosrenyi{} random graph $G$ (edge probability $= 0.01$); and $\mathcal{S}=\mathcal{P}_n$, the unstructured family.
$|\containA|$ is sub-exponential for the interval family $\mathcal{I}_n$ and the submatrix family $\mathcal{M}_N$, and is exponential for the connected family $\mathcal{C}_G$ (with high probability \cite{Vince2017}) and for the unstructured family $\mathcal{P}_n$.

For the interval family $\mathcal{I}_n$ and submatrix family $\mathcal{M}_N$, where $|\containA|$ is sub-exponential, we find that $\text{Bias}(|\mleS| /n) \approx 0$ for all means $\mu \geq \mu_{\text{detect}}$ (Figure \ref{fig:mle}A). In contrast, for the connected family $\mathcal{C}_G$ and unstructured family $\mathcal{P}_n$, where $|\containA|$ is exponential, we observe that $\text{Bias}(|\mleS| /n) > 0$ for $\mu \in [\mu_{\text{detect}},5]$ (Figure \ref{fig:mle}A).  (Because $n$ is fixed, the $\text{Bias}(|\mleS| /n)$ will be zero for sufficiently large $\mu$.)
These observations provide evidence in support of Conjecture \ref{conj:central_claim} for these families.  
%
Moreover, although Conjecture \ref{conj:central_claim} is about the $\text{Bias}(|\mleS|/n)$ of the MLE $\mleS$, we also observe that larger $\text{Bias}(|\mleS|/n)$ reduces the F-measure between the anomaly $A$ and the MLE $\mleS$ (see Appendix \ref{sec:append:exps}).

Next, we examine the $\text{Bias}(|\mleS| /n)$ of the MLE $\mleS$ in the limit $n\to\infty$,  and find that the bias of the MLE $\mleS$ appears to converge to positive values only when $|\containA|$ is exponential.  We specifically examine the connected anomaly family $\mathcal{C}_G$ for the graph $G = (V, E)$ whose vertices $V = P \cup C$ are partitioned into two sets: a path graph $P$ and a clique $C$, with $|P \cap C| = 1$. (When $|P| = |C|$, $G$ is known as the ``lollipop graph" \cite{Zhang2009}.) By varying the sizes $|P|, |C|$ of the path graph $P$ and clique $C$, respectively, we can affect the value of $|\containA|$: $|\containA|$ is exponential if $|C| = \Theta(n)$ and is sub-exponential if $|C| = o(n)$.
We observe that $\lim_{n\to\infty} \text{Bias}(|\mleS| /n) > 0$ if $|C| = \Theta(n)$, and $\lim_{n\to\infty} \text{Bias}(|\mleS| /n) = 0$ if $|C| = o(n)$ (Figure \ref{fig:mle}B), which aligns with Conjecture \ref{conj:central_claim}.

In Appendix \ref{sec:append:exps}, we describe two more experiments that support Conjecture \ref{conj:central_claim}.
In the first experiment, we construct an anomaly family $\mathcal{S}$ where $|\mathcal{S}|$ is exponential, while $|\containA|$ is exponential for some anomalies $A$ and sub-exponential for others.
We observe that the MLE $\mleS$ is biased if and only if $|\containA|$ is exponential, providing evidence for Conjecture \ref{conj:central_claim} by showing that $\text{Bias}(|\mleS| /n)$ depends on the number $|\containA|$ of sets containing the anomaly rather than the size $|\mathcal{S}|$ of the anomaly family.
In the second experiment, we empirically demonstrate that the bias of the MLE $\mle{\mathcal{T}_{G}, \rho}$ for the graph cut family $\mathcal{S} = \mathcal{T}_{G, \rho}$ has a strong dependence on the cut-size bound $\rho$. This aligns with Conjecture \ref{conj:central_claim} since $|\containA|$ is polynomial when $\rho$ is constant in $n$  while $|\containA|$ is exponential when $\rho$ is close to the number $|E|$ of edges in $G$ \cite{Nagamochi1994}.

\section{Reducing Bias using Mixture Models}

In the previous section, we showed that the MLE $\mleS$ yields a biased estimate of the size $|A|$ of the anomaly $A$ when 
the number $|\containA|$ of sets in the anomaly family $\mathcal{S}$ that contain $A$ is exponential in $n$. In this section, we derive an anomaly estimator that is less biased than the MLE.  Our anomaly estimator leverages a connection between the ASD and the Gaussian mixture model (GMM), and is motivated by previous work that uses GMMs to estimate \emph{unstructured} anomalies \cite{Cai2007,Donoho2004}.

Recall the following latent variable representation of the ASD: 
given a sample $\mathbf{X} = (X_1, \dots, X_n) \sim \probdist_{\mathcal{S}}(A, \mu)$ from the ASD, we define a corresponding sequence $\mathbf{Z} = (Z_1, \dots, Z_n)$ of latent variables $Z_i = 1(i \in A)$. Estimating the anomaly $A$ is equivalent to estimating the latent variables $\mathbf{Z}$.  The bias of the MLE $\mleS$ corresponds to \emph{overestimating} the sum $|A| = \sum_{i=1}^n Z_i$ of latent variables.

This latent variable representation of the ASD is reminiscent of the latent variable representation of a Gaussian mixture model (GMM), defined as follows.

\begin{namedproblem}[\textbf{Gaussian Mixture Model (2 components, unit variance)}]
Let $\mu > 0$ and $\alpha \in (0,1)$. $X$ is distributed according to the \emph{Gaussian Mixture Model} $\gmm(\mu, \alpha)$ provided
\begin{equation}
    X \sim \alpha N(\mu, 1) + (1-\alpha) N(0,1).
\end{equation}
Associated with $X$ is a latent variable $Z$, where $Z=1$ if $X$ is drawn from the $N(\mu, 1)$ distribution and $Z=0$ if $X$ is drawn from the $N(0, 1)$ distribution.
\end{namedproblem}

Note that $n$ independent observations $X_i \iid \gmm(\mu, \alpha)$ from the GMM are not equal in distribution to a sample $\mathbf{Y} = (Y_1, \dots, Y_n) \sim \probdist_{\mathcal{S}}(A,\mu)$ from the ASD.
In particular, in the GMM all of the data points $X_i$ are identically distributed, while
in the ASD exactly $|A|$ of the data points $Y_i$ are drawn from the $N(\mu,1)$ distribution.
Nevertheless, we observe that the empirical distributions of the unstructured ASD and the GMM converge in Wasserstein distance as $n \to \infty$ (see Appendix \ref{sec:append:wasserstein}).
In anomaly estimation,  some previous approaches model \emph{unstructured} anomalies with a GMM \cite{Cai2007,Donoho2004}.  However, existing work on estimating \emph{structured} anomalies typically models the data with the ASD \cite{Arias-Castro2011,Sharpnack2013}.

Another difference between the ASD and GMM is that 
one can use maximum likelihood estimation to accurately estimate the sum $\sum_{i=1}^n Z_i$ of latent variables $Z_i$ from GMM observations $X_i \iid \text{GMM}(\mu, \alpha)$ \cite{Bishop2006}, unlike with the ASD.
Specifically, \cite{Kalai2010} showed that the following algorithm gives accurate estimates of the  individual latent variables $Z_i$: \textbf{(1)} estimate the GMM parameters $\mu$ and $\alpha$ and \textbf{(2)} set $Z_i = 1$ if the estimated \emph{responsibility} $r_i = P(Z_i = 1 \mid X_i)$, or probability of $X_i$ being drawn from the $N(\mu, 1)$ distribution, is greater than $0.5$. 

In practice, the parameter estimation in step \textbf{(1)} is often done by computing the MLEs $\widehat{\mu}_{\tGMM}$ and $\widehat{\alpha}_{\tGMM}$ of the GMM parameters $\mu$ and $\alpha$, respectively.  For data $X_i \iid \text{GMM}(\mu, \alpha)$ drawn from a GMM, the MLEs $\widehat{\mu}_{\tGMM}$ and $\widehat{\alpha}_{\tGMM}$
are efficiently computed via the EM algorithm \cite{Daskalakis2017,Xu2016} and are asymptotically unbiased estimators of $\mu$ and $\alpha$, respectively \cite{Chen2017}.


Motivated by the connection between the latent variable representations of the ASD and GMM, we prove an analogous result on the asymptotic unbiasedness of the MLEs $\widehat{\mu}_{\tGMM}$ and $\widehat{\alpha}_{\tGMM}$ for data drawn from the ASD. 
Specifically, we prove that given data $\mathbf{X} \sim \probdist_{\mathcal{S}}(A, \mu)$ with sufficiently large mean $\mu$, then the GMM MLEs $\widehat{\mu}_{\tGMM}$ and $\widehat{\alpha}_{\tGMM}$ obtained by fitting a GMM to data $\mathbf{X}$ are asymptotically unbiased estimators of $\mu$ and $|A|/n$, respectively.  This result settles a conjecture of \cite{NetMix}.

\begin{theorem}
\label{thm:gmm_likelihood}
Let $\mathbf{X} = (X_1, \dots, X_n) \sim \asdS$, where $|A|=\alpha n$ for $0 < \alpha < 0.5$ and $\mu \geq C\sqrt{\log{n}}$ for a sufficiently large constant $C > 0$.  For sufficiently large $n$,  we have that $|\alphaGMM - \alpha| \leq \sqrt{\frac{\log{n}}{n}}$ and $|\muGMM - \mu| \leq 3\sqrt{\frac{\log{n}}{n}}$
with probability at least $ 1-\frac{1}{n}$.
\end{theorem}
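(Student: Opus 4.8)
The plan is to exploit the strong separation $\mu \geq C\sqrt{\log n}$, which forces the two Gaussian components to be cleanly separated so that fitting the GMM reduces, up to exponentially small slack, to correctly classifying each observation and then estimating within-cluster statistics. A crucial structural simplification is that the background component of the fitted GMM is pinned at $N(0,1)$, so only $\mu$ and $\alpha$ are free; hence there is no variance-degeneracy and no label-switching ambiguity, and the MLE is well defined.

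First I would establish a high-probability separation event $E$. Taking the threshold $\tau = \mu/2$ and using the Gaussian tail bound $P(N(0,1) > t) \leq e^{-t^2/2}$, a background observation ($i \notin A$) exceeds $\tau$ with probability at most $e^{-\mu^2/8} \leq n^{-C^2/8}$, and symmetrically an anomalous observation ($i \in A$) falls below $\tau$ with the same probability. A union bound over the $n$ coordinates shows that, for $C$ large enough, with probability at least $1 - O(1/n)$ every anomalous $X_i$ exceeds $\tau$ while every background $X_i$ lies below it. Into $E$ I would also fold the sub-Gaussian concentration event, again of probability $1 - O(1/n)$, that the empirical mean of the $\alpha n$ anomalous observations lies within $O(\sqrt{\log n / (\alpha n)})$ of $\mu$; note the anomalous count is deterministically $|A| = \alpha n$, so the empirical anomalous fraction equals $\alpha$ exactly. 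Combining a constant number of such events and taking $C$ and $n$ large yields the required probability $1 - 1/n$.

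Working on $E$, I would analyze the log-likelihood $\ell(m,a) = \sum_i \log[\,a\,\phi(X_i - m) + (1-a)\phi(X_i)\,]$. Because of the gap, for any candidate with $m$ near $\mu$ the mixture density at an anomalous point is dominated by its $a\,\phi(X_i - m)$ term and at a background point by its $(1-a)\phi(X_i)$ term, so $\ell$ decouples into two independent within-cluster contributions up to slack $e^{-\Theta(\mu^2)}$ per point. Lower-bounding $\ell(\mu,\alpha)$ and upper-bounding $\ell(m,a)$ outside the target neighborhood, I would show that any $(m,a)$ with $|a - \alpha| > \sqrt{\log n / n}$ or $|m - \mu| > 3\sqrt{\log n / n}$ has strictly smaller likelihood than the truth, confining the global maximizer to the stated neighborhood. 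Within that neighborhood I would invoke the stationarity (EM fixed-point) identities $\widehat\alpha = \tfrac1n\sum_i r_i$ and $\widehat\mu = \sum_i r_i X_i / \sum_i r_i$, where on $E$ each responsibility $r_i$ is within $e^{-\Theta(\mu^2)}$ of the hard $0/1$ label. This pins $\widehat\alpha$ to the exact anomalous fraction $\alpha$ and $\widehat\mu$ to the anomalous empirical mean, giving the claimed $O(\sqrt{\log n / n})$ rates (the constants, including the stated $3$, coming from the concentration constants and the $\alpha$-dependence).

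The main obstacle is the global confinement step: $\ell$ is non-concave in $(m,a)$, so I must rule out spurious maximizers over the entire parameter space, for instance configurations where $m$ drifts far from $\mu$ to fit a stray cluster, or where $a$ approaches $0$ or $1$. The separation $\mu \geq C\sqrt{\log n}$ is precisely what renders the between-cluster cross terms exponentially negligible, so the crux is to convert this qualitative separation into a \emph{quantitative, uniform-over-$(m,a)$} upper bound on $\ell$ that strictly beats $\ell(\mu,\alpha)$ everywhere outside the $\sqrt{\log n / n}$ neighborhood, with particular care in the boundary regimes of $m$ and $a$.
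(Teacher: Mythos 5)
Your proposal has the same architecture as the paper's proof: a high-probability concentration/separation event, decoupling of the mixture log-likelihood under $\mu \geq C\sqrt{\log n}$, and a comparison showing every parameter pair outside the target neighborhood has strictly smaller likelihood than $(\alpha,\mu)$. However, the step you yourself flag as ``the main obstacle'' --- the quantitative, uniform-over-$(m,a)$ likelihood comparison --- is precisely the entire technical content of the paper's proof, and your proposal supplies no mechanism for carrying it out. In the moderate regime (mixing weight bounded away from $0$ and $1$ by $1/n^2$, and $|m-\mu| < \mu/10$), the paper shows that after decoupling, the inequality $L_{\alpha,\mu}(\mathbf{X}) > L_{\alpha+\kappa,\mu+\tau}(\mathbf{X})$ reduces to showing
\[
\alpha\log\Bigl(\tfrac{\alpha}{\alpha+\kappa}\Bigr) + (1-\alpha)\log\Bigl(\tfrac{1-\alpha}{1-\alpha-\kappa}\Bigr) + \tfrac{\tau\alpha}{2}\Bigl(\tau - 2\tfrac{\sum_{i\in A}(X_i-\mu)}{\alpha n}\Bigr) \;=\; \Omega(n^{-1}),
\]
where $\kappa = a-\alpha$ and $\tau = m-\mu$. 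The first two terms are a Bernoulli KL divergence, lower-bounded by $2\kappa^2$ via Pinsker's inequality; the third term is handled by a quadratic analysis in $\tau$. This is exactly what produces the thresholds $\sqrt{\log n/n}$ and $3\sqrt{\log n/n}$ and the $\Omega(n^{-1})$ gap. Saying you would ``lower-bound $\ell(\mu,\alpha)$ and upper-bound $\ell(m,a)$'' names the goal, not a method; without some device like Pinsker's inequality you do not obtain a rate at all.

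The second missing piece is the boundary regimes, which you acknowledge need ``particular care'' but leave untouched. When $a < 1/n^2$ or $a > 1-1/n^2$, or when $|m - \mu| \geq \mu/10$, the dominance argument underlying the decoupling fails (the wrong term can dominate, or the mixing weight kills the dominant term), so the simplified likelihood formula is unavailable. The paper handles these cases by a genuinely different argument: it first establishes the uniform lower bound $L_{\alpha,\mu}(\mathbf{X}) \geq -n$ using $\chi^2$ tail bounds, and then shows the bad-parameter likelihood is $-\Theta(n\log n)$, which is smaller. A further, more minor point: your final step invoking the EM stationarity identities $\widehat\alpha = \tfrac1n\sum_i r_i$ and $\widehat\mu = \sum_i r_i X_i/\sum_i r_i$ is redundant --- once the global maximizer is confined to $|a-\alpha| \leq \sqrt{\log n/n}$ and $|m-\mu| \leq 3\sqrt{\log n/n}$, the theorem is already proved, so this refinement adds nothing and cannot substitute for the missing confinement argument.
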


A sketch of our proof of Theorem \ref{thm:gmm_likelihood} is as follows.  Let $B = \left\{(\widehat{\alpha}, \widehat{\mu}) : \text{$|\alpha| > \sqrt{\frac{\log{n}}{n}}$ or $|\widehat{\mu} -\mu| > \sqrt{\frac{\log{n}}{n}}$} \right\}$ be the set of all ``bad" estimators $(\widehat{\alpha}, \widehat{\mu})$ of the true GMM parameters $(\alpha, \mu)$. We show that with high probability, the GMM likelihood for all $(\widehat{\alpha}, \widehat{\mu}) \in B$ is less than the GMM likelihood for $(\alpha, \mu)$, which implies that the GMM MLE $(\widehat{\alpha}_{\text{GMM}}, \widehat{\mu}_{\text{GMM}})$ is not in $B$.

\begin{figure*}[t]
    \centering
    \includegraphics[width=\linewidth]{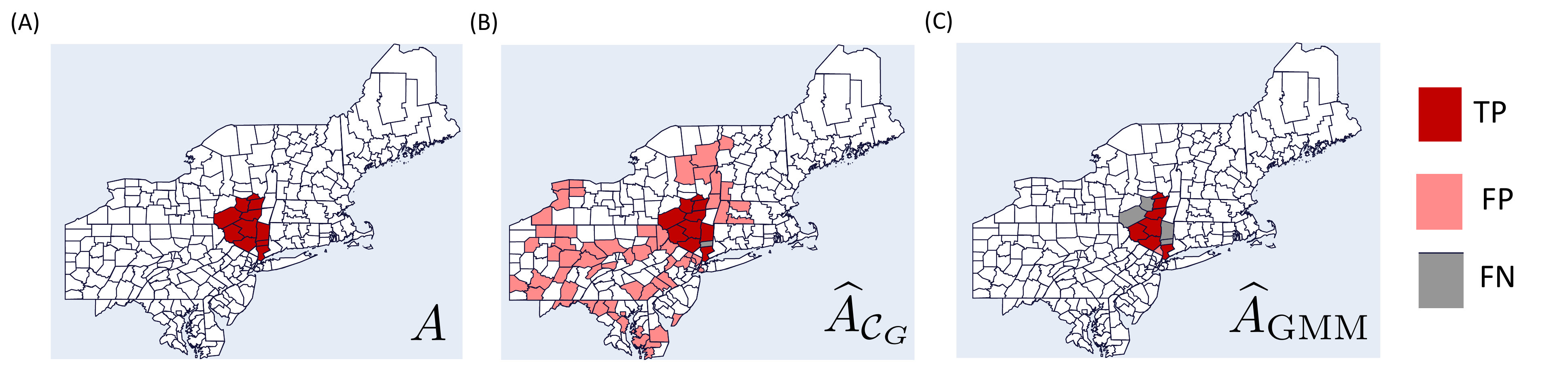}
    \caption{
(A) An anomaly $A$ containing $11$ connected counties is implanted into a graph of counties in the Northeast USA \cite{Cadena2019}.  (B)  The MLE $\mle{\mathcal{C}_G}$ greatly overestimates the size of the anomaly with $59$ false positives ($F$-measure = $0.24$).
    (C) The GMM estimator $\widehat{A}_{\tGMM}$ identifies 7/11 counties correctly with only 1 false positive ($F$-measure = $0.73$).}
    \label{fig:compare_gmm_asd}
\end{figure*}

\subsection{A GMM-based Anomaly Estimator}
Motivated by Theorem \ref{thm:gmm_likelihood}, we use a GMM fit to derive an asymptotically unbiased anomaly estimator for any anomaly family $\mathcal{S}$.
Our approach
generalizes the algorithm given in \cite{NetMix} for the connected family $\mathcal{S} = \mathcal{C}_G$. 
Our approach is inspired by both the GMM literature discussed above and by classical statistical techniques such as the False Discovery Rate (FDR) \cite{Benjamini1995} and the Higher Criticism \cite{Donoho2004} thresholding procedures, which identify unstructured anomalies in $z$-score distributions by first estimating the size of the anomalies \cite{Jin2007,Cai2007,Meinshausen2006,Benjamini2010,Vinayak2020}.

Given data $\mathbf{X} \sim \probdist_{\mathcal{S}}(A, \mu)$, we first use the EM algorithm to fit a GMM to the data $\mathbf{X}$. This fit yields estimates $\widehat{\mu}_{\text{GMM}}, \widehat{\alpha}_{\text{GMM}}$ of the GMM parameters $\mu, \alpha$, respectively, as well as estimates $\widehat{r}_i$ of the responsibilities $r_i = P(Z_i = 1 \mid X_i)$.  Our estimator $\widehat{A}_{\text{GMM}}$ is the set $S \in \mathcal{S}$ with size
$\big| |S| - \widehat{\alpha}_{\text{GMM}} n  \big| \leq \sqrt{\frac{\log{n}}{n}}$
and having the largest total responsibility:
\begin{equation}
\label{eq:anomaly_mix}
\widehat{A}_{\text{GMM}} = \argmax_{\substack{S \in \mathcal{S} \\  \big| |S| - \widehat{\alpha}_{\text{GMM}} n  \big| \leq \sqrt{\frac{\log{n}}{n}}  }}  \left( \sum_{i \in S}  \widehat{r}_i \right).
\end{equation}
By Theorem \ref{thm:gmm_likelihood}, our constraint on the size $|S|$  in \eqref{eq:anomaly_mix} ensures that the size $|\widehat{A}_{\text{GMM}}|$ of the GMM-based estimator $\widehat{A}_{\text{GMM}}$ has asymptotically zero bias for sufficiently large $\mu$.  We formalize this in the following Corollary.

\begin{cor}
\label{cor:gmm1}
Let $\mathbf{X} = (X_1, \dots, X_n) \sim \asdS$, where $|A|=\alpha n$ for $0 < \alpha < 0.5$ and $\mu \geq C\sqrt{\log{n}}$ for a sufficiently large constant $C > 0$. Then $\lim_{n\to\infty} \textnormal{Bias}(|\mleS|/n) = 0$.
\end{cor}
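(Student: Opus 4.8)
The plan is to exploit the fact that the hypothesis $\mu \geq C\sqrt{\log n}$ forces the two populations to separate, so that the scan statistic $\Gamma$ is maximized by a set of size essentially $\alpha n$ regardless of the family $\mathcal{S}$ (only $A \in \mathcal{S}$ will be used). Concretely, I would show that for every fixed $\epsilon > 0$, with probability tending to $1$ the MLE satisfies $(\alpha - \epsilon) n < |\mleS| < (\alpha + \epsilon)n$; since $|\mleS|/n \in [0,1]$ always holds, this convergence in probability upgrades to $\textnormal{Bias}(|\mleS|/n) \to 0$ by splitting $E[|\mleS|/n]$ over the good event and its complement (whose probability vanishes). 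This is consistent with Theorem \ref{thm:unconstr}: there $\mu$ is a constant, whereas here $\mu \to \infty$, and it is exactly the growing mean that removes the bias.

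First I would condition on a high-probability ``good event'' controlling a handful of \emph{global} statistics: $\Gamma(A) = \sqrt{\alpha n}\,\mu\,(1 - o(1))$ (immediate, since $\Gamma(A) \sim N(\sqrt{\alpha n}\,\mu, 1)$), the truncated sums $P := \sum_{v} X_v^+ = \alpha n \mu + \Theta(n)$ and $D := \sum_{v \in A}(X_v - \mu)^+ = \Theta(n)$ (both concentrate within $o(n)$ of their means by Gaussian/bounded-differences concentration), and the maximum $X_{\max} \leq \mu + \sqrt{2\log n}\,(1 + o(1))$. The key structural point is that $A \in \mathcal{S}$ gives $\Gamma(\mleS) \geq \Gamma(A)$, so it suffices to bound $\Gamma(S)$ for every $S$ of ``wrong'' size by a \emph{deterministic} function of these few statistics; this avoids any exponential union bound over $\mathcal{S}$ and makes the argument family-agnostic.

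For the upper bound on $|\mleS|$, I would use the universal inequality $\Gamma(S) \leq P / \sqrt{|S|}$. For $|S| \geq (\alpha + \epsilon)n$ this gives $\Gamma(S) \leq (\alpha n \mu + \Theta(n))/\sqrt{(\alpha+\epsilon)n}$, whose ratio to $\Gamma(A) = \sqrt{\alpha n}\,\mu\,(1-o(1))$ tends to $\sqrt{\alpha/(\alpha + \epsilon)} < 1$, ruling out such sets. For the lower bound I would bound any subset sum by the sum of the top $|S|$ order statistics, $\sum_{v \in S} X_v \leq \sum_{i=1}^{m} X_{(i)} \leq m\mu + \min\!\big(D,\, m(X_{\max} - \mu)^+\big)$ with $m = |S|$, and split into two regimes. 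For $m = o(n)$ (concretely $m \leq n/\sqrt{\log n}$) the naive bound $\min(\cdot) \leq m\sqrt{2\log n}$ already yields $\Gamma(S) \leq \sqrt{m}\,(\mu + \sqrt{2\log n}) = o(\Gamma(A))$, since the prefactor $\sqrt m$ is itself $o(\sqrt n)$. For $n/\sqrt{\log n} \leq m \leq (\alpha - \epsilon)n$ I would instead use $\min(\cdot) \leq D = \Theta(n)$, giving $\Gamma(S) \leq \sqrt m\,\mu + \Theta(n)/\sqrt m \leq \sqrt{(\alpha - \epsilon)n}\,\mu + o(\sqrt n\,\mu)$, whose gap to $\Gamma(A)$ is $(\sqrt\alpha - \sqrt{\alpha - \epsilon})\sqrt n\,\mu - o(\sqrt n\,\mu) > 0$ for large $n$.

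The main obstacle is precisely this $\Theta(n)$-sized-but-subcritical regime of the lower bound. The subtle point is that the sum of the top $m$ Gaussian deviations must be controlled at scale $\Theta(n)$ (via $D$), not at the naive worst-case scale $\Theta(n\sqrt{\log n})$: using the latter would only give $\Gamma(S) \lesssim \sqrt{(\alpha-\epsilon)n}\,\mu + O(\sqrt n\,\sqrt{\log n})$, which with $\mu = \Theta(\sqrt{\log n})$ leaves an error comparable to the gap and would force $C$ to grow as $\epsilon \to 0$, yielding only a bias bound of order $1/C$ rather than $0$. Replacing the worst-case order-statistic scale by the average-tail scale $D = \Theta(n)$ is what lets a \emph{single fixed} $C$ drive the bias to $0$ for every $\epsilon$. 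Everything else — the concentration estimates, the endpoint $m = o(n)$ case, and the final split of the expectation — is routine.
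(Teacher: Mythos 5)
Your proposal is correct as a proof of the \emph{literal} statement, but it proves a different claim from the one the paper intends and proves. Despite the notation $|\mleS|$ in the corollary (a slip that persists into the appendix), Corollary \ref{cor:gmm1} is about the GMM-based estimator $\widehat{A}_{\tGMM}$ of Equation \eqref{eq:anomaly_mix}: the preceding text (``our constraint on the size $|S|$ in \eqref{eq:anomaly_mix} ensures that the size $|\widehat{A}_{\text{GMM}}|$ \dots has asymptotically zero bias \dots We formalize this in the following Corollary'') and Figure \ref{fig:mle}C make this unambiguous. For that estimator the paper's proof is essentially two lines: Theorem \ref{thm:gmm_likelihood} gives $|\alphaGMM - \alpha| \leq \sqrt{\log n / n}$ with probability at least $1 - \frac{1}{n}$; the constraint $\big||S| - \alphaGMM n\big| \leq \sqrt{\log n / n}$ is built into the definition of $\widehat{A}_{\tGMM}$, so on the good event $|\widehat{A}_{\tGMM}|/n$ is within roughly $2\sqrt{\log n/n}$ of $\alpha$; and splitting the expectation over the good event and its complement (your own final step, which does coincide with the paper's) finishes. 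The fact that your argument never invokes Theorem \ref{thm:gmm_likelihood} or the size constraint is the telltale sign that you are answering a different question: whether the raw scan-statistic MLE itself is asymptotically unbiased once $\mu \geq C\sqrt{\log n}$.

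Read on its own terms, your argument for that MLE statement is sound, and it is genuinely different from (and in a sense stronger than) anything proved in the paper. I checked the three regimes: the envelope $\Gamma(S) \leq P/\sqrt{|S|}$ rules out $|S| \geq (\alpha+\epsilon)n$ since the ratio to $\Gamma(A) = \sqrt{\alpha n}\,\mu\,(1-o(1))$ tends to $\sqrt{\alpha/(\alpha+\epsilon)} < 1$ (only $\mu \to \infty$ is needed here); $\Gamma(S) \leq \sqrt{m}\,X_{\max}$ handles $m \leq n/\sqrt{\log n}$; and the order-statistic bound $\sum_{i \leq m} X_{(i)} \leq m\mu + D$ with $D = \Theta(n)$ handles the mid regime, where $D/\sqrt{m} = O\big(\sqrt{n}\,(\log n)^{1/4}\big) = o(\sqrt{n}\,\mu)$ while the naive $m\sqrt{2\log n}$ bound would be comparable to the gap $(\sqrt{\alpha}-\sqrt{\alpha-\epsilon})\sqrt{n}\,\mu$ --- your diagnosis of why the average-tail statistic $D$ is necessary there is exactly right. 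Two small repairs: the step $\sum_{i\leq m}(X_{(i)}-\mu)^+ \leq D$ needs $(X_v-\mu)^+ = 0$ for $v \notin A$, i.e.\ the background maximum falling below $\mu$, which requires $C > \sqrt{2}$ (or simply redefine $D$ as $\sum_{v\in[n]}(X_v-\mu)^+$, which equals your $D$ up to $o(n)$ on the good event); and the family-agnostic step $\Gamma(\mleS) \geq \Gamma(A)$ should explicitly cite $A \in \mathcal{S}$ from the ASD definition. What your route buys: the MLE's bias vanishes at $\mu \geq C\sqrt{\log n}$ for \emph{every} anomaly family, with no sub-exponentiality hypothesis on $|\containA|$ and no technical condition $\lim_{n\to\infty} P(A \subseteq \mleS) = 1$ as in Theorem \ref{thm:subexp}, complementing Theorem \ref{thm:unconstr} (constant $\mu$, strictly positive bias) and formalizing the paper's parenthetical remark that the bias disappears for large $\mu$. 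What it does not buy is the paper's actual corollary --- the unbiasedness of $\widehat{A}_{\tGMM}$ --- which is a near-immediate consequence of Theorem \ref{thm:gmm_likelihood} and would still need that theorem.
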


In addition, for the unstructured family $\mathcal{S} = \mathcal{P}_n$,  we show our estimator $\widehat{A}_{\tGMM}$ has small normalized error $\frac{|A \triangle \widehat{A}_{\text{GMM}}|}{|A|}$, as studied by \cite{Castro2014,Castro2017}, where $\triangle$ is the symmetric set difference.

\begin{cor}
\label{cor:gmm2}
Let $\mathbf{X} = (X_1, \dots, X_n) \sim \text{ASD}_{\mathcal{P}_n}(A, \mu)$, where $|A|=\alpha n$ for $0 < \alpha < 0.5$ and $\mu \geq C \sqrt{\log{n}}$ for a sufficiently large constant $C> 0$. Then $\frac{|A \triangle \widehat{A}_{\text{GMM}}|}{|A|} \leq 2 \sqrt{\frac{\log{n}}{n}} = o(1)$ with probability at least $1-\frac{1}{n}$.
\end{cor}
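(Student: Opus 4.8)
The plan is to exploit the fact that for the unstructured family $\mathcal{P}_n$ the estimator $\widehat{A}_{\tGMM}$ admits a completely explicit description. Because every subset of a given cardinality is feasible, and because $\sum_{i\in S}\widehat{r}_i$ is maximized over sets of fixed size by taking the observations with the largest responsibilities, $\widehat{A}_{\tGMM}$ is exactly the collection of the $k$ observations with the largest values $X_i$, where $k=|\widehat{A}_{\tGMM}|$ is the cardinality selected by the size constraint. Here I am using that the responsibility is a strictly increasing function of $X_i$: its log-odds $\log\frac{\widehat{r}_i}{1-\widehat{r}_i}=\log\frac{\alphaGMM}{1-\alphaGMM}+\muGMM X_i-\tfrac12\muGMM^2$ is affine in $X_i$ with positive slope $\muGMM>0$ (which holds by Theorem \ref{thm:gmm_likelihood}). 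Thus ranking by $\widehat{r}_i$ is the same as ranking by $X_i$, and the entire problem reduces to understanding the order statistics of the $X_i$.

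First I would establish a separation event on which every anomalous observation exceeds every non-anomalous one. A union bound over the $(1-\alpha)n$ null coordinates together with the Gaussian tail $P(N(0,1)>t)\le e^{-t^2/2}$ shows that with probability at least $1-1/n$ the maximum null observation is at most $2\sqrt{\log n}$; symmetrically, the minimum of the $\alpha n$ signal observations $X_i=\mu+g_i$ is at least $\mu-2\sqrt{\log n}$ with probability at least $1-1/n$. When $\mu\ge C\sqrt{\log n}$ with $C$ large enough (e.g.\ $C>4$), these bounds give $\mu-2\sqrt{\log n}>2\sqrt{\log n}$, so on this event the $\alpha n$ largest values $X_i$ are precisely the anomalous coordinates $A$.

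On the separation event the symmetric difference collapses to a pure cardinality mismatch. If $k=|\widehat{A}_{\tGMM}|\ge|A|$ then $\widehat{A}_{\tGMM}\supseteq A$ and all extra elements are false positives, while if $k<|A|$ then $\widehat{A}_{\tGMM}\subseteq A$ and all missing elements are false negatives; in either case $|A\,\triangle\,\widehat{A}_{\tGMM}|=\bigl|\,|\widehat{A}_{\tGMM}|-|A|\,\bigr|$. It remains to bound this size gap. The size constraint in the definition of $\widehat{A}_{\tGMM}$ pins $|\widehat{A}_{\tGMM}|$ to within the stated tolerance of $\alphaGMM n$, and Theorem \ref{thm:gmm_likelihood} gives $|\alphaGMM-\alpha|\le\sqrt{\log n/n}$ with probability at least $1-1/n$; combining these by the triangle inequality bounds $\bigl|\,|\widehat{A}_{\tGMM}|-\alpha n\,\bigr|\le 2\sqrt{n\log n}$. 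Dividing by $|A|=\alpha n$ and taking a union bound over the separation event and the event of Theorem \ref{thm:gmm_likelihood} then yields $\frac{|A\,\triangle\,\widehat{A}_{\tGMM}|}{|A|}\le 2\sqrt{\frac{\log n}{n}}$ with probability at least $1-\frac{1}{n}$, after tuning the constants.

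The main obstacle is the separation step: I need two-sided Gaussian order-statistic control over the $\alpha n$ signal and $(1-\alpha)n$ null coordinates simultaneously, sharp enough that the threshold $C$ on $\mu/\sqrt{\log n}$ and the final probability $1-\frac{1}{n}$ come out as stated, and I must verify that this event can be intersected with the high-probability event of Theorem \ref{thm:gmm_likelihood} without degrading the bound. A secondary point to check is that replacing the true parameters $(\mu,\alpha)$ by the estimates $(\muGMM,\alphaGMM)$ in the responsibilities does not disturb the ranking of the $X_i$; this is immediate from the affine-in-$X_i$ log-odds above, which only requires $\muGMM>0$, so the reduction to order statistics is robust to the parameter estimation error.
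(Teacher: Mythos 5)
Your proposal is correct and follows essentially the same route as the paper's own proof: separation of signal and null coordinates via Gaussian tail bounds (the paper's Lemma \ref{lem:high_prob_norm}), monotonicity of the responsibilities $\widehat{r}_i$ in $X_i$ so that $\widehat{A}_{\tGMM}$ is a set of top order statistics, reduction of the symmetric difference to a cardinality gap, and control of that gap via the size constraint together with Theorem \ref{thm:gmm_likelihood}. Your explicit log-odds justification for the ranking step is a nice touch the paper only asserts, and note that the division by $|A|=\alpha n$ actually yields the constant $2/\alpha$ rather than $2$ (a discrepancy present in the paper's own appendix version as well), which is harmless since $\alpha$ is a fixed constant and the bound remains $o(1)$.
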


Another useful property of our estimator $\widehat{A}_{\text{GMM}}$ is that the objective $\sum_{i\in S} \widehat{r}_i$ in \eqref{eq:anomaly_mix} is linear, in contrast to the non-linear objective $\sum_{i \in S} X_i /\sqrt{|S|}$ for the MLE $\mleS$ in Equation \eqref{eq:mle_def}. Thus, our estimator $\widehat{A}_{\text{GMM}}$ can be efficiently computed for many anomaly families $\mathcal{S}$.
For the unstructured family $\mathcal{P}_n$, $\widehat{A}_{\text{GMM}}$ can be computed in $O(n \log{n})$ time by sorting the data points and returning the $\lfloor \widehat{\alpha}_{\tGMM} n + \sqrt{\log{n}/n}  \rfloor$ largest ones. For the interval family $ \mathcal{I}_n$, $\widehat{A}_{\text{GMM}}$ can be computed in $O(n)$ time by scanning over all intervals of size $\lfloor \widehat{\alpha}_{\tGMM} n + \sqrt{\log{n}/n}  \rfloor$. For the graph cut family $\mathcal{T}_{G,\rho}$, \cite{Sharpnack2013b} shows that \eqref{eq:anomaly_mix} can be efficiently solved with a convex program through the use of Lov\'{a}sz extensions \cite{Bach2010}.

More generally, when the constraint $S \in \mathcal{S}$ can be expressed with linear constraints, one can compute $\widehat{A}_{\text{GMM}}$ with an Integer Linear Program (ILP). This is true for anomaly families including the submatrix family $\mathcal{M}_N$, the graph cut family $\mathcal{T}_{G,\rho}$ \cite{Sharpnack2013}, and the connected family $\mathcal{C}_G$ \cite{Dittrich2008, NetMix}. In practice, we found that directly computing \eqref{eq:anomaly_mix} via ILP could sometimes be inefficient for the submatrix and connected families, and in Appendix \ref{sec:append:gmm_approx} we derive an approximation to \eqref{eq:anomaly_mix} that can be efficiently computed for these families. 

\subsection{Experiments}

First, we compare the performance of our estimator $\widehat{A}_{\text{GMM}}$  to the MLE 
$\mleS$ for the anomaly families $\mathcal{S}$ from Section \ref{subsec:exp_evidence}.
We observe that $\text{Bias}(|\widehat{A}_{\tGMM}| / n) \approx 0$ for all means $\mu \geq \mu_{\text{detect}}$ and across many anomaly families $\mathcal{S}$ (Figure \ref{fig:mle}C). We also observe that $\lim_{n\to\infty} \text{Bias}(|\widehat{A}_{\tGMM}| / n)  = 0$ no matter if $|\containA|$ is exponential or sub-exponential (Figure \ref{fig:mle}D). This empirically demonstrates Theorem \ref{thm:gmm_likelihood} by showing that $|\widehat{A}_{\tGMM}|$ is an asymptotically unbiased estimator of the anomaly size $|A|$ for sufficiently large $\mu$ regardless of the number $|\containA|$ of sets containing the anomaly $A$.

Next, we simulate a disease outbreak on 
the Northeastern USA Benchmark (NEast) graph, a standard benchmark for estimating spatial anomalies \cite{Cadena2018,Cadena2019}. 
The NEast graph $G = (V, E)$ is a graph whose nodes are the $n=244$ counties in the northeastern part of the USA \cite{Kulldorff2003} with edges connecting adjacent counties. 
Similar to \cite{Cadena2018,Aksoylar2017, Qian2014}, we implant a connected anomaly $A \in \mathcal{C}_G$ of size $|A|=11$ and we draw a sample $\mathbf{X} \sim \probdist_{\mathcal{C}_G}(A, 2)$.
Because existing methods for estimating anomalous subgraphs typically compute the MLE $\mle{\mathcal{C}_G}$ \cite{Chen2014,Qian2014,Cadena2019},  we also compare our estimator to the MLE.
We find (Figure \ref{fig:compare_gmm_asd})  that the MLE $\mle{\mathcal{C}_G}$ greatly overestimates the size $|A|$ of the anomaly $A$, with many more false positives compared to the GMM estimator $\widehat{A}_{\tGMM}$. 

We also compare our estimator $\widehat{A}_{\text{GMM}}$ and the MLE $\mleS$ on a real-world highway traffic dataset; similar to the NEast graph, this dataset is also often studied in the scan statistic literature \cite{Zhou2016,Cadena2018,Cadena2019}.  This dataset consists of a highway traffic network $G=(V,E)$ in Los Angeles County, CA with $|V|=1868$ vertices and $|E|=1993$ edges. The vertices $V$ are sensors that record the speed of cars passing and the edges $E$ connect adjacent sensors. The observations $\mathbf{X} = (X_v)_{v \in V}$ are $p$-values (where sensors that record higher average speeds have lower $p$-values) that are transformed to Gaussians using the method in \cite{NetMix}. 

For the connected family $\mathcal{C}_G$, we find that our estimator $\widehat{A}_{\text{GMM}}$ is much smaller than the MLE $\mle{\mathcal{C}_G}$ ($|\widehat{A}_{\text{GMM}}| = 17$ versus $|\mle{\mathcal{C}_G}| = 140$) but with higher average score 
($2.6$ for our estimator versus $0.55$ for the MLE). 
While there is 
no ground-truth anomaly in this dataset, our results show that our estimator $\widehat{A}_{\text{GMM}}$ yields a smaller anomaly but with higher average values than the MLE $\mle{\mathcal{C}_G}$,
consistent with the theoretical results in Section \ref{sec:mle_bias} that the MLE $\mle{\mathcal{C}_G}$ is a biased estimator.  In Appendix \ref{sec:append:exps}, we show  similar results comparing our estimator $\widehat{A}_{\text{GMM}}$ and the MLE $\mleS$ for the edge-dense family $\mathcal{S}=\mathcal{E}_{G, 0.7}$. Since the goal of anomaly estimation in this application is to identify portions of roads with or without high traffic volume, the large and biased anomaly estimates produced by the MLE may not be useful for traffic studies. 

\begin{wrapfigure}{l}{0.5\textwidth}
  \begin{center}
    \includegraphics[width=\linewidth]{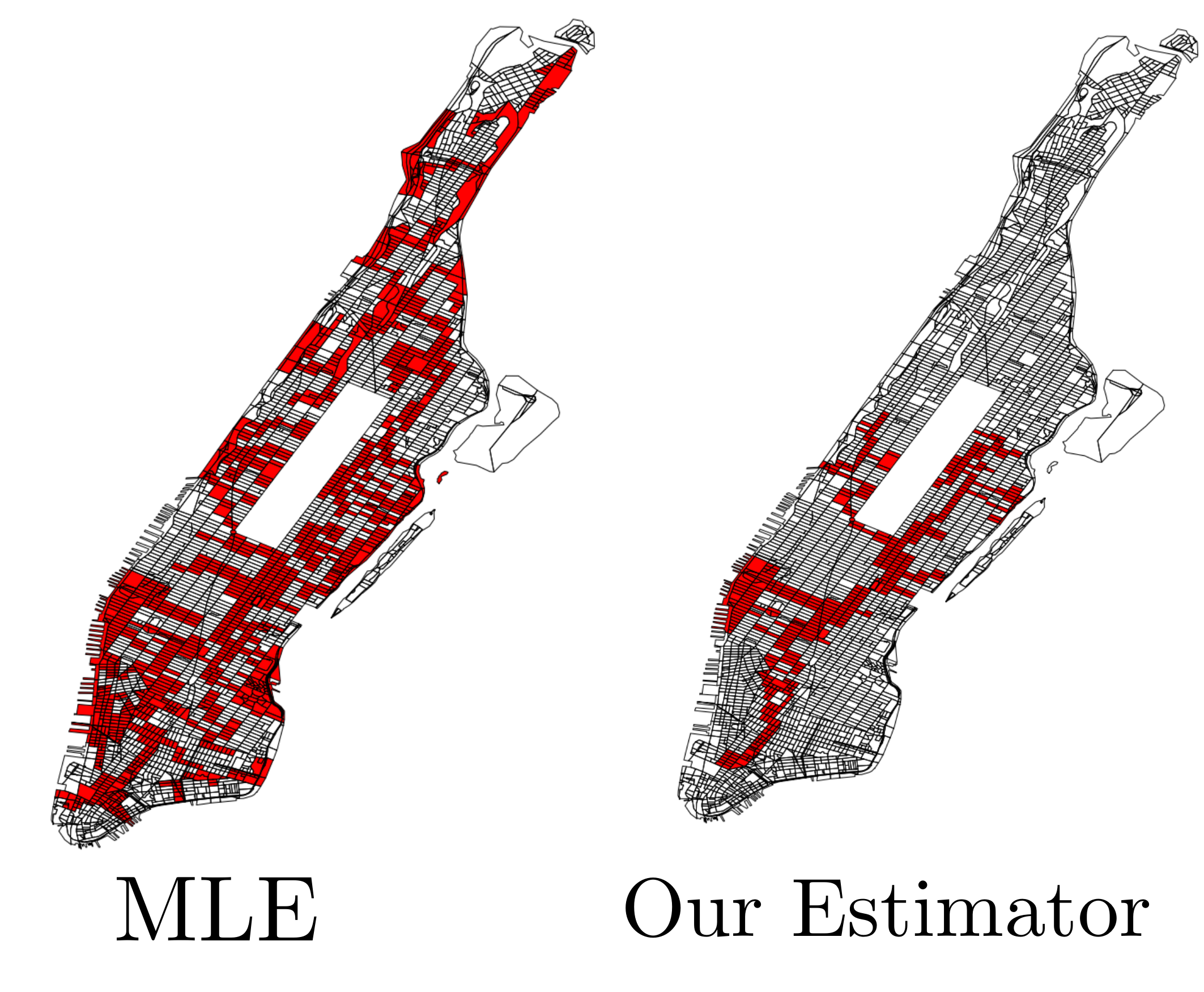}
  \end{center}
\caption{Comparison of the MLE,  also known as the graph scan statistic (left), and our estimator (right) in estimating connected disease clusters in data of breast cancer incidence in Manhattan.  Our estimator computes a smaller cluster than the MLE/graph scan statistic but with a 20\% higher average incidence ratio (relative risk). \vspace{-0.5cm}}
\label{fig:manhattan}
\end{wrapfigure}

We also compared our estimator and the MLE on a dataset of breast cancer incidence in census blocks in Manhattan \cite{Boscoe2016} using the connected family $\mathcal{C}_G$. 
This dataset is typically modeled with Poisson distributions,  and we accordingly adapted the MLE and our estimator to such Poisson distributions (see Appendix \ref{sec:append:exps} for more details). In this setting, the MLE is also known as a graph scan statistic \cite{Cadena2019}.  We find that our estimator identifies a much smaller connected cluster of breast cancer cases compared to the MLE/graph scan statistic (182 census blocks vs 382, Figure \ref{fig:manhattan}) but with  a 20\% higher cancer incidence rate, again demonstrating the bias of the MLE.

\section{Conclusion}

We study the problem of estimating structured anomalies. We formulate this problem as the problem of estimating a parameter of the Anomalous Subset Distribution (ASD), with the structure of the anomaly described by an anomaly family. We demonstrate that the Maximum Likelihood Estimator (MLE) of the size of this parameter is biased if and only if 
the number of sets in the anomaly family containing the anomaly is exponential. These results unify existing results for specific anomaly families including intervals, submatrices, and connected subgraphs. Next, we develop an asymptotically unbiased estimator using a Gaussian mixture model (GMM), and 
empirically demonstrate the advantages of our estimator on both simulated and real datasets.

Our work opens up a number of future directions. First, it would be highly desirable to provide a complete proof of Conjecture \ref{conj:central_claim}.
A second direction is to generalize the ASD to more than one anomaly in a dataset by building on existing work for the interval family \cite{Jeng2010} and the submatrix family \cite{Chen2016}. One potential algorithm for identifying multiple anomalies is to fit a $k$-component GMM to the data and sequentially compute each anomaly.
A third direction is to generalize our theoretical results to other distributions, e.g.
Poisson distributions, which are commonly used to model anomalies in integer-valued data \cite{Cadena2019, Liu2019, Kulldorff1997}.  While our GMM-based estimator is easily adapted to other distributions, one challenge in studying bias is that the MLE does not necessarily have a simple form like it does for Gaussian distributions.
These directions would strengthen the theoretical foundations for further applications of anomaly estimation.

\section*{Acknowledgments}
The authors would like to thank Allan Sly for helpful discussions, and Baojian Zhou and Martin Zhu for assistance with running the Graph-GHTP code \cite{Zhou2016}. 
U.C. is supported by NSF GRFP DGE 2039656.
J.C.H.L. is partially supported by NSF award IIS-1562657.
B.J.R. is supported by a US National Institutes of Health (NIH) grant U24CA211000.

\bibliography{references}
\bibliographystyle{ieeetr}

\newpage

\appendix

\renewcommand{\thefigure}{S\arabic{figure}}
\setcounter{figure}{0} 

\section{Calculating $\mu_{\text{detect}}$}
\label{sec:append:mudetect}

$\mu_{\text{detect}}$ is the smallest mean $\mu$ such that
the GLR test asymptotically solves the ASD Detection Problem with the probability of a type 1 or type 2 error going to $0$ as $n \to \infty$ \cite{Sharpnack2013b}. We empirically determine $\mu_{\text{detect}}$ by finding the smallest mean $\mu$ such that the Type I and Type II errors of the GLR test statistic $\widehat{t}_{\mathcal{S}}$ (Equation \eqref{eq:glr_test_stat}) are both less than $0.01$.

\section{Additional Experiments}
\label{sec:append:exps}

\begin{figure*}[t]
    \centering
\includegraphics[scale=0.43]{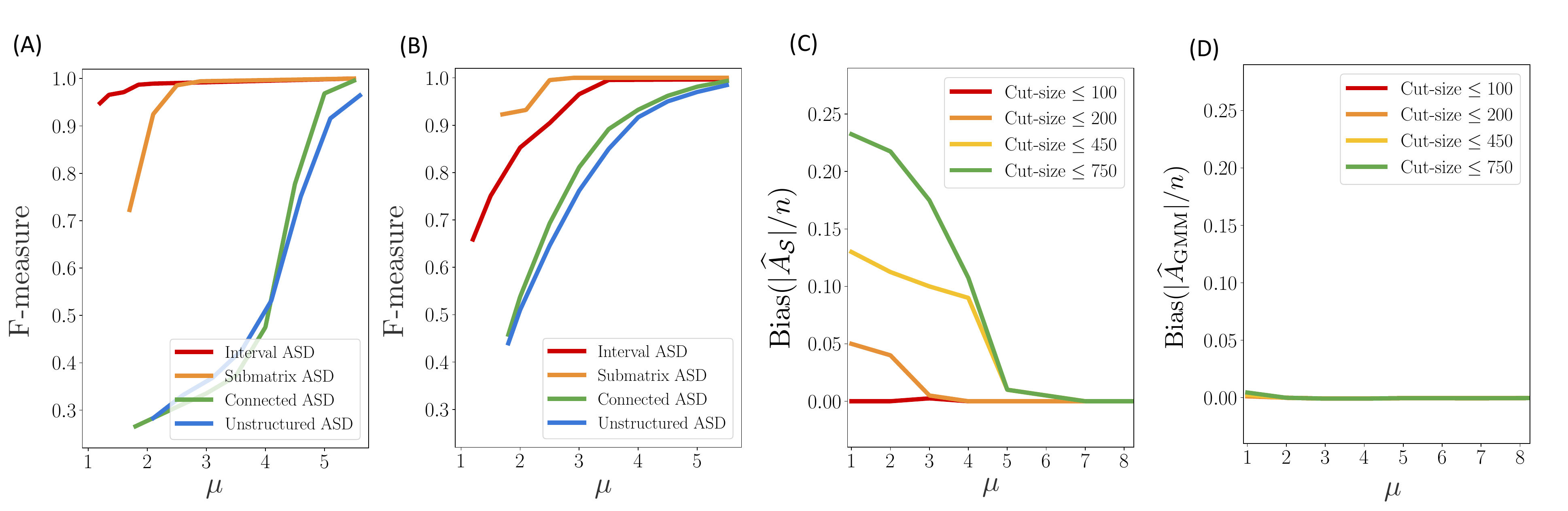}
\caption{Data $\mathbf{X} = (X_1, \dots, X_n) \sim \probdist_{\mathcal{S}}(A, \mu)$ as in Figure \ref{fig:mle}. (A) F-measure of the MLE $\mleS$ versus $\mu$ for means $\mu \geq \mu_{\text{detect}}$.  Note that the MLEs $\mleS$ for the connected family $\mathcal{S} = \mathcal{C}_G$ and unstructured family $\mathcal{S} = \mathcal{P}_n$ have low F-measure for small means $\mu$, consistent with the $\text{Bias}(|\mleS|/n)$ for these families shown in Figure 2.
(B) F-measure of our estimator $\widehat{A}_{\tGMM}$ versus $\mu$ for means $\mu \geq \mu_{\text{detect}}$. (C) $\text{Bias}(|\mleS|/n)$ of the MLE versus $\mu$ for means $\mu \geq 1$ and for graph cut family $\mathcal{S} = \mathcal{T}_{G, \rho}$ with different bounds $\rho$ on the cut-size and.  (D) $\text{Bias}(|\widehat{A}_{\tGMM}|/n)$ of our GMM estimator versus $\mu$ for means $\mu \geq 1$ and for graph cut family $\mathcal{S} = \mathcal{T}_{G, \rho}$ with different bounds $\rho$ on the cut-size.
}
    \label{fig:mle_fmeas_graph}
\end{figure*}

\subsection{F-measure}

Although Conjecture \ref{conj:central_claim} is about the $\text{Bias}(|\mleS|/n)$ of the MLE $\mleS$, we also observe that larger $\text{Bias}(|\mleS|/n)$ reduces the F-measure between the anomaly $A$ and the MLE $\mleS$.  Using the data described in Section \ref{subsec:exp_evidence}, we find a noticeable difference in F-measure between anomaly families where $|\containA|$ is exponential --- the connected family $\mathcal{C}_G$ and the unstructured family $\mathcal{P}_n$ --- and anomaly families where $|\containA|$ is sub-exponential --- the interval family $\mathcal{I}_n$ and the submatrix family $\mathcal{M}_N$ (Figure \ref{fig:mle_fmeas_graph} A).

In contrast, our GMM-based estimator $\widehat{A}_{\tGMM}$ has a much smaller difference in the F-measure for anomaly families where $|\containA|$ is exponential versus anomaly families where $|\containA|$ is sub-exponential (Figure \ref{fig:mle_fmeas_graph} B). This result is consistent with the reduced bias of the GMM-based estimator $\widehat{A}_{\tGMM}$ (Figure \ref{fig:mle}C). Interestingly, even for our reduced bias estimator, we still observe a mild difference in F-measure between the families with exponential $|\containA|$ versus the families with sub-exponential $|\containA|$.


\subsection{Graph Cut Family}

We examine the $\text{Bias}(|\mle{\mathcal{T}_{G, \rho}}|/n)$ of the size of the MLE $\mle{\mathcal{T}_{G, \rho}}$ for the graph cut family $\mathcal{T}_{G, \rho}$, where $G$ is a $\sqrt{n}\times\sqrt{n}$ lattice graph, for different values of the bound $\rho$ on the cut-size. For each value of $\rho$, we select an anomaly $A \in \mathcal{T}_{G, \rho}$ with size $|A| = 0.05n$ uniformly at random from $\mathcal{T}_{G, \rho}$. (Note that the cut-size of $A$ is not fixed, as we select $A$ uniformly at random from the set $\mathcal{T}_{G, \rho}$ of all subgraphs of $G$ with cut-size less than $\rho$.)
We then draw a sample $\mathbf{X} = (X_1, \dots, X_n) \sim \probdist_{\mathcal{T}_{G, \rho}}(A, \mu)$ with $n=900$ observations and compute the MLE $\mle{\mathcal{T}_{G, \rho}}$.
We repeat for $50$ samples to estimate $\text{Bias}(|\mle{\mathcal{T}_{G, \rho}}|/n)$. 

While the graph cut anomaly family is often studied in the network anomaly literature \cite{Sharpnack2013,Sharpnack2013b,Sharpnack2013c}, the cut-size bound $\rho$ is typically left unspecified.
When $\rho$ is constant $|\containA|$ is polynomial in $n$, but when $\rho$ is close to the number of edges in $G$ then $|\containA|$ is exponential in $n$ \cite{Nagamochi1994}. So by Conjecture \ref{conj:central_claim} we expect the bias of the MLE $\mle{\mathcal{T}_{G, \rho}}$ to depend on $\rho$.
Indeed, we observe that the $\text{Bias}(|\mle{\mathcal{T}_{G, \rho}}| /n)$ of the MLE is small when $\rho$ is small and the $\text{Bias}(|\mle{\mathcal{T}_{G, \rho}}| /n)$ of the MLE is large when $\rho$ is large (Figure \ref{fig:mle_fmeas_graph} C), which is consistent with Conjecture \ref{conj:central_claim}. Our results demonstrate that careful attention to the cut-size bound $\rho$ is required when the MLE $\mle{\mathcal{T}_{G, \rho}}$ is used for anomaly estimation.

For the same data, we find that our GMM estimator $\widehat{A}_{\tGMM}$ has small bias regardless of the cut-size bound $\rho$ (Figure \ref{fig:mle_fmeas_graph} D). This is consistent with Corollary \ref{cor:gmm1}, and demonstrates that our GMM estimator $\widehat{A}_{\tGMM}$ is a less biased estimator than the MLE $\mle{\mathcal{T}_{G}, \rho}$ regardless of the cut-size bound $\rho$.

\subsection{Dependence of $\text{Bias}(|\mleS|/n)$ on $|\containA|$ versus $|\mathcal{S}|$}

\begin{figure}[t]
    \centering
    \includegraphics[width=0.48\linewidth]{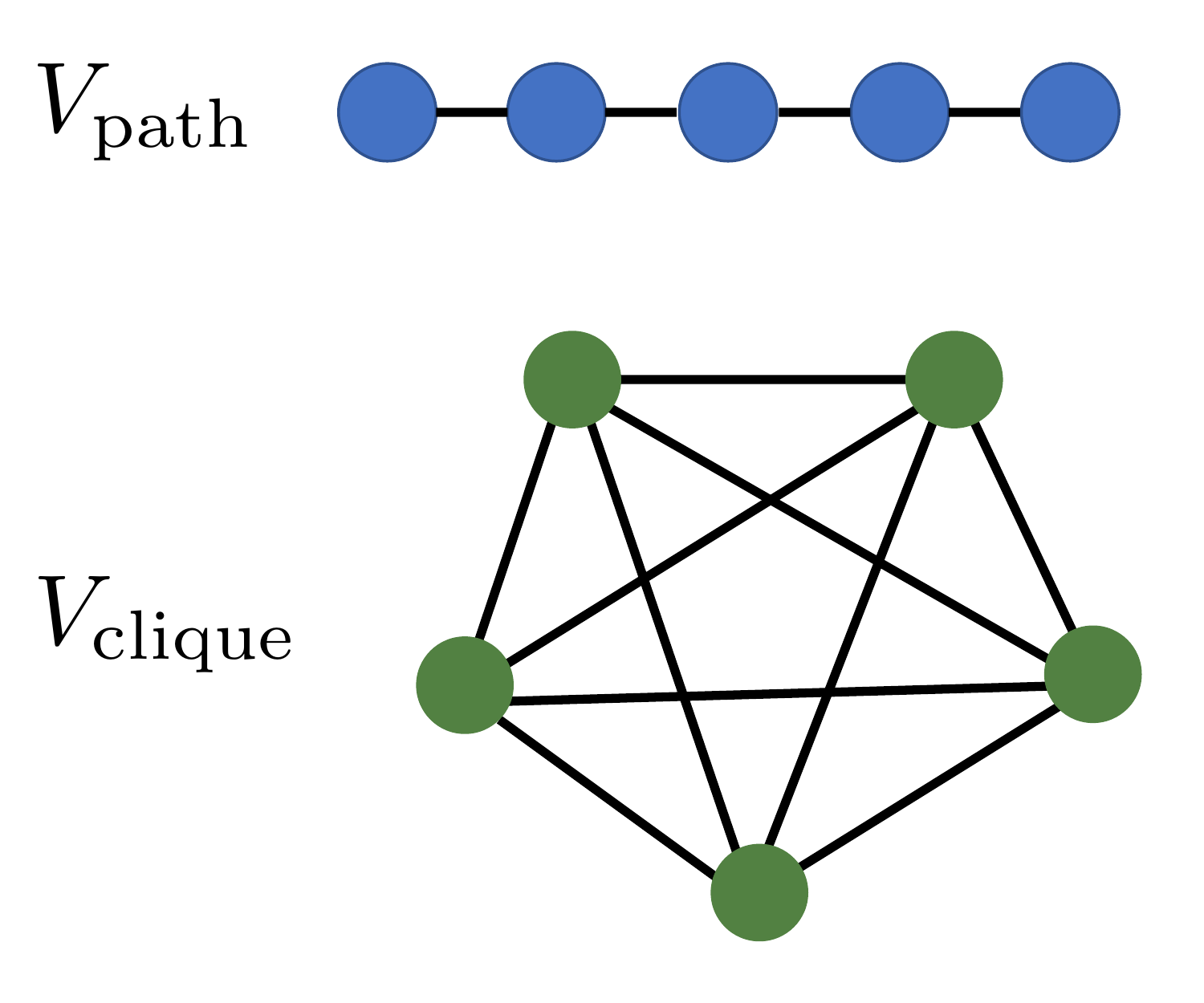}
    \includegraphics[width=0.48\linewidth]{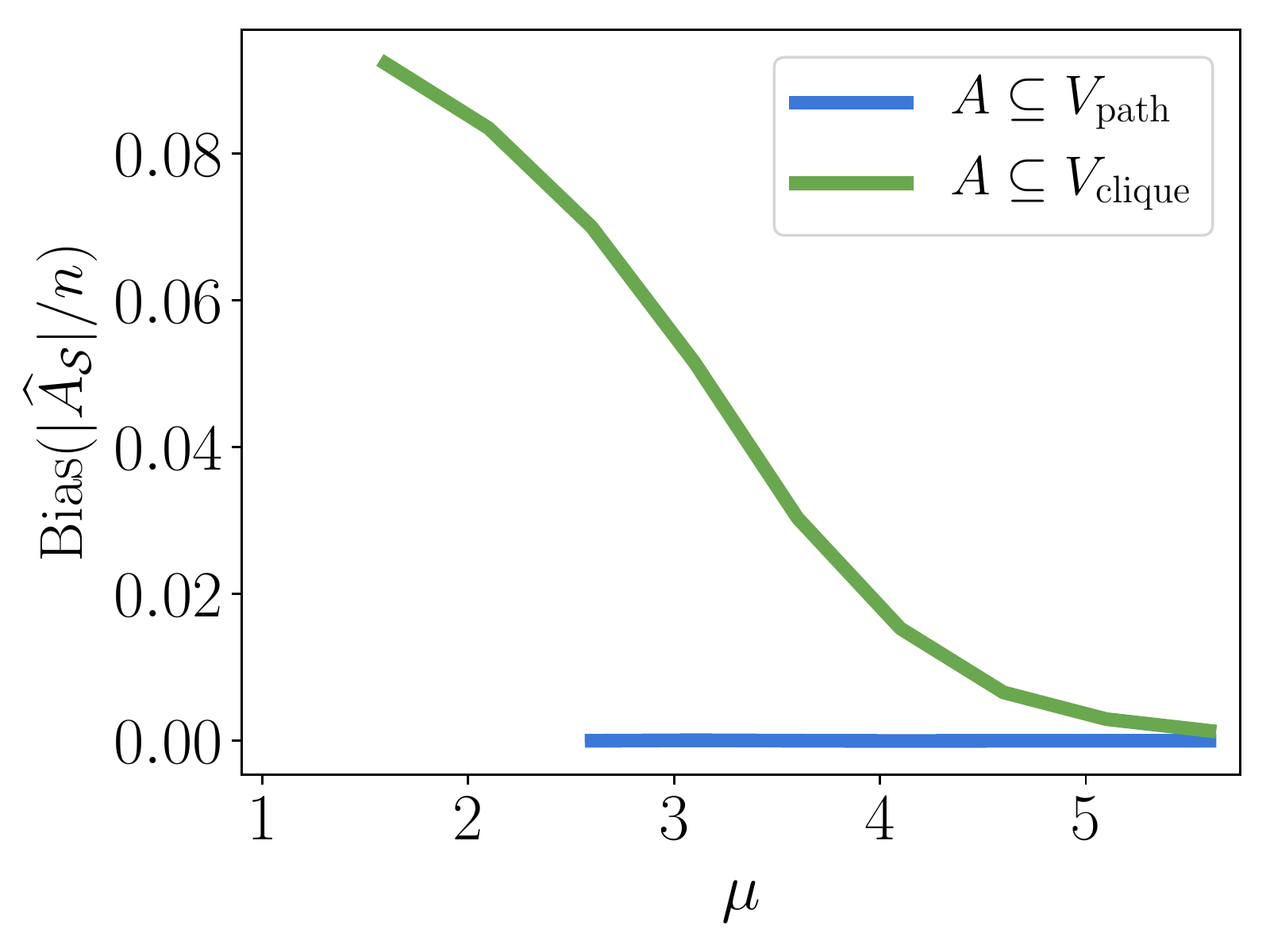}
    \caption{\textbf{Left:} Graph $G=(V, E)$ with two disjoint connected components: $V_{\text{path}}$, a path graph, and $V_{\text{clique}}$, a clique graph, with $|V_{\text{path}}| = |V_{\text{clique}}| = \frac{n}{2}$. \textbf{Right:} $\text{Bias}(|\mleS|/n)$ versus mean $\mu$ for the connected anomaly family $\mathcal{S} = \mathcal{C}_{G}$ with $n=|V| = 500$ vertices and an anomaly $A$ with size $|A| = 0.05n$, for means $\mu \geq \mu_{\text{detect}}$. The blue line corresponds to an anomaly $A \subseteq V_{\text{path}}$ and the green line corresponds to an anomaly $A \subseteq V_{\text{clique}}$.  This experiment suggests that the $\text{Bias}(|\mleS|/n)$ is determined by the $|\containA|$, rather than $|\mathcal{S}|$, consistent with Conjecture \ref{conj:central_claim}.}
    \label{fig:path_clique}
\end{figure}

In this section, we construct an anomaly family $\mathcal{S}$ where $|\mathcal{S}|$ is exponential, but $|\containA|$ is exponential for some anomalies $A$ and sub-exponential for others. We then use this anomaly family $\mathcal{S}$ to provide evidence that $\text{Bias}(|\mleS|/n)$ depends on the number $|\containA|$ of subsets in $\mathcal{S}$ that contain the anomaly $A$, rather than the size $|\mathcal{S}|$ of the anomaly family.

Let $G=(V, E)$ be a graph whose vertices $V= V_{\text{path}} \cup V_{\text{clique}}$ can be partitioned into two disjoint connected components: $V_{\text{path}}$, a path graph, and $V_{\text{clique}}$, a clique (Figure \ref{fig:path_clique}, left).  (Note that the path graph $V_{\text{path}}$ and the clique $V_{\text{clique}}$ are disjoint, unlike the graph from Figure \ref{fig:mle}.) Both the path graph $V_{\text{path}}$ and the clique $V_{\text{clique}}$ have size $|V_{\text{path}}| = |V_{\text{path}}| = \frac{n}{2}$, where $n=900$.

Let $\mathcal{S} = \mathcal{C}_G$ be the connected family for graph $G$, and let $A \in \mathcal{C}_G$ be a set of size $|A| = 0.05 n$.
The size $|\mathcal{S}|$ of the anomaly family $\mathcal{S}$ is exponential in $n$, as $|\mathcal{S}| = O(2^{\frac{n}{2}})$. 
However, $|\containA|$ depends on the anomaly $A$: if the anomaly $A \subseteq V_{\text{path}}$ is in the path graph component, then $|\containA| = O(n^2)$ is sub-exponential in $n$. On the other hand, if $A \subseteq V_{\text{clique}}$ is in the clique graph component, then $|\containA| = O(2^{0.45n})$ is exponential in $n$. 

Empirically, we observe that  if $\mu \geq \mu_{\text{detect}}$, then $\text{Bias}(|\mleS|/n) \approx 0$ if $A \subseteq V_{\text{path}}$ and $\text{Bias}(|\mleS|/n) > 0$ if $A \subseteq V_{\text{clique}}$ (Figure \ref{fig:path_clique}, right). This finding is consistent with Conjecture \ref{conj:central_claim} and demonstrates the dependence of $\text{Bias}(|\mleS|/n)$ on $|\containA|$ rather than $|\mathcal{S}|$.

\subsection{Highway Traffic Data with Edge-Dense Family}

We compare our estimator $\widehat{A}_{\text{GMM}}$ and the MLE $\mleS$ on a real-world highway traffic dataset. This dataset consists of a highway traffic network $G=(V,E)$ in Los Angeles County, CA with $|V|=1868$ vertices and $|E|=1993$ edges. The vertices $V$ are sensors that record the speed of cars passing and the edges $E$ connect adjacent sensors. The observations $\mathbf{X} = (X_v)_{v \in V}$ are $p$-values (where sensors that record higher average speeds have lower $p$-values) that are transformed to Gaussians using the method in \cite{NetMix}. 

For the edge-dense family $\mathcal{E}_{G,\delta}$ with edge density $\delta=0.7$, we find that our GMM-based estimator $\widehat{A}_{\text{GMM}}$ is much smaller than the MLE $\mle{\mathcal{E}_{G,\delta}}$ ($|\widehat{A}_{\text{GMM}}| = 10$ versus $|\mle{\mathcal{E}_{G,\delta}}| = 600$) but with higher average score 
($4.4$ for our estimator versus $0.4$ for the MLE). 
While there is 
no ground-truth anomaly in this dataset, our results show that our estimator $\widehat{A}_{\text{GMM}}$ yields a smaller anomaly but with higher average values than the MLE $\mle{\mathcal{E}_{G,\delta}}$, which also suggests that the MLE $\mle{\mathcal{E}_{G,\delta}}$ for the edge-dense family $\mathcal{E}_{G,\delta}$ is biased.

\subsection{Additional Details for NYC Breast Cancer}

In the NYC breast cancer incidence data \cite{Boscoe2016}, we are given observed disease counts $\mathbf{C}=\{C_1, \dots, C_n\}$ and expected disease counts $\mathbf{B}=\{B_1, \dots, B_n\}$ for each census block $i\in [n]$.  As is standard in the disease surveillance and spatial scan statistic literature, \cite{Kulldorff1997,Glaz2010,Neill2009,Neill2012},  we model the distributed of the observed counts $\mathbf{C}$ as
\begin{equation}
\label{eq:poisson_scan_stat}
    C_i \sim 
    \begin{cases}
    \text{Pois}(q_{\text{in}} B_i) & \text{if } i \in A,\\
    \text{Pois}(B_i) & \text{otherwise},
    \end{cases}
\end{equation}
where $A \in \mathcal{S}$ is the anomaly, $\mathcal{S}$ is the anomaly family, and $q_{\text{in}}$ is the \emph{relative risk} of census blocks $i \in A$ in the anomaly $A$.  

The MLE $\widehat{A}_{\mathcal{S}}$ for the anomaly $A$ given the observed counts $\mathbf{C}$ and the expected counts $\mathbf{B}$ --- also known as the expectation-based Poisson scan statistic \cite{Neill2009} or the graph scan statistic \cite{Cadena2019} --- is given by
\begin{equation}
\widehat{A}_{\mathcal{S}} = \argmax_{A \in \mathcal{S}} \left[ \sum_{i \in A} (B_i) + \left( \sum_{i \in A} C_i \right) \cdot \left( -1 + \log\sum_{i \in A} C_i - \log\sum_{i \in A} B_i \right) \right].
\end{equation}

\begin{figure}[t]
    \centering
    \includegraphics[width=0.48\linewidth]{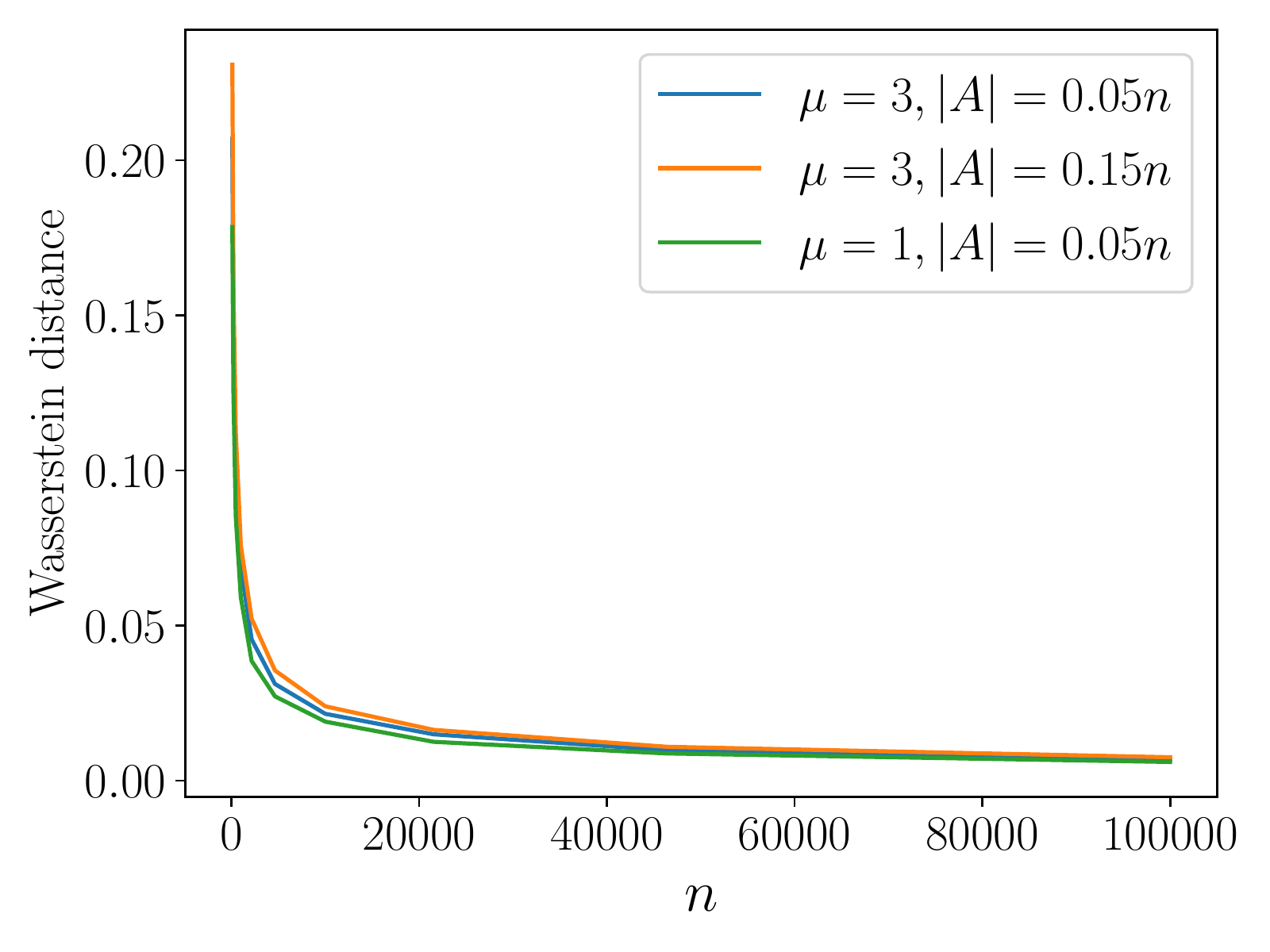}
    \includegraphics[width=0.48\linewidth]{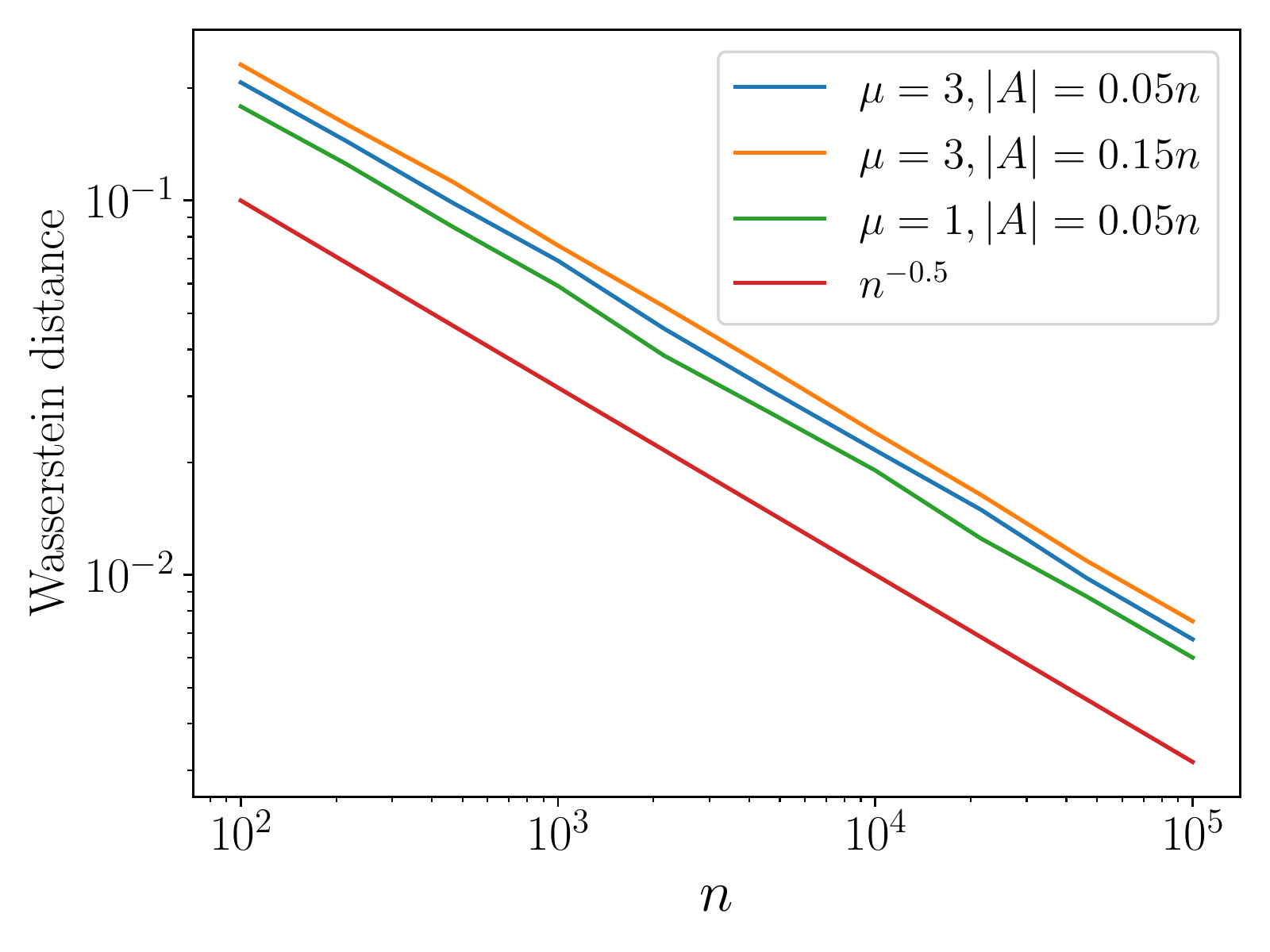}
    \caption{$1$-Wasserstein distance between the GMM distribution and the unstructured ASD distribution.  $\mathbf{X} = (X_1, \dots, X_n) \iid \gmm(\mu, \alpha)$ is distributed according to the GMM 
    and $\mathbf{Y} = (Y_1, \dots, Y_n) \sim \probdist_{\mathcal{P}_n}(A,\mu)$ is distributed according to the unstructured ASD, with $\alpha =  |A| / n$. \textbf{Left:} $d_W\left( \frac{1}{n} \sum_{i=1}^n 1_{X_i}, \frac{1}{n} \sum_{i=1}^n 1_{Y_i} \right)$, the $1$-Wasserstein distance between the empirical distributions of the GMM and unstructured ASD, versus the number $n$ of observations for various values of $\mu$ and $|A|/n$. \textbf{Right:} 
    $1$-Wasserstein distance on log-log scale.
    We observe that the $1$-Wasserstein distance $d_W\left( \frac{1}{n} \sum_{i=1}^n 1_{X_i}, \frac{1}{n} \sum_{i=1}^n 1_{Y_i} \right)$ is  $O(n^{-0.5})$, as each line is parallel to $n^{-0.5}$ in the log-log plot.}
    \label{fig:wasserstein}
\end{figure}

We adapt our estimator to the disease count model in Equation \eqref{eq:poisson_scan_stat} by using the EM algorithm to fit the observed counts $\mathbf{C}$ to the Poisson mixture $C_i \sim \alpha \cdot \text{Pois}(q_{\text{in}} B_i) + (1-\alpha) \cdot \text{Pois}(B_i)$.  The rest of our estimator is unchanged: we use the observed count fit to compute the responsibilities $\widehat{r}_i = P(i \in A \mid C_i, B_i)$ for each census block $i \in [n]$ and then estimate the anomaly using Equation \eqref{eq:anomaly_mix}.

\section{Wasserstein Distance between GMM and Unstructured ASD}
\label{sec:append:wasserstein}
Let $\mathbf{X} = (X_1, \dots, X_n)$ with $X_i \iid \gmm(\mu, \alpha)$ distributed according to the GMM and let $\mathbf{Y} = (Y_1, \dots, Y_n) \sim \probdist_{\mathcal{P}_n}(A,\mu)$ be distributed according to the unstructured ASD, with $\alpha =  |A| / n$.
We empirically observe that $d_W\left( \frac{1}{n} \sum_{i=1}^n 1_{X_i}, \frac{1}{n} \sum_{i=1}^n 1_{Y_i} \right) = O(n^{-0.5})$, where $d_W$ is the $1$-Wasserstein distance, also known as the earth mover's distance (Figure \ref{fig:wasserstein}). We note that our empirical observation 
matches the result that the
Wasserstein distance between the normal distribution $N(\mu, \sigma)$ and the empirical distribution of $n$ samples from $N(\mu, \sigma)$ is also $O(n^{-0.5})$ \cite{Rippl2016,Weed2019}.

\section{Regularized MLE for Submatrix ASD}
\label{sec:append:submat_regularized}

For the submatrix family $\mathcal{M}_N$, \cite{Liu2019} show that a regularized version of the MLE is asymptotically unbiased.
Specifically, for a submatrix $M \in \mathbb{R}^{p \times q}$ of a matrix $N \in \mathbb{R}^{m \times m}$, they define the regularized scan statistic function $\Gamma_{\text{R}}(M) = \Gamma(M) - \sqrt{2\log\left(m^2 \binom{m}{p} \binom{m}{q} \right)}$ and the regularized MLE $\widehat{A}_{\text{R}} = \displaystyle\argmax_{M \in \mathcal{M}_N} \Gamma_{\text{R}}(M)$. \cite{Liu2019} then show that $\widehat{A}_{\text{R}}$ is asymptotically unbiased.

However, our proof of Theorem \ref{thm:subexp} shows that the MLE $\mle{\mathcal{M}_N}$ for the submatrix ASD, which does not use the above regularization, is also asymptotically unbiased. Thus, the regularization is not required. Empirically, we find that
that the MLE $\mle{\mathcal{M}_N}$ and the regularized MLE $\widehat{A}_{\text{R}}$ have similar bias and similar $F$-measure to the anomaly (Figure \ref{fig:submat_mle}), suggesting that the regularization proposed by \cite{Liu2019} is not necessary to reduce bias or increase performance in anomaly estimation.

\begin{figure}[t]
    \centering
        \includegraphics[width=0.48\linewidth]{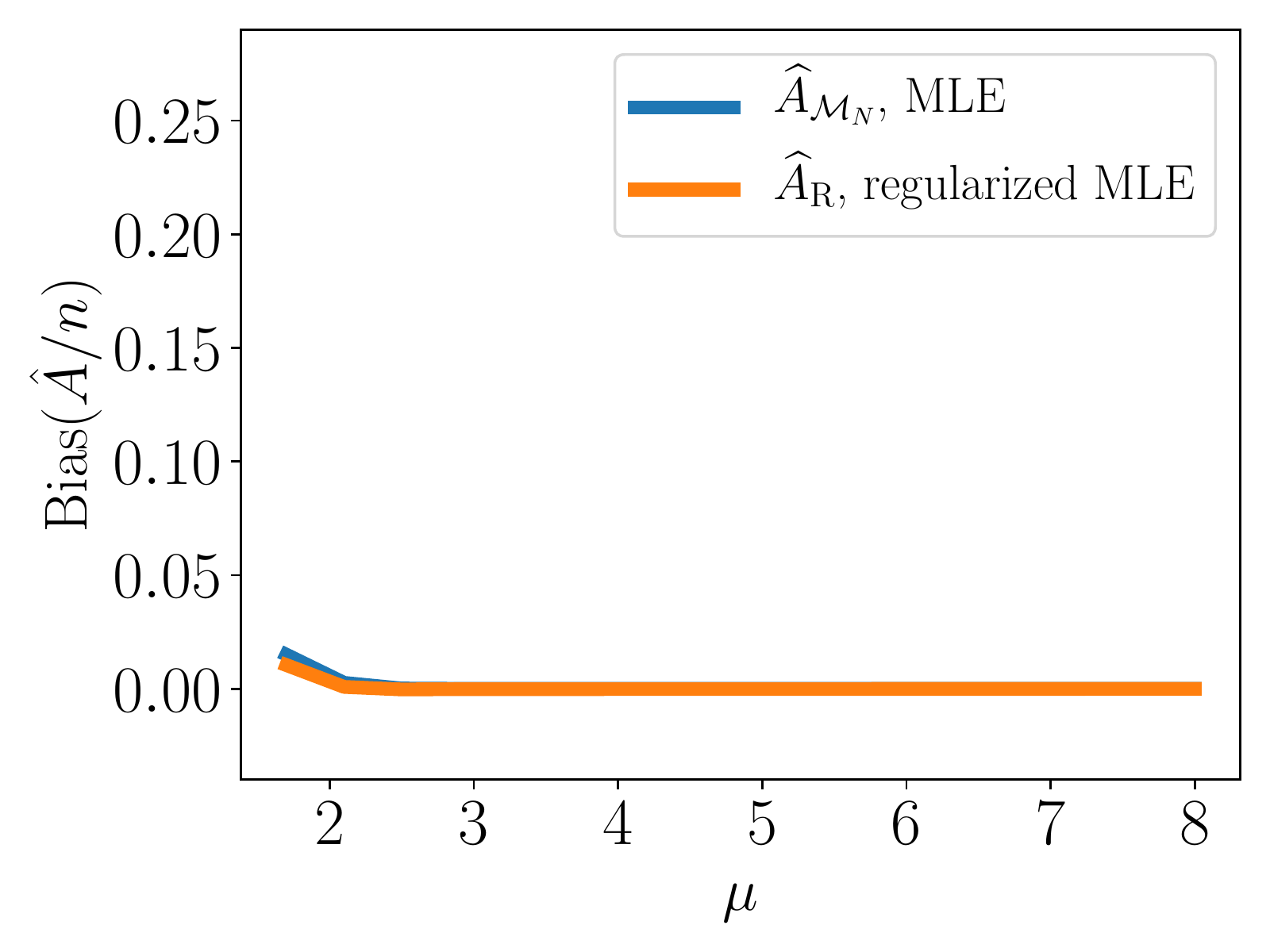}
    \includegraphics[width=0.48\linewidth]{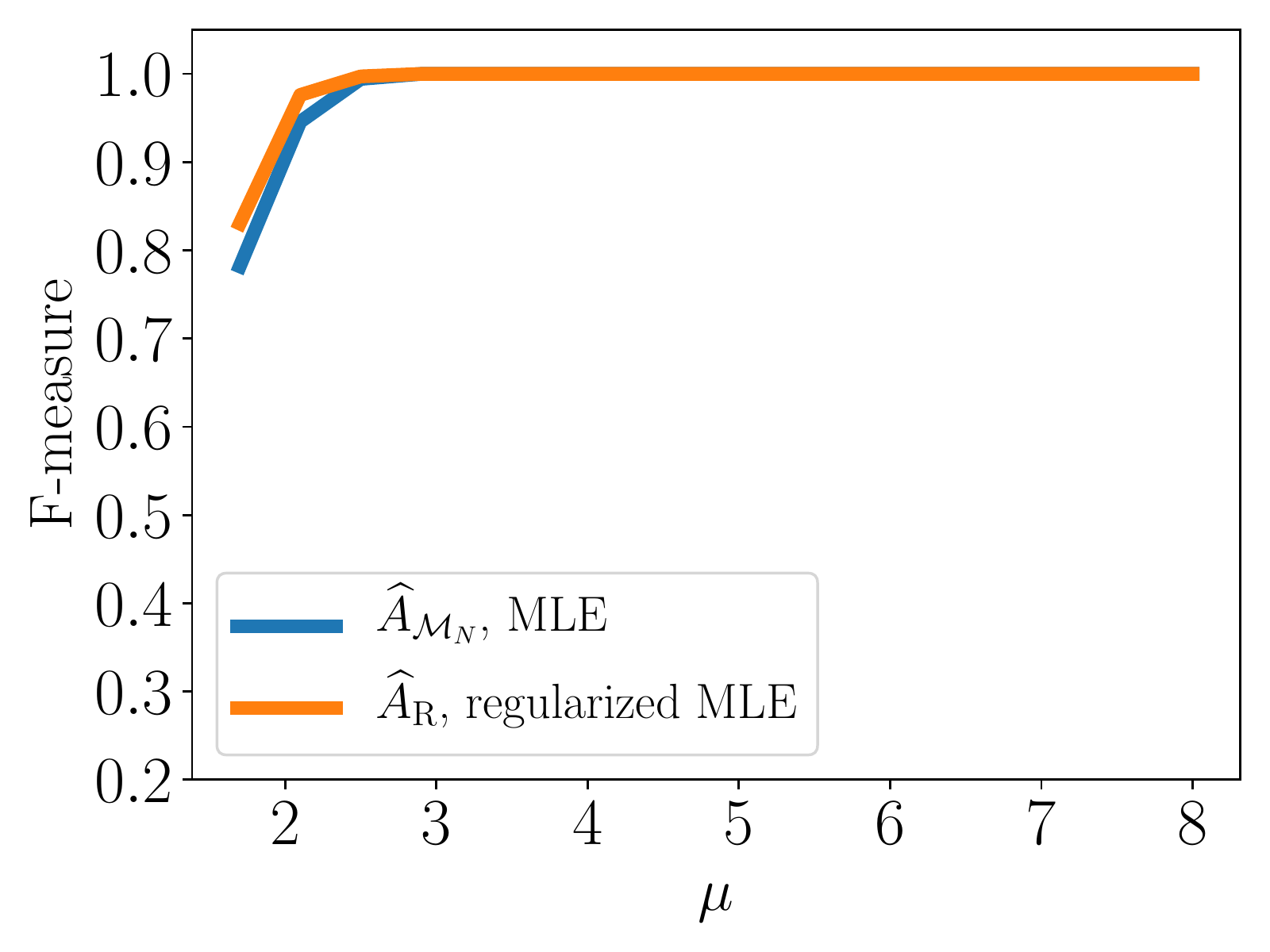}
    \caption{$\mathbf{X} \sim \probdist_{\mathcal{M}_N}(A, \mu)$ is distributed according to the submatrix ASD, where $N \in \mathbb{R}^{30\times 30}$ is a $30\times 30$ matrix. \textbf{Left: } $\text{Bias}(|\mle{\mathcal{M}_N}| / n)$ and $\text{Bias}(|\widehat{A}_{\text{R}}| / n)$ versus $\mu$ for means $\mu \geq \mu_{\text{detect}}$. \textbf{Right: } $F$-measure of $\mle{\mathcal{M}_N}$ and $\widehat{A}_{\text{R}}$ versus $\mu$ for means $\mu \geq \mu_{\text{detect}}$.}
    \label{fig:submat_mle}
\end{figure}

\section{Approximating the GMM Estimator for the Submatrix Family and the Connected Family}
\label{sec:append:gmm_approx}


For the submatrix family $\mathcal{S} = \mathcal{M}_N$ and the connected family $\mathcal{S} = \mathcal{C}_G$, our GMM estimator
\begin{equation}
\label{eq:anomaly_mix_supp}
\widehat{A}_{\text{GMM}} = \argmax_{\substack{S \in \mathcal{S} \\  \big| |S| - \widehat{\alpha}_{\text{GMM}}  \big| \leq \sqrt{\frac{\log{n}}{n}}  }}  \left( \sum_{i \in S}  \widehat{r}_i \right)
\end{equation}
can be inefficient to compute because of the constraint on the size $|S|$ of the subset $S$. In our experiments, we relax this constraint by computing the following approximation $\widetilde{A}_{\text{GMM}}$ of our GMM estimator: 
\begin{equation}
\widetilde{A}_{\text{GMM}} = \argmax_{S \in \mathcal{S}} \sum_{i \in S} (\widehat{r}_i - \tau).
\end{equation}
Here, $\tau> 0$ is a positive number that we use to ``shift" the estimated responsibilities $\widehat{r}_i$ to $\widehat{r}_i - \tau$. 
We select $\tau > 0$ so that the number $T$ of positive ``shifted" responsibilities $\widehat{r}_i - \tau$ satisfies $\big| T - \widehat{\alpha}_{\text{GMM}}  \big| \leq \sqrt{\frac{\log{n}}{n}}$. That is, $\tau$ is chosen so that $\{i : \widehat{r}_i - \tau > 0\} = T$, where $T$ satisfies $\big| T - \widehat{\alpha}_{\text{GMM}} n \big| \leq \sqrt{\frac{\log{n}}{n}}$.
 Because the number of positive shifted responsibilities is $T$, we expect our approximate estimator $\widetilde{A}_{\text{GMM}}$ to have size  $|\widetilde{A}_{\text{GMM}}|  \approx T \approx \widehat{\alpha}_{\text{GMM}} n$.

\section{Proof of Theorem \ref{thm:subexp}}
\label{sec:append:thm3_proof}

%


\subsection{Preliminary Lemmas}

We first prove the following technical lemmas.

\begin{lem}
\label{lem:whp_intersect_lem}
Let $\{A_n\}_{n=1, 2, \dots}$ and $\{B_n\}_{n=1, 2, \dots}$ be two sequences of events in the same probability space. Suppose $\displaystyle\lim_{n\to\infty} P(A_n) = 1$ and $\displaystyle\lim_{n\to\infty} P(B_n) = 1$. Then $\displaystyle\lim_{n\to\infty} P(A_n \cap B_n) = 1$.
\end{lem}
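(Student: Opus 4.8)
The plan is to reduce the statement about the intersection $A_n \cap B_n$ to a statement about the union of complements, and then apply the union bound together with the sandwich (squeeze) theorem. The key observation is that $A_n \cap B_n$ fails exactly when at least one of $A_n$ or $B_n$ fails, so I would pass to the complementary events $A_n^c$ and $B_n^c$, whose probabilities tend to $0$ by hypothesis.

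First I would use De Morgan's law to write $P(A_n \cap B_n) = 1 - P(A_n^c \cup B_n^c)$. Next, by the union bound, which requires no independence or any other assumption, I would bound $P(A_n^c \cup B_n^c) \leq P(A_n^c) + P(B_n^c) = (1 - P(A_n)) + (1 - P(B_n))$. Combining these two steps gives the lower bound
\[
P(A_n \cap B_n) \geq 1 - (1 - P(A_n)) - (1 - P(B_n)) = P(A_n) + P(B_n) - 1.
\]
Since $P(A_n) \to 1$ and $P(B_n) \to 1$ by assumption, the right-hand side converges to $1$. Together with the trivial upper bound $P(A_n \cap B_n) \leq 1$, the squeeze theorem then yields $\lim_{n\to\infty} P(A_n \cap B_n) = 1$.

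There is essentially no genuine obstacle here; the only point worth emphasizing is that the argument holds for arbitrary, possibly dependent, sequences of events, precisely because the union bound does not rely on independence. This is exactly the feature that makes the lemma convenient in the proof of Theorem \ref{thm:subexp}, where the two high-probability events (e.g.\ $\{A \subseteq \mleS\}$ and a concentration event for the scan statistic) are not assumed to be independent.
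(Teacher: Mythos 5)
Your proof is correct and essentially identical to the paper's: both arguments reduce to the same key inequality $P(A_n \cap B_n) \geq P(A_n) + P(B_n) - 1$ (the paper derives it via inclusion-exclusion and $P(A_n \cup B_n) \leq 1$, you via De Morgan and the union bound on complements, which are equivalent manipulations) and then conclude by taking limits and squeezing against the trivial upper bound of $1$.
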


\begin{proof}
Let $p_n = P(A_n)$ and $q_n = P(B_n)$. Then
\begin{equation}
P(A_n \cap B_n) = P(A_n) + P(B_n) - P(A_n \cup B_n) = p_n + q_n - P(A_n \cup B_n) \geq p_n + q_n - 1 ,
\end{equation}
where in the last inequality we use that $P(A_n \cup B_n) \leq 1$. Thus,
\begin{equation*}
\lim_{n\to\infty} P(A_n \cap B_n) \geq \lim_{n\to\infty} (p_n+q_n - 1) = \left( \lim_{n\to\infty} p_n \right) +\left( \lim_{n\to\infty} q_n \right )  - 1= 1. 
\end{equation*}
Since $\displaystyle\lim_{n\to\infty} P(A_n \cap B_n) \leq 1$ by definition, it follows that $\displaystyle\lim_{n\to\infty} P(A_n \cap B_n) = 1$
\end{proof}

\begin{lem}
\label{lem:whp_exp_bound}
Let $X_1, X_2, \dots$ be a sequence of random variables with $X_n < 1$ for all $n$. 
If $\displaystyle\lim_{n\to\infty} P(X_n > C) = 0$ for some $C > 0$, then $E[X_n] < 2C$ for sufficiently large $n$.
\end{lem}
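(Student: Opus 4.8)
The plan is to control $E[X_n]$ by truncating at the threshold $C$, handling separately the "typical" event $\{X_n \le C\}$ and the vanishing-probability "tail" event $\{X_n > C\}$. (Note first that since $X_n < 1$, the positive part of $X_n$ is bounded, so $E[X_n]$ is well-defined in $[-\infty, \infty)$ and the upper bound we seek is meaningful regardless of the lower tail of $X_n$.)

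First I would split the expectation as
\[
E[X_n] = E\big[X_n \, 1(X_n \le C)\big] + E\big[X_n \, 1(X_n > C)\big].
\]
On the event $\{X_n \le C\}$ the summand satisfies $X_n \, 1(X_n \le C) \le C$ pointwise: when the indicator is $1$ the product equals $X_n \le C$, and when it is $0$ the product is $0 \le C$ (using $C > 0$). Hence the first term is at most $C$. On the event $\{X_n > C\}$ I would invoke the hypothesis $X_n < 1$ to bound the integrand pointwise by $1(X_n > C)$, so the second term is at most $P(X_n > C)$. Combining the two bounds gives $E[X_n] \le C + P(X_n > C)$.

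The final step is to use the hypothesis $\lim_{n\to\infty} P(X_n > C) = 0$. Since $C > 0$ is a fixed positive constant, there exists $N$ such that $P(X_n > C) < C$ for all $n > N$, and therefore $E[X_n] < C + C = 2C$ for all sufficiently large $n$, as claimed.

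There is no genuine analytic obstacle here; the entire content of the lemma is the truncation idea, and once the decomposition above is written down the proof is essentially one line. The one point worth emphasizing is \emph{why} each hypothesis is needed: the universal ceiling $X_n < 1$ is exactly what keeps the rare event $\{X_n > C\}$ from contributing an unbounded amount to the expectation (without an a priori upper bound, a probability tending to $0$ could still carry arbitrarily large mass), and the positivity of $C$ is what lets the vanishing tail probability eventually be dominated by the constant $C$.
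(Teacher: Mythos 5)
Your proof is correct and takes essentially the same truncation-at-$C$ approach as the paper: bound the contribution of the event $\{X_n \le C\}$ by a constant, use the ceiling $X_n < 1$ to bound the contribution of $\{X_n > C\}$ by $P(X_n > C)$, and let the tail probability vanish. The only difference is cosmetic --- by bounding the typical part by $C$ rather than $C \cdot P(X_n \le C)$ you obtain the uniform estimate $E[X_n] \le C + P(X_n > C)$ and thereby avoid the paper's case split between $C \ge 1$ and $C \in (0,1)$; you also make explicit that $E[X_n]$ is well-defined despite the lack of a lower bound on $X_n$, a point the paper leaves implicit.
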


\begin{proof}

We have two cases depending on the value of $C$. First, suppose $C \geq 1$. Then $X_n < 1 < C$ for all $n$, and it follows that $E[X_n] < C < 2C$. 

Next, suppose $C \in (0,1)$. Let $n$ be sufficiently large so that $P(X_n> C) < \frac{C}{1-C}$. Then
\begin{equation*}
E[X_n] \leq C \cdot P(X_n \leq C) + 1 \cdot P(X_n > C) \leq C\cdot \left( 1 - \frac{C}{1-C} \right) + \frac{C}{1-C} = 2C. \qedhere 
\end{equation*}
\end{proof}

\begin{lem}
\label{lem:xa_conc_bound}
Let $X_n \sim N(\mu_n, \sigma_n)$, with $\mu_n, \sigma_n \to \infty$ as $n \to \infty$. Then
\begin{equation}
\lim_{n\to\infty} P\left( \mu_n - \sqrt{2\sigma_n \log{n}} \leq X_n \leq \mu_n + \sqrt{2\sigma_n\log{n}} \right) = 1.
\end{equation}
\end{lem}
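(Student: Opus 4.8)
The plan is to reduce the statement to a standard one-dimensional Gaussian tail bound after standardizing. First I would record the scaling convention: here $\sigma_n$ denotes the \emph{variance} of $X_n$ (this is the reading under which the claimed bound can hold), so that $Z_n := (X_n - \mu_n)/\sqrt{\sigma_n}$ is a standard normal $N(0,1)$ for every $n$. Under this change of variable the event in the lemma becomes
\begin{equation*}
\left\{ \mu_n - \sqrt{2\sigma_n \log n} \leq X_n \leq \mu_n + \sqrt{2\sigma_n \log n} \right\} = \left\{ |Z_n| \leq \sqrt{2\log n} \right\},
\end{equation*}
and crucially the threshold $\sqrt{2\log n}$ no longer depends on $\sigma_n$ or $\mu_n$. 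Hence it suffices to prove that $P(|Z| > \sqrt{2\log n}) \to 0$ for a single standard normal $Z \sim N(0,1)$.

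Second, I would bound the one-sided tail $P(Z > t)$ for $t > 0$ using a standard Gaussian tail inequality, for instance the Chernoff bound $P(Z > t) \leq \tfrac{1}{2} e^{-t^2/2}$ (or the sharper Mills-ratio estimate $P(Z > t) \leq \tfrac{1}{t\sqrt{2\pi}} e^{-t^2/2}$). By symmetry of the standard normal distribution, $P(|Z| > t) \leq 2\, P(Z > t) \leq e^{-t^2/2}$.

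Third, I would substitute $t = \sqrt{2\log n}$, giving $e^{-t^2/2} = e^{-\log n} = 1/n$. Therefore $P(|Z| > \sqrt{2\log n}) \leq 1/n \to 0$ as $n \to \infty$, and so the complementary probability, which is exactly the probability appearing in the lemma's statement, tends to $1$.

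I expect no genuine obstacle here; the only point requiring care is the scaling convention, namely treating $\sigma_n$ as the variance rather than the standard deviation, since under the alternative reading the standardized threshold would be $\sqrt{2\log n / \sigma_n}$ and the claimed concentration could fail whenever $\sigma_n$ grows faster than $\log n$. Once the correct standardization is in place, the conclusion follows from a single Gaussian tail estimate and does not even use the hypotheses $\mu_n, \sigma_n \to \infty$; these conditions are inherited from the context in which the lemma is later applied rather than needed for the bound itself.
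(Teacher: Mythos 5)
Your proof is correct and follows essentially the same route as the paper's: standardize to $Z \sim N(0,1)$, apply a Gaussian tail bound, and use symmetry, with the threshold $\sqrt{2\log n}$ yielding a $O(1/n)$ failure probability (the paper uses the Mills-ratio bound $P(Z \geq x) \leq \tfrac{1}{\sqrt{2\pi}\,x}e^{-x^2/2}$ where you use the Chernoff bound, an immaterial difference). Your reading of $\sigma_n$ as the variance is indeed the convention the paper uses implicitly (e.g.\ it writes $\sum_{v\in A} X_v \sim N(\mu\alpha n, \alpha n)$), and your observation that the hypotheses $\mu_n, \sigma_n \to \infty$ are never actually used is also accurate.
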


\begin{proof}
We have
\begin{equation}
\begin{split}
P(X_n > \mu_n + \sqrt{2\sigma_n \log{n}}) 
&= P(Z > \sqrt{2\log{n}}), \text{ where $Z \sim N(0,1)$},  \\
&\leq \frac{1}{\sqrt{2\pi}} \cdot \frac{1}{\sqrt{2\log{n}}} \cdot \frac{1}{n} = O\left ( \frac{1}{n} \right ),
\end{split}
\end{equation}
where in the last inequality we use the standard bound $P(Z \geq x) \leq \frac{1}{\sqrt{2\pi}} \frac{1}{x} e^{-x^2/2}$. By symmetry, we have
\begin{equation}
P(X_n < \mu_n - \sqrt{2\sigma_n \log{n}}) \leq O\left ( \frac{1}{n} \right )
\end{equation}
Thus,
\begin{equation*}
P\left (\mu_n - \sqrt{2\sigma_n \log{n}} \leq X_n \leq \mu_n + \sqrt{2\sigma_n\log{n}}\right ) > 1 - O\left ( \frac{1}{n} \right ).
\end{equation*}
Taking the limit as $n\to\infty$ proves the result.
\end{proof}

\begin{lem}
\label{lem:gaussian_lem_size}
Suppose $X_v \iid N(0,1)$ for $v=1, \dots, n$.
Let $\mathcal{S} \subseteq \mathcal{P}_n$ be a family of subsets of $[n]$ with size $|\mathcal{S}| = \Omega(n)$. For any $k \in [n]$ define $\mathcal{S}_k = \{B \in \mathcal{S} : |B| = k\}$ and $Y_k = \displaystyle\max_{B \in \mathcal{S}_k} \left ( \sum_{v \in B} X_v \right )$. 
 Then,
\begin{equation}
    \lim_{n\to\infty} P\left(Y_{k} \leq \sqrt{2n\log{|\mathcal{S}|}} \; \text{ for all } \; k =1, \dots, \frac{n}{2} \right) = 1
\end{equation}
\end{lem}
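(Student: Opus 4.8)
The plan is to prove the complementary statement: with probability tending to $0$, some set $B \in \mathcal{S}$ of size at most $n/2$ has $\sum_{v \in B} X_v$ exceeding the threshold $t = \sqrt{2n\log|\mathcal{S}|}$. This reduces the whole claim to a single Gaussian tail estimate followed by a union bound over the at most $|\mathcal{S}|$ relevant sets.

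First I would fix a single $B \in \mathcal{S}$ with $|B| = k$. Since the $X_v$ are i.i.d.\ $N(0,1)$, the sum $\sum_{v\in B} X_v$ is distributed as $N(0,k)$, so it equals $\sqrt{k}\,Z$ for a standard normal $Z$. Using the elementary Chernoff tail bound $P(Z \geq x) \leq e^{-x^2/2}$ (the same bound underlying Lemma~\ref{lem:xa_conc_bound}), I compute
\[
P\Big(\sum_{v\in B} X_v > t\Big) = P\Big(Z > \sqrt{\tfrac{2n}{k}\log|\mathcal{S}|}\Big) \leq \exp\Big(-\tfrac{n}{k}\log|\mathcal{S}|\Big) = |\mathcal{S}|^{-n/k}.
\]
The crucial observation is that we only range over $k \leq n/2$, so $n/k \geq 2$ and each such set contributes at most $|\mathcal{S}|^{-2}$. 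Next I would take a union bound: the event $\{Y_k > t \text{ for some } k \leq n/2\}$ is exactly the event that $\sum_{v\in B} X_v > t$ for some nonempty $B \in \mathcal{S}$ with $|B| \leq n/2$, and there are at most $|\mathcal{S}|$ such sets (indices $k$ with $\mathcal{S}_k = \varnothing$ contribute nothing, with the convention $Y_k = -\infty$). Hence
\[
P\Big(\exists\, k \leq \tfrac{n}{2}:\, Y_k > t\Big) \leq |\mathcal{S}| \cdot |\mathcal{S}|^{-2} = |\mathcal{S}|^{-1}.
\]
Since $|\mathcal{S}| = \Omega(n) \to \infty$, this failure probability tends to $0$, and taking complements gives that $Y_k \leq \sqrt{2n\log|\mathcal{S}|}$ for all $k \leq n/2$ with probability tending to $1$.

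There is no substantive obstacle here; the argument is essentially a design verification. The key point is that the threshold $\sqrt{2n\log|\mathcal{S}|}$ is calibrated precisely so that restricting to $k \leq n/2$ (hence $n/k \geq 2$) gains an extra factor of $|\mathcal{S}|^{-1}$ in the per-set tail, which is exactly what is needed to absorb the $|\mathcal{S}|$ terms of the union bound and still leave a vanishing $|\mathcal{S}|^{-1}$. The only points requiring mild care are the bookkeeping for possibly empty $\mathcal{S}_k$, and noting that the number of relevant sets is bounded by $|\mathcal{S}|$ itself (rather than by a binomial coefficient), both of which are immediate from the definitions.
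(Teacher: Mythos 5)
Your proof is correct, and it reaches the conclusion by a genuinely different organization of the union bound that is both cleaner and quantitatively stronger than the paper's. The paper first fixes $k$, union-bounds over $\mathcal{S}_k$ while bounding $|\mathcal{S}_k| \leq |\mathcal{S}|$, and then union-bounds again over the $n/2$ values of $k$; this double counting costs a factor of roughly $\tfrac{n}{2}|\mathcal{S}|$, and to absorb it the paper must invoke the sharper Mills-ratio tail $1-\Phi(x) \leq \tfrac{1}{\sqrt{2\pi}\,x}e^{-x^2/2}$ (whose $1/x$ prefactor supplies the needed $1/\sqrt{\log|\mathcal{S}|}$) together with the hypothesis $|\mathcal{S}| = \Omega(n)$, ending with a failure probability of order $n/\bigl(|\mathcal{S}|\sqrt{\log|\mathcal{S}|}\bigr) = O(1/\sqrt{\log n})$. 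You instead observe that the union is really over the at most $|\mathcal{S}|$ sets $B \in \mathcal{S}$ with $1 \leq |B| \leq n/2$, so a single union bound with the cruder sub-Gaussian tail $P(Z \geq x) \leq e^{-x^2/2}$ already yields $|\mathcal{S}| \cdot |\mathcal{S}|^{-2} = |\mathcal{S}|^{-1} = O(1/n)$. Both arguments hinge on the same calibration --- $k \leq n/2$ forces $n/k \geq 2$, so each individual set's tail is at most $|\mathcal{S}|^{-2}$ --- but your accounting is lossless where the paper's is not: you obtain a polynomially small failure probability instead of $O(1/\sqrt{\log n})$, you need only the elementary Chernoff bound, and you use $|\mathcal{S}| = \Omega(n)$ only in the final step (indeed $|\mathcal{S}| \to \infty$ would suffice). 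Your explicit convention $Y_k = -\infty$ for empty $\mathcal{S}_k$ is also a correct handling of a detail the paper leaves implicit.
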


\begin{proof}
Let $t = \sqrt{2n\log{|\mathcal{S}|}}$ and let $\Phi$ be the CDF of the standard normal distribution. Fix $k \in \left \{1, \dots, \frac{n}{2}\right \}$. We have
\begin{equation}
\begin{split}
P(Y_k > t) &= P\left (\max_{B \in \mathcal{S}_{k}} \sum_{v \in B} X_v > t\right ) \\
&\leq \sum_{B \in \mathcal{S}_{k}} P\left (\sum_{v \in B} X_v > t\right ) \\
&= |\mathcal{S}_{k}| \cdot (1-\Phi(t/\sqrt{k})) \\
&\leq |\mathcal{S}| \cdot (1-\Phi(t/\sqrt{k})).
\end{split}
\end{equation}
where the first inequality uses a union bound and the second equality uses that $\sum_{v \in B} X_v \sim N(0, k)$. Plugging in the standard bound $1-\Phi(x) \leq \frac{1}{\sqrt{2\pi}} \frac{1}{x} e^{-x^2/2}$ gives us:
\begin{equation}
\label{eq:lem_k_indiv}
\begin{split}
P\left (\max_{B \in \mathcal{S}_{k}} \sum_{v \in B} X_v > t\right ) &\leq |\mathcal{S}| \cdot (1-\Phi(t/\sqrt{k})) \\
&\leq |\mathcal{S}| \cdot \frac{1}{\sqrt{2\pi}} \frac{\sqrt{k}}{t} e^{-t^2/2k} \\
&= |\mathcal{S}| \cdot \frac{1}{\sqrt{2\pi}}  \cdot \sqrt{\frac{k}{2n\cdot \log |\mathcal{S}|}} \cdot e^{-\frac{n\cdot\log |\mathcal{S}|}{k}} \\
&= \left( \sqrt{ \frac{k}{4\pi n} } \right) \cdot \frac{1}{\sqrt{\log|\mathcal{S}|}} \cdot |\mathcal{S}|^{1 - \frac{n}{k}} \\
&\leq \left( \sqrt{ \frac{1}{4\pi} } \right) \cdot \frac{1}{|\mathcal{S}| \cdot \sqrt{\log|\mathcal{S}|}}, \text{ since $k \leq \frac{n}{2}$}.
\end{split}
\end{equation}
Taking a union bound over all $k=1, \dots, \frac{n}{2}$ gives us
\begin{equation}
\begin{split}
P\left (\max_{B \in \mathcal{S}_{k}} \sum_{v \in B} X_v > t \; \text{ for any }\; k = 1, \dots, \frac{n}{2} \right ) &\leq \sum_{k=1}^{n/2} P\left (\max_{B \in \mathcal{S}_{k}} \sum_{v \in B} X_v > t \right ) \\ 
&\leq \frac{n}{2} \cdot \left( \sqrt{ \frac{1}{4\pi} } \right) \cdot \frac{1}{|\mathcal{S}| \cdot \sqrt{\log|\mathcal{S}|}}, \text{ by Equation \eqref{eq:lem_k_indiv}}, \\
&= \left( \sqrt{ \frac{1}{16\pi} } \right) \cdot \frac{n}{|\mathcal{S}| \cdot \sqrt{\log|\mathcal{S}|}} \\
&= O\left( \frac{1}{\sqrt{\log{n}}} \right) \text{ as $|\mathcal{S}| = \Omega(n)$}.
\end{split}
\end{equation}

It follows that for sufficiently large $n$, there exists a constant $C > 0$ such that 
\begin{equation}
\label{eq:lem_k_combined}
P\left (\max_{B \in \mathcal{S}_{k}} \sum_{v \in B} X_v > t \; \text{ for any }\; k = 1, \dots, \frac{n}{2} \right ) \leq \frac{C}{\sqrt{\log{n}}},
\end{equation}
so that
\begin{equation}
\begin{split}
\lim_{n\to\infty} P\left(Y_{k} \leq \sqrt{2n\log{|\mathcal{S}|}} \; \text{ for all } \; k = 1, \dots, \frac{n}{2} \right) &= 1 - \lim_{n\to\infty} P\left (Y_{k} > \sqrt{2n\log{|\mathcal{S}|}} \; \text{ for any } \; k = 1, \dots, \frac{n}{2} \right ) \\
&\geq 1 - \lim_{n\to\infty} \left( \frac{C}{\sqrt{\log{n}}} \right), \text{ by Equation \eqref{eq:lem_k_combined}}, \\
&= 1,
\end{split}
\end{equation}
proving the result.
%
%
\end{proof}

\begin{lem}
\label{lem:mle_size_n_2}
Let $\mathbf{X} \sim \probdist_{\mathcal{S}}(A, \mu)$ where $|A|=\alpha n$ for $0 < \alpha < 0.5$. Then $\displaystyle\lim_{n\to\infty} P(|\mleS| \leq 0.5n) = 1$.
\end{lem}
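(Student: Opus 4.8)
The plan is to show that the anomaly $A$ itself---which lies in $\mathcal{S}$ and has size $\alpha n < 0.5n$---achieves a larger value of the scan statistic $\Gamma(S) = \frac{1}{\sqrt{|S|}}\sum_{v\in S}X_v$ than every set of size greater than $0.5n$, with probability tending to $1$. Since $\mleS = \argmax_{S\in\mathcal{S}}\Gamma(S)$ by Proposition \ref{thm:glr_mle_asd}, this forces the maximizer to have size at most $0.5n$. First I would establish a high-probability lower bound on $\Gamma(A)$: writing $\sum_{v\in A}X_v\sim N(\alpha n\mu,\alpha n)$ and applying Lemma \ref{lem:xa_conc_bound} with $\mu_n=\alpha n\mu$ and $\sigma_n=\alpha n$, we obtain that with probability tending to $1$,
\begin{equation}
\Gamma(A) = \frac{1}{\sqrt{\alpha n}}\sum_{v\in A}X_v \ge \mu\sqrt{\alpha n} - \sqrt{2\log n}.
\end{equation}

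Next I would upper bound $\Gamma(S)$ uniformly over all $S\in\mathcal{S}$ with $|S| > 0.5n$. Centering the anomalous coordinates via $\tilde X_v = X_v - \mu\cdot 1(v\in A)$, so that $\tilde X_v\iid N(0,1)$, gives
\begin{equation}
\Gamma(S) = \frac{\mu\,|S\cap A|}{\sqrt{|S|}} + \frac{1}{\sqrt{|S|}}\sum_{v\in S}\tilde X_v \le \frac{\mu\alpha n}{\sqrt{|S|}} + \frac{1}{\sqrt{|S|}}\sum_{v\in S}\tilde X_v,
\end{equation}
using $|S\cap A|\le|A|=\alpha n$. The deterministic signal term satisfies $\frac{\mu\alpha n}{\sqrt{|S|}} < \frac{\mu\alpha n}{\sqrt{n/2}} = \mu\alpha\sqrt{2n}$, and because $\alpha < 0.5$ we have $\mu\alpha\sqrt{2} < \mu\sqrt{\alpha}$; hence the leading $\sqrt n$ coefficient of this term is strictly smaller than that of the lower bound on $\Gamma(A)$, leaving a gap of order $\mu\sqrt{\alpha}\,(1-\sqrt{2\alpha})\sqrt n > 0$. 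This is precisely where the hypothesis $\alpha < 0.5$ enters.

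To control the noise term $\frac{1}{\sqrt{|S|}}\sum_{v\in S}\tilde X_v$ uniformly over large sets I would pass to complements, since Lemma \ref{lem:gaussian_lem_size} only directly controls sizes up to $n/2$. Writing $\sum_{v\in S}\tilde X_v = \sum_{v=1}^n\tilde X_v - \sum_{v\notin S}\tilde X_v$, the total $\sum_{v=1}^n\tilde X_v\sim N(0,n)$ concentrates to within $\sqrt{2n\log n}$ of $0$ with probability tending to $1$, while $|[n]\setminus S| < n/2$, so that $\max_{S}\bigl(-\sum_{v\notin S}\tilde X_v\bigr)$ is bounded by $\sqrt{2n\log|\mathcal{S}|}$ via the union-bound argument of Lemma \ref{lem:gaussian_lem_size} applied to the family of complements with data $-\tilde X_v$. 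Dividing by $\sqrt{|S|} > \sqrt{n/2}$ shows the noise contribution is $O(\sqrt{\log|\mathcal{S}|})$ with probability tending to $1$. Finally I would intersect the relevant high-probability events using Lemma \ref{lem:whp_intersect_lem}, concluding that with probability tending to $1$ every $S\in\mathcal{S}$ with $|S| > 0.5n$ has $\Gamma(S) < \Gamma(A)$, whence $|\mleS|\le 0.5n$.

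The main obstacle is the uniform control of $\Gamma(S)$ over the (possibly exponentially many) large sets, rather than the mean comparison, which is elementary. The signal gap is only of order $\mu\sqrt{\alpha}\,(1-\sqrt{2\alpha})\sqrt n$, so the argument needs the uniform fluctuation $O(\sqrt{\log|\mathcal{S}|})$ to be dominated by this gap. This is immediate whenever $\log|\mathcal{S}| = o(n)$, but when $|\mathcal{S}|$ is exponential the two terms are of the same order in $\sqrt n$, and I expect the careful bookkeeping needed to close the comparison in that regime (leveraging $\alpha < 0.5$ to keep the constant in the signal term strictly ahead of the noise constant) to be the delicate part of the proof.
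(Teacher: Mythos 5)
Your proposal follows the same core comparison as the paper's proof---pit $\Gamma(A)$ against every $S \in \mathcal{S}$ with $|S| > 0.5n$, use the hypothesis $\alpha < 0.5$ to open a $\Theta(\mu\sqrt{n})$ gap between the signal terms, and control the noise by Gaussian concentration---but you are more careful than the paper on the one point that actually matters. The paper's proof fixes a \emph{single} $S$ with $|S| > 0.5n$, applies Lemma \ref{lem:xa_conc_bound} to get $\sum_{v\in S} X_v < \mu\alpha n + \sqrt{2|S|\log n}$ with failure probability $O(1/n)$, and then concludes ``the result follows'' as if this bound held simultaneously for all such $S$; no union bound over the (possibly exponentially many) large sets is ever taken, and with a per-set failure probability of only $O(1/n)$ none is possible. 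Your complementation trick---writing $\sum_{v\in S}\tilde X_v = \sum_{v\in[n]}\tilde X_v - \sum_{v\notin S}\tilde X_v$ and applying the union-bound Lemma \ref{lem:gaussian_lem_size} to the family of complements, which have size below $n/2$---is exactly the uniform control the paper's argument is missing, and it yields a complete, correct proof whenever $\log|\mathcal{S}| = o(n)$.

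The obstacle you flag in your final paragraph is therefore not bookkeeping you failed to finish: it is a genuine failure of the lemma (and of the paper's proof) in the exponential regime, and no choice of constants can close it. Concretely, take $\mathcal{S} = \{A\} \cup \{S \supseteq A : |S| = 0.9n\}$ with $\alpha$ close to $0.5$ and $\mu$ a small positive constant. The best set of size $0.9n$ consists of $A$ together with the $(0.9-\alpha)n$ largest noise coordinates, and by the law of large numbers for order statistics its score is
\begin{equation*}
\max_{S \in \mathcal{S},\, |S|=0.9n}\Gamma(S) \;=\; \frac{\mu\alpha n + \Theta(n)}{\sqrt{0.9n}} \;=\; \Theta(\sqrt{n}),
\end{equation*}
with a constant that stays bounded away from zero as $\mu \to 0$, whereas $\Gamma(A) \approx \mu\sqrt{\alpha n}$ has constant $\mu\sqrt{\alpha}$. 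So for small $\mu$ the MLE has size $0.9n$ with probability tending to one and the lemma's conclusion fails, even though all its stated hypotheses hold. The statement can only be rescued by shrinking the family over which uniformity is required: in the paper's actual use of this lemma (Lemma \ref{lem:main_lem} feeding into Theorem \ref{thm:subexp}), one additionally assumes $P(A \subseteq \mleS) \to 1$, and on that event the maximizer lies in $\containA$; running your uniform argument over $\containA$, whose logarithm is $o(n)$ under the sub-exponentiality hypothesis of Theorem \ref{thm:subexp}, proves everything the downstream results need. In short: your proof is correct exactly where it claims to be, the gap you identify is real, and the paper's own proof silently steps over that same gap.
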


\begin{proof}
Let $S \in \mathcal{S}$ be a set with size $|S| > 0.5n$. To prove the claim, it suffices to show that
\begin{equation}
\label{eq:mle_size_ub}
\frac{1}{\sqrt{|A|}}\sum_{v \in A} X_v > \frac{1}{\sqrt{|S|}}\sum_{v \in S} X_v.
\end{equation}
with high probability.

Note that $\frac{1}{\sqrt{|A|}} > \frac{1}{\sqrt{\alpha n}}$ and $\frac{1}{\sqrt{|S|}} \leq \frac{1}{\sqrt{0.5n}}$. Thus,  to prove \eqref{eq:mle_size_ub} it is sufficient to prove that
\begin{equation}
\label{eq:mle_size_ub2}
\frac{1}{\sqrt{0.25n}}\sum_{v \in A} X_v > \frac{1}{\sqrt{0.5n}}\sum_{v \in S} X_v \Longleftrightarrow \sum_{v\in A} X_v > \left( \sqrt{2\alpha} \right) \sum_{v \in S} X_v.
\end{equation}
with high probability. 

By independence of the $X_v$, we have that $\sum_{v \in A} X_v \sim N(\mu\alpha n, \alpha n)$, so by Lemma \ref{lem:xa_conc_bound} it follows that $\sum_{v \in A} X_v > \mu\alpha n - \sqrt{2\alpha n \log{n}}$ with high probability.  Similarly,  we have that $\sum_{v \in S} X_v \sim N(M, \beta n)$ where $M \leq \mu \alpha n$ (as there are at most $|A|=\alpha n$ terms in the sum $\sum_{v \in S} X_v$ with mean $\mu$, and the other terms have mean $0$). Thus, by Lemma \ref{lem:xa_conc_bound}, we also have $\sum_{v \in S} X_v < \mu\alpha n + \sqrt{2 |S| \log{n}}$ with high probability.
Putting together the lower bound on $\sum_{v \in A} X_v$ and the upper bound on $\sum_{v \in S} X_v$,  \eqref{eq:mle_size_ub2} can be reduced to
\begin{equation}
\label{eq:mle_size_ub3}
\begin{split}
\mu\alpha n - \sqrt{2\alpha n \log{n}} > \left( \sqrt{2\alpha} \right)  ( \mu\alpha n + \sqrt{2 |S| \log{n}}) &\Longleftrightarrow \left(1-\sqrt{2\alpha}\right) \mu \alpha n > \sqrt{n\log{n}} \left( \sqrt{\frac{|S|}{\alpha n}} + \sqrt{2\alpha} \right).
\end{split}
\end{equation}

Because $\alpha < 0.5$, the LHS is $\Theta(n)$. Since the RHS is $o(n)$,  then \eqref{eq:mle_size_ub3} holds with high probability. Thus, \eqref{eq:mle_size_ub} also holds with high probability, and the result follows.
\end{proof}

\subsection{Main Lemmas}

\begin{lem}
\label{lem:main_lem}
Let $\mathbf{X} \sim \probdist_{\mathcal{S}}(A, \mu)$ where $|\mathcal{S}| = \Omega(n)$ and $|A| = \alpha n$ with $0 < \alpha < 0.5$. Suppose $\displaystyle\lim_{n\to\infty} P(A \subseteq \mleS) = 1$. Then for sufficiently large $n$, we have
\begin{equation}
\label{eq:lem_bias_A_size}
\textnormal{Bias}\left(\frac{|\mleS|}{n}\right) \leq 2\alpha \left( \left( \frac{\mu\alpha n + \sqrt{2 n\log{|\containA|}} + \sqrt{2\alpha n\log{n}}}{\mu\alpha n - \sqrt{2\alpha n\log{n}} } \right)^2 - 1 \right) + o(1).
\end{equation}
\end{lem}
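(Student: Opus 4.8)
The plan is to condition on the high-probability event that $A\subseteq\mleS$ and to exploit the optimality of the MLE against the competitor $A$ itself. On this event we may write $\mleS = A\cup B$ with $B=\mleS\setminus A\subseteq A^{c}$, and crucially $B$ ranges over the complement family $\mathcal{S}^{*}:=\{S\setminus A : S\in\containA\}$, which is in bijection with $\containA$ and hence has exactly $|\mathcal{S}^{*}|=|\containA|$ members. Because every coordinate of $A^{c}$ is pure $N(0,1)$ noise, this conditioning decouples the ``signal'' sum $V:=\sum_{v\in A}X_{v}$ from the ``noise'' sum $\sum_{v\in B}X_{v}$ over the extra coordinates, and it is precisely through $\mathcal{S}^{*}$ that $|\containA|$ will enter the bound.

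First I would use the MLE inequality $\Gamma(\mleS)\ge\Gamma(A)$, that is $\frac{1}{\sqrt{|\mleS|}}\sum_{v\in\mleS}X_{v}\ge\frac{1}{\sqrt{|A|}}\sum_{v\in A}X_{v}$. Writing $U:=\sum_{v\in\mleS}X_{v}$ and noting $V>0$ with high probability, this rearranges to $\frac{|\mleS|}{n}\le\alpha\,(U/V)^{2}$, so the entire task reduces to an upper bound on $U$ together with a lower bound on $V$.

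Next I would invoke the concentration lemmas. Since $V\sim N(\mu\alpha n,\alpha n)$, Lemma~\ref{lem:xa_conc_bound} gives $\mu\alpha n-\sqrt{2\alpha n\log n}\le V\le\mu\alpha n+\sqrt{2\alpha n\log n}$ with high probability. For the extra mass I would apply Lemma~\ref{lem:gaussian_lem_size} to the family $\mathcal{S}^{*}$ on the noise coordinates $A^{c}$ (using Lemma~\ref{lem:mle_size_n_2} to ensure $|B|\le n/2$ so the relevant sizes are in range), yielding $\sum_{v\in B}X_{v}\le\max_{B'\in\mathcal{S}^{*}}\sum_{v\in B'}X_{v}\le\sqrt{2n\log|\containA|}$ with high probability. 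As $U=V+\sum_{v\in B}X_{v}$, combining the upper bound on $V$ with this gives $U\le\mu\alpha n+\sqrt{2\alpha n\log n}+\sqrt{2n\log|\containA|}$. Intersecting all these events via Lemma~\ref{lem:whp_intersect_lem}, on the resulting good event $G_{n}$ (with $P(G_{n})\to1$) we obtain $\frac{|\mleS|}{n}\le\alpha R_{n}^{2}$, where $R_{n}$ is exactly the ratio appearing in the statement.

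Finally I would convert this high-probability bound into a bound on the expectation, noting $\operatorname{Bias}(|\mleS|/n)=E[\,|\mleS|/n\,]-\alpha=E[W_{n}]$ for $W_{n}:=|\mleS|/n-\alpha$. Here $W_{n}\le1-\alpha<1$ deterministically, while on $G_{n}$ we have $W_{n}\le\alpha(R_{n}^{2}-1)=:C_{n}\ge0$; splitting the expectation over $G_{n}$ and $G_{n}^{c}$ gives $E[W_{n}]\le C_{n}+(1-\alpha)P(G_{n}^{c})=C_{n}+o(1)\le2C_{n}+o(1)$, which is the claimed inequality (this is the quantitative content of Lemma~\ref{lem:whp_exp_bound}). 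The main obstacle is the third step: forcing the extra-mass bound to depend on $|\containA|$ rather than on the full $|\mathcal{S}|$. This hinges on the decoupling afforded by conditioning on $A\subseteq\mleS$, so that the extra coordinates are mean-zero, combined with applying the scan bound of Lemma~\ref{lem:gaussian_lem_size} to the complement family $\mathcal{S}^{*}$; one must additionally handle the mild edge case in which $|\containA|$ is too small to meet the $\Omega(n)$ hypothesis of that lemma, where a direct union bound only sharpens the estimate.
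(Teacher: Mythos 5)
Your proposal is correct and follows essentially the same route as the paper's proof: condition on $A \subseteq \mleS$ together with $|\mleS| \le n/2$ (Lemma~\ref{lem:mle_size_n_2}), concentration of $\sum_{v\in A} X_v$ (Lemma~\ref{lem:xa_conc_bound}), and the scan bound $\sqrt{2n\log|\containA|}$ over the supersets of $A$ (Lemma~\ref{lem:gaussian_lem_size}), then use $\Gamma(\mleS)\ge\Gamma(A)$ to get the ratio bound on $|\mleS|/n$ and pass to expectations by splitting over the high-probability event. Your differences are cosmetic---working with the ratio $U/V$ instead of solving for $X_{\mleS\setminus A}$, and making explicit the bijection $S \mapsto S\setminus A$ onto the complement family---and your closing remark about the $\Omega(n)$ hypothesis of Lemma~\ref{lem:gaussian_lem_size} flags a minor point that the paper's own application of that lemma glosses over.
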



\begin{proof}

We will first derive the $o(1)$ term in Equation \eqref{eq:lem_bias_A_size}.
Let $X_S = \sum_{v \in S} X_v$, and
define the following events:
\begin{align*}
D_n &= \left [ |\mleS| \leq \frac{n}{2} \right] \\
E_n &= [ A \subseteq \mleS ], \\
F_n &= \left[ \mu\alpha n - \sqrt{2\alpha n \log{n}} \leq X_A \leq \mu\alpha n + \sqrt{2\alpha n \log{n}} \right], \\
G_n &= \left[ \max_{\substack{B \in \containA \\ |B| \leq \frac{n}{2}}} X_{B \setminus A} \leq \sqrt{2n \log{|\containA|}}  \right].
\end{align*}

Let $H_n = D_n \cap E_n \cap F_n \cap G_n$.
We claim that  $\displaystyle\lim_{n\to\infty} P(H_n) = 1$.

To prove this claim, first note that $\displaystyle\lim_{n\to\infty} P(D_n) = 1$ by Lemma \ref{lem:mle_size_n_2} and $\displaystyle\lim_{n\to\infty} P(E_n) = 1$ by assumption. Moreover, because $X_A \sim N(\mu\alpha n, \alpha n)$, it follows from Lemma \ref{lem:xa_conc_bound} that $\displaystyle\lim_{n\to\infty} P(F_n) = 1$. Finally, by applying Lemma \ref{lem:gaussian_lem_size} with the anomaly family $\containA$, we have that $\displaystyle\lim_{n\to\infty} P(G_n) = 1$. Thus, by a repeated application of Lemma \ref{lem:whp_intersect_lem}, we have $\displaystyle\lim_{n\to\infty} P(H_n) = \lim_{n\to\infty} P(D_n \cap E_n \cap F_n \cap G_n) = 1$.

Now define $p_n = P(H_n)$. Then, we have
\begin{equation}
\begin{split}
\label{eq:bias_hn_nohn}
\textnormal{Bias}\left(\frac{|\mleS|}{n}\right) &= p_n \cdot \textnormal{Bias}\left(\frac{|\mleS|}{n} \;\middle|\; H_n \right) + (1-p_n) \cdot \textnormal{Bias}\left(\frac{|\mleS|}{n} \;\middle|\; H_n^c \right)  \\
&\leq  \textnormal{Bias}\left(\frac{|\mleS|}{n} \;\middle|\; H_n \right) + (1-p_n) \\
&=  \textnormal{Bias}\left(\frac{|\mleS|}{n} \;\middle|\; H_n \right) + o(1),
\end{split}
\end{equation}
where in the second line we use that $p_n \leq 1$ and $\textnormal{Bias}\left(\frac{|\mleS|}{n} \;\middle|\; H_n^c \right) \leq 1$, and in the third line we use that $\displaystyle\lim_{n\to\infty} P(H_n) = 1$.

To complete the proof, we will bound $\textnormal{Bias}\left(\frac{|\mleS|}{n} \;\middle|\; H_n \right)$. Since the bias term conditions on $H_n = D_n \cap E_n \cap F_n \cap G_n$, for the rest of the proof we will assume that the events $D_n$, $E_n$, $F_n$, and $G_n$ hold.

Since $E_n$ holds, we have that
\begin{align*}
\Gamma(A) &= \frac{1}{\sqrt{|A|}} \sum_{v \in A} X_v = \frac{1}{\sqrt{\alpha n}} X_A \\[5pt]
\Gamma(\mleS) &= \frac{1}{\sqrt{|\mleS|}} \sum_{v \in \mleS} X_v = \frac{1}{\sqrt{|\mleS|}} \left (X_A + X_{\mleS \setminus A}\right ),
\end{align*}
We will find lower and upper bounds for $X_{\mleS \setminus A}$ in terms of $|\mleS|$, and use those bounds to derive  \eqref{eq:lem_bias_A_size}.

We start by finding a lower bound for $X_{\mleS \setminus A}$. Since $F_n$ holds, we have:
\begin{equation}
\label{eq:xa_conc}
\mu\alpha n - \sqrt{2\alpha n \log{n}} \leq X_A \leq \mu\alpha n + \sqrt{2\alpha n \log{n}}.
\end{equation}
Combining \eqref{eq:xa_conc} and the fact that $\Gamma(A) \leq \Gamma(\mleS)$ yields
\begin{equation}
\label{eq:thm_ub_chain}
   \frac{1}{\sqrt{\alpha n}} (\mu \alpha n - \sqrt{2\alpha n \log{n}}) \leq \Gamma(A) \leq \Gamma(\mleS) \leq \frac{1}{\sqrt{|\mleS|}} \left (\mu\alpha n + \sqrt{2\alpha n \log{n}} + X_{\mleS \setminus A}\right ).
\end{equation}
By assumption, $\mleS \setminus A \neq \varnothing$. Thus, solving for $X_{\mleS \setminus A}$ gives us a lower bound on $X_{\mleS \setminus A}$:
\begin{equation}
\label{eq:main_lem_ub}
X_{\mleS \setminus A} \geq \sqrt{\frac{|\mleS|}{\alpha n}} \left( \mu\alpha n - \sqrt{2\alpha n\log{n}}\right) - \mu\alpha n - \sqrt{2\alpha n\log{n}}.
\end{equation}

Next, we find an upper bound for $X_{\mleS \setminus A}$.  Since $D_n$ holds, we have $|\mleS| \leq \frac{n}{2}$. Since $G_n$ also holds, we have
\begin{equation}
\label{eq:main_lem_lb}
    X_{\mleS \setminus A} \leq \max_{\substack{B \in \containA \\ |B| \leq \frac{n}{2}}} X_{B \setminus A} \leq \sqrt{2n \log{|\containA|}}.
\end{equation}
Combining the lower bound from
\eqref{eq:main_lem_ub} and the upper bound from \eqref{eq:main_lem_lb} yields
\begin{equation}
\label{eq:main_lem_lb_ub}
\sqrt{\frac{|\mleS|}{\alpha n}} \left( \mu\alpha n - \sqrt{2\alpha n\log{n}}\right) - \mu\alpha n - \sqrt{2\alpha n\log{n}} \leq X_{\mleS \setminus A} \leq \sqrt{2n \log{|\containA|}}
\end{equation}
Thus, the LHS of \eqref{eq:main_lem_lb_ub} is less than the RHS of \eqref{eq:main_lem_lb_ub}, i.e. 
\begin{equation}
\label{eq:main_lem_lb_ub2}
\sqrt{\frac{|\mleS|}{\alpha n}} \left( \mu\alpha n - \sqrt{2\alpha n\log{n}}\right) - \mu\alpha n - \sqrt{2\alpha n\log{n}} \leq \sqrt{2n \log{|\containA|}}.
\end{equation}
Rearranging \eqref{eq:main_lem_lb_ub2} yields
\begin{equation}
\frac{|\mleS|}{n} - \alpha \leq \alpha \left( \left( \frac{\mu\alpha n + \sqrt{2 n\log{|\containA|}} + \sqrt{2\alpha n\log{n}}}{\mu\alpha n - \sqrt{2\alpha n\log{n}} }  \right)^2 - 1 \right).
\end{equation}
So by Lemma \ref{lem:whp_exp_bound}, we have
\begin{equation}
\label{eq:bias_mles_hn}
\textnormal{Bias}\left(\frac{|\mleS|}{n} \;\middle|\; H_n \right) = E\left[ \frac{|\mleS|}{n} - \alpha \;\middle|\; H_n \right] \leq 2\alpha \left( \left( \frac{\mu\alpha n + \sqrt{2 n\log{|\containA|}} + \sqrt{2\alpha n\log{n}}}{\mu\alpha n - \sqrt{2\alpha n\log{n}} }  \right)^2 - 1 \right).
\end{equation}
The result follows by combining Equations \eqref{eq:bias_hn_nohn} and \eqref{eq:bias_mles_hn}.
\end{proof}

\begin{lem}
\label{lem:paper_lem}
Let $\mathbf{X} \sim \probdist_{\mathcal{S}}(A, \mu)$ where $|\mathcal{S}| = \Omega(n)$ and $|A| = \alpha n$ with $0 < \alpha < 0.5$. Assume $\displaystyle\lim_{n\to\infty} P(A \subseteq \mleS) = 1$. If $\text{Bias}(|\mleS| / n) \geq \gamma$, then
\begin{equation}
|\containA| \geq (C_{\mu, \gamma, \alpha})^n \cdot e^{-\Theta(\sqrt{n \log n})}
\end{equation}
for sufficiently large $n$, where
$C_{\mu, \alpha, \gamma} = \exp\left( \frac{1}{2} \mu^2 \alpha^2\left( \sqrt{1+\frac{\gamma}{4\alpha}} - 1 \right)^2   \right).$
\end{lem}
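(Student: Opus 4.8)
The plan is to obtain Lemma~\ref{lem:paper_lem} as a direct algebraic inversion of the bias upper bound already established in Lemma~\ref{lem:main_lem}. That lemma gives, for sufficiently large $n$,
\[
\textnormal{Bias}\left(\frac{|\mleS|}{n}\right) \leq 2\alpha\left(R_n^2 - 1\right) + o(1), \qquad R_n \colonequals \frac{\mu\alpha n + \sqrt{2n\log|\containA|} + \sqrt{2\alpha n\log n}}{\mu\alpha n - \sqrt{2\alpha n\log n}},
\]
so the entire task is to convert the hypothesis $\textnormal{Bias}(|\mleS|/n) \geq \gamma$ into a lower bound on $|\containA|$ by solving this inequality for $\log|\containA|$.

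First I would combine the hypothesis with the bound from Lemma~\ref{lem:main_lem} to get $\gamma \leq 2\alpha(R_n^2-1) + o(1)$. To dispose of the $o(1)$ term cleanly I would take $n$ large enough that the $o(1)$ contribution is at most $\gamma/2$, which yields $\gamma/2 \leq 2\alpha(R_n^2 - 1)$ and hence $R_n \geq \sqrt{1 + \frac{\gamma}{4\alpha}}$. This is precisely the step that produces the factor $\frac{\gamma}{4\alpha}$, rather than $\frac{\gamma}{2\alpha}$, inside the constant $C_{\mu,\alpha,\gamma}$; the extra halving of $\gamma$ is the cost of absorbing the lower-order error term.

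Next I would substitute the explicit form of $R_n$ and clear the denominator. Writing $a = \mu\alpha n$, $b = \sqrt{2\alpha n\log n}$, and $c = \sqrt{2n\log|\containA|}$, the inequality $\frac{a+b+c}{a-b} \geq \sqrt{1+\frac{\gamma}{4\alpha}}$ rearranges to $c \geq a\big(\sqrt{1+\frac{\gamma}{4\alpha}} - 1\big) - b\big(\sqrt{1+\frac{\gamma}{4\alpha}} + 1\big)$. Since $a = \Theta(n)$ and $b = \Theta(\sqrt{n\log n})$, the right-hand side equals $\mu\alpha n\big(\sqrt{1+\frac{\gamma}{4\alpha}}-1\big) - \Theta(\sqrt{n\log n})$, which is positive for large $n$, so I may square both sides. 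Squaring gives $2n\log|\containA| \geq \mu^2\alpha^2 n^2\big(\sqrt{1+\frac{\gamma}{4\alpha}}-1\big)^2 - \Theta(n\sqrt{n\log n})$, and dividing by $2n$ isolates $\log|\containA| \geq \frac{1}{2}\mu^2\alpha^2\big(\sqrt{1+\frac{\gamma}{4\alpha}}-1\big)^2 \, n - \Theta(\sqrt{n\log n})$. Exponentiating yields exactly $|\containA| \geq (C_{\mu,\alpha,\gamma})^n \cdot e^{-\Theta(\sqrt{n\log n})}$.

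The rearrangement and the squaring are routine; the part that requires care is the bookkeeping of the lower-order terms. The main point to verify is that the cross term $-2ab\big(\sqrt{1+\frac{\gamma}{4\alpha}}-1\big)\big(\sqrt{1+\frac{\gamma}{4\alpha}}+1\big)$ and the $b^2$ term produced by squaring are both of order at most $n\sqrt{n\log n}$, so that after dividing by $2n$ they collapse into a single $\Theta(\sqrt{n\log n})$ correction in the exponent and leave the leading exponential rate $C_{\mu,\alpha,\gamma}$ untouched. I would also confirm the sign conditions needed to square, namely $a - b > 0$ and the nonnegativity of the right-hand side, both of which hold for sufficiently large $n$ since the $\Theta(n)$ term dominates the $\Theta(\sqrt{n\log n})$ terms.
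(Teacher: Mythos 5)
Your proposal is correct and follows essentially the same route as the paper's proof: both invoke Lemma~\ref{lem:main_lem}, absorb the $o(1)$ term by taking $n$ large enough that it is at most $\gamma/2$ (which is exactly where the $\frac{\gamma}{4\alpha}$ factor arises), rearrange to isolate $\sqrt{2n\log|\containA|}$, and square to obtain the exponential lower bound with the $e^{-\Theta(\sqrt{n\log n})}$ correction. Your treatment is in fact slightly more careful than the paper's, since you explicitly verify the sign conditions needed for clearing the denominator and squaring, and track the cross and $b^2$ terms that collapse into the $\Theta(\sqrt{n\log n})$ error.
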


\begin{proof}
By Lemma \ref{lem:main_lem}, we have
\begin{equation}
\label{eq:gam_bias_bound}
\gamma \leq \text{Bias}(|\mleS| / n) \leq 2\alpha \left( \left( \frac{\mu\alpha n + \sqrt{2 n\log{|\containA|}} + \sqrt{2\alpha n\log{n}}}{\mu\alpha n - \sqrt{2\alpha n\log{n}} }  \right)^2 - 1 \right) + o(1).
\end{equation}
Thus, the LHS of \eqref{eq:gam_bias_bound} is less than the RHS of \eqref{eq:gam_bias_bound}, i.e.
\begin{equation}
\label{eq:gam_bias_bound2}
\gamma \leq 2\alpha \left( \left( \frac{\mu\alpha n + \sqrt{2 n\log{|\containA|}} + \sqrt{2\alpha n\log{n}}}{\mu\alpha n - \sqrt{2\alpha n\log{n}} }  \right)^2 - 1 \right) + o(1)
\end{equation}
Let $n$ be sufficiently large so that the $o(1)$ term in \eqref{eq:gam_bias_bound2} is less than $\frac{\gamma}{2}$. Then, solving for $|\containA|$ in \eqref{eq:gam_bias_bound2}:
\begin{align*}
&\; \gamma \leq 2\alpha \left( \left( \frac{\mu\alpha n + \sqrt{2 n\log{|\containA|}} + \sqrt{2\alpha n\log{n}}}{\mu\alpha n - \sqrt{2\alpha n\log{n}} }  \right)^2 - 1 \right) + \frac{\gamma}{2} \\[7.5pt]
&\Rightarrow \sqrt{\frac{\gamma}{4\alpha} + 1} \leq \frac{\mu\alpha n + \sqrt{2 n\log{|\containA|}} + \sqrt{2\alpha n\log{n}}}{\mu\alpha n - \sqrt{2\alpha n\log{n}} } \\[7.5pt]
&\Rightarrow \mu\alpha n \left( \sqrt{\frac{\gamma}{4\alpha} + 1} - 1 \right) - \Theta(\sqrt{n \log{n}}) \leq \sqrt{2n \log{|\containA|}} \\[7.5pt]
&\Rightarrow  |\containA| \geq \left[\exp\left( \frac{1}{2} \mu^2 \alpha^2 \left( \sqrt{\frac{\gamma}{4\alpha} + 1} - 1 \right)^2\right) \right]^n \cdot e^{-\Theta(\sqrt{n\log{n}})}
\end{align*}
completing the proof.
\end{proof}

\subsection{Proof of Theorem}

Using the above lemmas, we are now ready to prove Theorem \ref{thm:subexp}.

\begin{namedproblem}[\textbf{Theorem \ref{thm:subexp}}]
Let $\mathbf{X} = (X_1, \dots, X_n) \sim \probdist_{\mathcal{S}}(A, \mu)$ where $\mathcal{S} = \Omega(n)$ and $|A| = \alpha n$ with $0 < \alpha < 0.5$.
Suppose $|\containA|$ is sub-exponential in $n$ and $\displaystyle\lim_{n\to\infty} P(A \subseteq \mleS) = 1$.
Then $\displaystyle\lim_{n\to\infty} \textnormal{Bias}(|\mleS| / n) = 0$.
\end{namedproblem}

\begin{proof}
Let $\gamma > 0$ and let $C_{\mu, \alpha, \gamma}$ be as defined in Lemma \ref{lem:paper_lem}. Note that because $C_{\mu, \alpha, \gamma} > 1$, we have $\frac{2C_{\mu, \alpha, \gamma}}{1+C_{\mu, \alpha, \gamma}} > 1$. 

Because $|\containA|$ is sub-exponential, there exists sufficiently large $n$ so that
\begin{equation}
\label{eq:thm_containA_bound}
|\containA| < \left( \frac{2C_{\mu, \alpha, \gamma}}{1+C_{\mu, \alpha, \gamma}}  \right)^n.
\end{equation}
We also note that because $C_{\mu,\alpha,\gamma} > 1$, then $\frac{2}{1+C_{\mu, \alpha, \gamma}} < 1$. For sufficiently large $n$,  $e^{-\Theta(\sqrt{   \log{n} / n }) }$ will get arbitrarily close to $1$.  Thus, we have 
\begin{equation}
\label{eq:thm_containA_bound2}
e^{-\Theta(\sqrt{   \log{n} / n }) } \geq \frac{2}{1+C_{\mu, \alpha, \gamma}}.
\end{equation}
Combining both \eqref{eq:thm_containA_bound} and \eqref{eq:thm_containA_bound2} gives us
\begin{equation}
C_{\mu, \alpha, \gamma}^n \cdot e^{-\Theta(\sqrt{n \log n})} = \left( C_{\mu, \alpha, \gamma} \cdot e^{-\Theta(\sqrt{   \log{n} / n }) } \right)^n \geq  \left( C_{\mu, \alpha, \gamma} \cdot \frac{2}{1+C_{\mu, \alpha, \gamma}} \right)^n  > |\containA|,
\end{equation}
for sufficiently large $n$, where the first inequality follows by \eqref{eq:thm_containA_bound2} and the second inequality follows by \eqref{eq:thm_containA_bound}.

Thus, by the contrapositive of Lemma \ref{lem:paper_lem}, it follows that $\text{Bias}(|\mleS|/n) < \gamma$ for sufficiently large $n$. Taking the limit as $n\to\infty$ yields
\begin{equation}
\label{eq:lem_gamma_bias}
\lim_{n\to\infty} \text{Bias}(|\mleS|/n) \leq \gamma.
\end{equation}

Because \eqref{eq:lem_gamma_bias} holds for all $\gamma > 0$, it follows that $\displaystyle\lim_{n\to\infty} \text{Bias}(|\mleS|/n)  \leq 0$. Furthermore, because  $\displaystyle\lim_{n\to\infty} P(A \subseteq \mleS) = 1$, we also have $\displaystyle\lim_{n\to\infty} \text{Bias}(|\mleS|/n) \geq 0$. Thus, $\displaystyle\lim_{n\to\infty} \text{Bias}(|\mleS|/n)  = 0$, as desired.
\end{proof}

\section{Proof of Theorem \ref{thm:unconstr}}

In the following proof, we slightly abuse notation and assume that all statements of the form $\displaystyle\lim_{n\to\infty} R_n = Y$, where $R_n$ and $Y$ are random variables, hold almost surely.

\begin{namedproblem}[\textbf{Theorem \ref{thm:unconstr}}]
Let $\mathbf{X} \sim \probdist_{\mathcal{P}_n}(A, \mu)$ where $|A|=\alpha n$ with $0 < \alpha < 0.5$.
Then $\displaystyle\lim_{n\to\infty} \textnormal{Bias}(|\mle{\mathcal{P}_n}|/n) > 0$.
\end{namedproblem}

\begin{proof}
 From Proposition \ref{thm:glr_mle_asd}, we have
\begin{equation}
\mle{\mathcal{P}_n} = \argmax_{S \subseteq [n]} \frac{1}{\sqrt{|S|}} \sum_{v \in S} X_v.
\end{equation}

Because the maximum is taken over all subsets of $[n]$, an equivalent formulation of the above is $\mle{\mathcal{P}_n} = \{v : X_v > \mleT\}$, where
\begin{equation}
\label{eq:unconstr_thm_T}
\mleT = \argmax_{T \in \mathbb{R}} \left ( \frac{1}{\sqrt{|\{v \in [n] : X_v > T\}}|} \sum_{v \in [n] : X_v > T} X_v \right ).
\end{equation}

We start by showing that $\displaystyle\lim_{n\to\infty} \mleT$ is finite. To do so, we will find an expression for the RHS as $n\to\infty$. Let $M_T = \{v \in [n] : X_v > T, v \in A\}$ and $N_T = \{v \in [n] : X_v > T, v \not\in A\}$. Additionally, let $\nu_{\mu, T}$ be the mean of a $N(\mu, 1)$ distribution that is truncated to be above $T$. Then
\begin{equation}
\label{eq:unconstr_thm_max}
\sum_{v \in [n]: X_v > T} X_v = \left ( \frac{\sum_{v \in M_T} X_v}{|M_T|} \right ) \cdot |M_T| + \left ( \frac{\sum_{v \in N_T} X_v}{|N_T|} \right ) \cdot |N_T|
\end{equation}

By the strong law of large numbers, $\displaystyle\lim_{n\to\infty} \frac{\sum_{v \in M_T} X_v}{|M_T|} = \nu_{\mu, T}$ and $\displaystyle\lim_{n\to\infty} \frac{\sum_{v \in M_T} X_v}{|M_T|} = \nu_{0, T}$. Similarly, $\displaystyle\lim_{n\to\infty} \frac{|M_T|}{\alpha n\cdot(1-\Phi(T-\mu))} = 1$ and $\displaystyle\lim_{n\to\infty} \frac{|N_T|}{(1-\alpha)n\cdot(1-\Phi(T))} = 1$,  where $\Phi$ is the CDF of a standard normal. Plugging these limits into \eqref{eq:unconstr_thm_max} gives us

\begin{equation}
\label{eq:unconstr_thm_num}
\lim_{n\to\infty} \frac{\sum_{v \in [n]: X_v > T} X_v}{\nu_{\mu, T} \cdot (\alpha n (1-\Phi(T-\mu)) + \nu_{0, T} \cdot ((1-\alpha)n(1-\Phi(T)) } = 1.
\end{equation}
A similar calculation yields
\begin{equation}
\label{eq:unconstr_thm_denom}
\lim_{n\to\infty} \frac{ \sqrt{|\{v \in [n] : X_v > T\}|} }{ \sqrt{ \alpha n (1-\Phi(T-\mu)) + (1-\alpha) n (1-\Phi(T)) } } = 1.
\end{equation}
Plugging in Equations \eqref{eq:unconstr_thm_num} and \eqref{eq:unconstr_thm_denom} into Equation \eqref{eq:unconstr_thm_T} yields
\begin{equation}
\begin{split}
\lim_{n\to\infty} \mleT &= \lim_{n\to\infty} \left[ \argmax_{T \in \mathbb{R}} \left ( \frac{\nu_{\mu, T} \cdot (\alpha n (1-\Phi(T-\mu)) + \nu_{0, T} \cdot ((1-\alpha)n(1-\Phi(T))}{\sqrt{ \alpha n (1-\Phi(T-\mu)) + (1-\alpha) n (1-\Phi(T)) }} \right ) \right] \\[5pt]
&= \lim_{n\to\infty} \left[ \argmax_{T \in \mathbb{R}} \left ( \frac{\nu_{\mu, T} \cdot (\alpha (1-\Phi(T-\mu)) + \nu_{0, T} \cdot ((1-\alpha)(1-\Phi(T))}{\sqrt{ \alpha (1-\Phi(T-\mu)) + (1-\alpha) (1-\Phi(T)) }} \cdot \sqrt{n} \right ) \right] \\[5pt]
&= \argmax_{T \in \mathbb{R}} \left ( \frac{\nu_{\mu, T} \cdot (\alpha (1-\Phi(T-\mu)) + \nu_{0, T} \cdot ((1-\alpha)(1-\Phi(T))}{\sqrt{ \alpha (1-\Phi(T-\mu)) + (1-\alpha) (1-\Phi(T)) }} \right ).
\end{split}
\end{equation}
Thus $\displaystyle\lim_{n\to\infty} \mleT$ is finite. 

Next, define $T^* = \displaystyle\lim_{n\to\infty} \mleT$.
To complete the proof, we use $T^*$ to derive an expression for $\displaystyle\lim_{n\to\infty} \frac{|\mle{\mathcal{P}_n}|}{n}$, and then use that expression to bound $\displaystyle\lim_{n\to\infty} \text{Bias}\left (\frac{|\mle{\mathcal{P}_n}|}{n} \right )$.

Since the fraction of observations $X_i$ such that $i \in A$ and
$X_i > \mleT$ is asymptotically $\frac{|A|}{n} \cdot (1-\Phi(\mleT-\mu))$, it follows that
\begin{equation}
\label{eq:eq_mleS_1}
\lim_{n\to\infty} \frac{|A \cap \mle{\mathcal{P}_n}|}{n} = \lim_{n\to\infty} \left( \frac{|A|}{n} \cdot (1-\Phi(\mleT-\mu)) \right) = \alpha \cdot (1-\Phi(T^*-\mu)).
\end{equation}
Similarly, the fraction of observations $X_i$ such that $i \not \in A$ and $X_i > \mleT$ is asymptotically $\left(1-\frac{|A|}{n}\right) \cdot (1-\Phi(\mleT))$, so we have
\begin{equation}
\label{eq:eq_mleS_2}
\lim_{n\to\infty} \frac{|A^c\cap \mle{\mathcal{P}_n}|}{n} = \lim_{n\to\infty} \left( \left(1-\frac{|A|}{n}\right) \cdot (1-\Phi(\mleT)) \right) = (1-\alpha) \cdot (1-\Phi(T^*)).
\end{equation}
Combining Equations \eqref{eq:eq_mleS_1} and \eqref{eq:eq_mleS_2} gives us
\begin{equation}
\lim_{n\to\infty} \frac{|\mle{\mathcal{P}_n}|}{n}  = \alpha \cdot (1-\Phi(T^*-\mu)) + (1-\alpha) \cdot (1-\Phi(T^*))
\end{equation}
Thus, the asymptotic bias of the MLE $\mleS$ is
\begin{equation*}
\begin{split}
\lim_{n\to\infty} \text{Bias}(|\mle{\mathcal{P}_n}| / n) &= \lim_{n\to\infty} E[|\mle{\mathcal{P}_n}| / n ] - \alpha \\[5pt]
&= \alpha \cdot (1-\Phi(T^*-\mu)) + (1-\alpha) \cdot (1-\Phi(T^*)) - \alpha.
\end{split}
\end{equation*}
Since the above expression is always positive for $\alpha < 0.5$, so it follows that $\displaystyle\lim_{n\to\infty} \text{Bias}(|\mle{\mathcal{P}_n}| / n) > 0$.
\end{proof}

\section{Proof of Theorem \ref{thm:gmm_likelihood} and Corollaries \ref{cor:gmm1}, \ref{cor:gmm2}}

To prove Theorem \ref{thm:gmm_likelihood}, we require the following Lemma.

\begin{lem}
\label{lem:high_prob_norm}
Let $X_1, \dots, X_n \iid N(\mu,\sigma^2)$. Then 
\begin{equation}
P\left(\max_{i\in [n]} X_i \leq \mu + 2\sigma\sqrt{\log{n}}\right) \geq 1 - \frac{1}{n}.
\end{equation}
\end{lem}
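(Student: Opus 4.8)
The plan is to control the maximum by a union bound and then estimate a single Gaussian tail, exactly as in the proofs of Lemmas \ref{lem:xa_conc_bound} and \ref{lem:gaussian_lem_size}. Writing $t = \mu + 2\sigma\sqrt{\log{n}}$ for the threshold, the first step is sub-additivity of the probability measure:
\begin{equation*}
P\left(\max_{i\in[n]} X_i > t\right) \leq \sum_{i=1}^n P(X_i > t) = n\, P(X_1 > t).
\end{equation*}

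Next, since $X_1 \sim N(\mu,\sigma^2)$, I would standardize via $Z = (X_1-\mu)/\sigma \sim N(0,1)$, so that $P(X_1 > t) = P\bigl(Z > 2\sqrt{\log{n}}\bigr)$, and apply the standard tail bound $P(Z \geq x) \leq \frac{1}{\sqrt{2\pi}}\frac{1}{x}e^{-x^2/2}$ at $x = 2\sqrt{\log{n}}$. The crucial cancellation is $e^{-x^2/2} = e^{-2\log{n}} = n^{-2}$, yielding
\begin{equation*}
P\bigl(Z > 2\sqrt{\log{n}}\bigr) \leq \frac{1}{\sqrt{2\pi}} \cdot \frac{1}{2\sqrt{\log{n}}} \cdot \frac{1}{n^2} \leq \frac{1}{n^2},
\end{equation*}
where the last inequality uses that the prefactor $\frac{1}{2\sqrt{2\pi\log{n}}}$ is at most $1$ for all $n \geq 2$.

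Combining the two displays gives $P(\max_i X_i > t) \leq n \cdot n^{-2} = 1/n$, and hence $P(\max_i X_i \leq t) \geq 1 - 1/n$, which is the claim. The only point requiring any care is confirming that the constant prefactor in the Gaussian tail bound does not degrade the $n^{-2}$ rate; this is handled by the elementary observation that $\frac{1}{2\sqrt{2\pi\log{n}}} < 1$ once $n \geq 2$, with the degenerate case $n = 1$ being trivial. I expect no substantive obstacle: the exponent $2$ built into the threshold $2\sigma\sqrt{\log{n}}$ is chosen precisely so that $e^{-x^2/2}$ equals $n^{-2}$, leaving one spare factor of $1/n$ after the union bound over the $n$ variables.
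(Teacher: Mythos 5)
Your proposal is correct and follows essentially the same route as the paper's proof: a union bound over the $n$ variables combined with the standard Gaussian tail bound $P(Z \geq x) \leq \frac{1}{\sqrt{2\pi}}\frac{1}{x}e^{-x^2/2}$ at $x = 2\sqrt{\log n}$, giving $n \cdot n^{-2} = 1/n$. Your extra care about the prefactor being at most $1$ (for $n \geq 2$) and the degenerate $n=1$ case is a minor refinement the paper glosses over, but the argument is identical in substance.
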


\begin{proof}
For fixed $i \in [n]$ we have
\begin{equation}
\begin{split}
P(X_i > \mu + 2\sigma\sqrt{\log{n}}) &= P(Z > 2\sqrt{\log{n}}), \text{where $Z \sim N(0,1)$} \\
&\leq \frac{1}{\sqrt{2\pi}} \cdot \frac{1}{2\sqrt{\log{n}}} \cdot \frac{1}{n^2} < \frac{1}{n^2}
\end{split}
\end{equation}
where in the second line we use the standard bound $P(Z \geq x) \leq \frac{1}{\sqrt{2\pi}}\cdot \frac{1}{x} e^{-x^2/2}$.  Thus, $P(X_i \leq \mu + 2\sigma\sqrt{\log{n}}) \geq 1-\frac{1}{n^2}$.
By a union bound, it follows that
\begin{equation}
P(X_i \leq \mu + 2\sigma\sqrt{\log{n}} \text{\; \; for all $i \in [n]$}) \geq 1- \frac{1}{n},
\end{equation}
which implies the desired result.
\end{proof}


\begin{namedproblem}[\textbf{Theorem \ref{thm:gmm_likelihood}}]
Let $\mathbf{X} = (X_1, \dots, X_n) \sim \asdS$, where $|A|=\alpha n$ for $0 < \alpha < 0.5$ and $\mu \geq C\sqrt{\log{n}}$ for a sufficiently large constant $C > 0$. For sufficiently large $n$,  we have that
\begin{equation*}
|\alphaGMM - \alpha| \leq \sqrt{\frac{\log{n}}{n}} \quad \text{ and } \quad |\muGMM - \mu| \leq 3\sqrt{\frac{\log{n}}{n}}
\end{equation*}
with probability at least $ 1-\frac{1}{n}$.
\end{namedproblem}

\allowdisplaybreaks
\begin{proof}
For $\widetilde{\alpha} \in (0,1)$ and $\widetilde{\mu} > 0$, let $L_{\widetilde{\alpha}, \widetilde{\mu}}(x) = \log\left(   \widetilde{\alpha} \cdot \exp\left(-\frac{1}{2} (x-\widetilde{\mu})^2 \right) + (1-\widetilde{\alpha}) \cdot \exp\left(-\frac{1}{2} x^2 \right)   \right)$ be the (scaled) log-likelihood function for the mixture distribution $\widetilde{\alpha}\cdot N(\widetilde{\mu}, 1) + (1-\widetilde{\alpha})\cdot N(0,1)$, and define $L_{\widetilde{\alpha}, \widetilde{\mu}}(\mathbf{X}) = \prod_{i=1}^n L_{\widetilde{\alpha}, \widetilde{\mu}}(X_i)$. Then 
\begin{equation}
\widehat{\alpha}_{\text{GMM}}, \widehat{\mu}_{\text{GMM}} = \argmax_{\substack{\widetilde{\alpha} \in (0,1) \\ \widetilde{\mu} \in (0, \infty)}} L_{\widetilde{\alpha}, \widetilde{\mu}}(\mathbf{X})
\end{equation}

To prove the claim, it suffices to show that if $|\widehat{\alpha}_{\text{GMM}} - \widetilde{\alpha}| > \sqrt{\frac{\log{n}}{n}}$ or $|\widehat{\mu}_{\text{GMM}} - \widetilde{\mu}| > 3 \sqrt{\frac{\log{n}}{n}}$, then $L_{\alpha, \mu}(\mathbf{X}) > L_{\widetilde{\alpha}, \widetilde{\mu}}(\mathbf{X})$ with probability at least $1-\frac{1}{n}$.


We will prove the following equivalent formulation: if $\kappa, \tau$ are real numbers such that $|\kappa| > \sqrt{\frac{\log{n}}{n}}$ or $|\tau| > 3 \sqrt{\frac{\log{n}}{n}}$, then
\begin{equation}
L_{\alpha, \mu}(\mathbf{X}) > \likX
\end{equation}
with probability at least $1-\frac{1}{n}$.

We proceed by a case analysis based on whether $\kappa$ and $\tau$ also satisfy the following additional conditions:
\begin{align}
\label{eq:cond_kappa_tau_1}
\frac{1}{n^2} < \alpha + \kappa &< 1 - \frac{1}{n^2}, \\[5pt]
\label{eq:cond_kappa_tau_2}
\frac{\mu^2}{\tau^2} &> 100.
\end{align}

Briefly, the intuition for the above conditions is that if $\kappa$ and $\tau$ satisfy \eqref{eq:cond_kappa_tau_1} and \eqref{eq:cond_kappa_tau_2}, then we can derive a simplified formula for the likelihood  $\likX$. 

In all cases, we assume that $\mu - 2\sqrt{\log{n}} \leq X_i \leq \mu + 2\sqrt{\log{n}}$ for $i \in A$ and $- 2\sqrt{\log{n}} \leq X_i \leq 2\sqrt{\log{n}}$ for $i \not\in A$, as these events hold with probability at least $1-\frac{1}{n}$ by Lemma \ref{lem:high_prob_norm}. 

\noindent \textbf{Case 1: $\kappa$ satisfies \eqref{eq:cond_kappa_tau_1} and $\tau$ satisfies \eqref{eq:cond_kappa_tau_2}.} The log-likelihood $\likX$ can be written as
\begin{equation}
\label{eq:loglik_X}
\begin{split}
\likX &= \log\left( \prod_{i=1}^n  \left( (\alpha+\kappa) \cdot \exp\left( -\frac{1}{2} (X_i - \mu - \tau)^2 \right) + (1-\alpha-\kappa) \cdot \exp\left( -\frac{X_i^2}{2} \right) \right) \right) \\
&= \sum_{i=1}^n \log\left( (\alpha+\kappa) \cdot \exp\left( -\frac{1}{2} (X_i - \mu - \tau)^2 \right) + (1-\alpha-\kappa) \cdot \exp\left( -\frac{X_i^2}{2} \right) \right)
\end{split}
\end{equation}

Let $T_1 = (\alpha+\kappa) \cdot \exp\left( -\frac{1}{2} (X_i - \mu - \tau)^2\right)$ be the first term in the logarithm and let $T_2 = (1-\alpha-\kappa) \cdot \exp\left( -\frac{X_i^2}{2}\right)$ be the second term. 
We claim that if $i \in A$, then $T_1 +  T_2 = (1+ o(n^{-1}))T_1$, while if $i \not\in A$ then $T_1 +  T_2 = (1+ o(n^{-1}))T_2$.

To show that $T_1 +  T_2 = (1+  o(n^{-1}))T_1$ for $i \in A$, we compute $\frac{T_2}{T_1}$:
\begin{equation}
\allowdisplaybreaks
\begin{split}
\frac{T_2}{T_1} &= \frac{ (1-\alpha-\kappa) \cdot \exp\left( -\frac{X_i^2}{2}\right) }{ (\alpha+\kappa) \cdot \exp\left( -\frac{1}{2} (X_i - \mu - \tau)^2\right) } \\[5pt]
&= \left( \frac{1-\alpha-\kappa}{\alpha+\kappa}  \right) \cdot \exp\left( -X_i (\mu + \tau) + \frac{1}{2}(\mu+\tau)^2 \right) \\[5pt]
&\leq \left( \frac{1-\alpha-\kappa}{\alpha+\kappa}  \right) \cdot \exp\left( -(\mu - 2\sqrt{\log{n}}) (\mu + \tau) + \frac{1}{2}(\mu+\tau)^2 \right)\\[5pt]
&= \left( \frac{1-\alpha-\kappa}{\alpha+\kappa}  \right) \cdot \exp\left( -\frac{1}{2}\mu^2 + \frac{1}{2}\tau^2 + 2\mu\sqrt{\log{n}} + 2\tau\sqrt{\log{n}} \right) \\[5pt]
&\leq n^2 \cdot \exp\left( -\frac{1}{2}\mu^2 + \frac{1}{2}\left(\frac{\mu^2}{100}\right) + 2\mu\sqrt{\log{n}} + 2\left(\frac{\mu}{10}\right)\sqrt{\log{n}} \right) \\
&= n^2 \cdot \exp\left( -\frac{99}{200}\mu^2 + \frac{11}{5}\mu\sqrt{\log{n}}  \right),
\end{split}
\end{equation}
where the first inequality uses that $X_i \geq \mu - 2\sqrt{\log{n}}$ and the second inequality uses that $\tau \leq \frac{\mu}{10}$ (which follows from \eqref{eq:cond_kappa_tau_2}).

Now $-\frac{99}{200}t^2 + \frac{11}{5}t\sqrt{\log{n}}$ is a concave quadratic with a maximum at $t=\frac{20}{9}\sqrt{\log{n}}$. Since $\mu \geq C \sqrt{\log{n}} > \frac{20}{9}\sqrt{\log{n}}$ (for sufficiently large $C$), it follows that $-\frac{99}{200}\mu^2 + \frac{11}{5}\mu\sqrt{\log{n}} \leq -\frac{99}{200}(C\sqrt{\log{n}})^2 + \frac{11}{5}(C\sqrt{\log{n}})\cdot\sqrt{\log{n}}$. Plugging this into the above equation yields
\begin{equation}
\allowdisplaybreaks
\begin{split}
n^2 \cdot \exp\left( -\frac{99}{200}\mu^2 + \frac{11}{5}\mu\sqrt{\log{n}}  \right) &\leq n^2\cdot \exp\left(  -\frac{99}{200}(C\sqrt{\log{n}})^2 + \frac{11}{5}(C\sqrt{\log{n}})\cdot\sqrt{\log{n}}    \right) \\
&= n^2\cdot n^{-\frac{99}{200}C^2 + \frac{440}{200}C     } \\
&= o(n^{-1}) \text{ for sufficiently large $C$}.
\end{split}
\end{equation}

Thus, $\frac{T_2}{T_1} =  o(n^{-1})$, which implies $T_1+T_2 = (1+o(n^{-1}))T_1$ for $i \in A$.
By a similar derivation, we also have that $T_1 + T_2 = (1+o(n^{-1})) T_2$ for $i \not\in A$. 

Using these relationships between $T_1$ and $T_2$, we rewrite the log-likelihood in Equation \eqref{eq:loglik_X} as
\begin{equation}
\label{eq:likX_kt}
\begin{split}
\likX &= \sum_{i\in A} \log\left( (1+o(n^{-1})) \cdot T_1    \right) +  \sum_{i\not\in A} \log\left( (1+o(n^{-1})) \cdot T_2    \right) \\
&= \sum_{i\in A} \log\left( (1+o(n^{-1})) \cdot (\alpha + \kappa) \exp\left( -\frac{1}{2} (X_i - \mu - \tau)^2\right)    \right) +  \sum_{i\not\in A} \log\left( (1+o(n^{-1})) \cdot (1-\alpha-\kappa)   \exp\left( -\frac{X_i^2}{2}\right)    \right) \\
&= \alpha n \cdot \log(\alpha+\kappa) + (1-\alpha)n \cdot \log(1-\alpha-\kappa) - \frac{1}{2} \sum_{i \in A} (X_i - \mu-\tau)^2 - \frac{1}{2} \sum_{i\not\in A} X_i^2 + o(1).
\end{split}
\end{equation}

Plugging in $\kappa=\tau=0$ into \eqref{eq:likX_kt} yields the following expression for the log-likelihood $L_{\alpha, \mu}(\mathbf{X})$ with the true parameters $\alpha, \mu$:
\begin{equation}
\label{eq:likX_00}
\begin{split}
L_{\alpha, \mu}(\mathbf{X}) = \alpha n \cdot \log{\alpha} + (1-\alpha)n \cdot \log(1-\alpha) - \frac{1}{2} \sum_{i \in A} (X_i - \mu)^2 - \frac{1}{2} \sum_{i \not\in A} X_i^2 + o(1).
\end{split}
\end{equation}

So after equating equations \eqref{eq:likX_kt} and \eqref{eq:likX_00} and simplifying, we have that $L_{\alpha, \mu}(\mathbf{X}) > \likX$ is equivalent to
\begin{align*}
L_{\alpha, \mu}&(\mathbf{X}) > \likX \\
&\Leftrightarrow \alpha\log\left( \frac{\alpha}{\alpha+\kappa} \right) + (1-\alpha) \log\left( \frac{1-\alpha}{1-\alpha-\kappa}   \right) + \frac{\tau \alpha}{2}\left(\tau - 2 \frac{\sum_{i \in A} (X_i - \mu)}{\alpha n} \right) + o(n^{-1}) > o(n^{-1}),
\end{align*}
To prove that $L_{\alpha, \mu}(\mathbf{X}) > \likX$ and complete the proof,  it suffices to show that
\begin{equation}
\label{eq:Lkt_great_L00_ineq}
\alpha\log\left( \frac{\alpha}{\alpha+\kappa} \right) + (1-\alpha) \log\left( \frac{1-\alpha}{1-\alpha-\kappa}   \right) + \frac{\tau \alpha}{2}\left(\tau - 2 \frac{\sum_{i \in A} (X_i - \mu)}{\alpha n} \right) = \Omega(n^{-1}).
\end{equation}

To bound the above inequality, we first note that the first two terms $\alpha\log\left( \frac{\alpha}{\alpha+\kappa} \right) + (1-\alpha) \log\left( \frac{1-\alpha}{1-\alpha-\kappa}   \right)$ are the KL-divergence between a $\text{Bern}(\alpha)$ random variable and a $\text{Bern}(\alpha+\kappa)$ random variable.  By Pinsker's inequality, we have
\begin{equation}
\begin{split}
\alpha\log\left( \frac{\alpha}{\alpha+\kappa} \right) + (1-\alpha) \log\left( \frac{1-\alpha}{1-\alpha-\kappa}   \right) &= D_{KL}\left(\text{Bern}(\alpha) \mid \mid \text{Bern}(\alpha+\kappa)\right) \\
&\geq 2 \left[ d_{TV}\big(\text{Bern}(\alpha), \text{Bern}(\alpha+\kappa)\big) \right]^2 \\
&= 2\kappa^2.
\end{split}
\end{equation}
Second, we note that $\frac{\sum_{i \in A} (X_i - \mu)}{\alpha n} \sim N(0, \frac{1}{\alpha n})$, so by Lemma \ref{lem:high_prob_norm}, we have that $\frac{\sum_{i \in A} (X_i - \mu)}{\alpha n} < \sqrt{\frac{\log{n}}{\alpha n}}$ with high probability.
Thus,
\begin{align}
\label{eq:lb_kappa_tau}
\alpha\log\left( \frac{\alpha}{\alpha+\kappa} \right) + (1-\alpha) \log\left( \frac{1-\alpha}{1-\alpha-\kappa}   \right) + \frac{\tau \alpha}{2}\left(\tau - 2 \frac{\sum_{i \in A} (X_i - \mu)}{\alpha n} \right) \geq 2 \kappa^2 + \frac{\tau\alpha}{2} \left( \tau - 2\sqrt{\frac{\log{n}}{n}} \right)
\end{align}
To prove that the RHS of \eqref{eq:lb_kappa_tau} is $\Omega(n^{-1})$, we use casework depending on whether $|\kappa| > \sqrt{\frac{\log{n}}{n}}$ or $|\tau| > 3 \sqrt{\frac{\log{n}}{n}}$. 

\textbf{Case 1, Sub-case 1: $|\kappa| > \sqrt{\frac{\log{n}}{n}}$}.

We bound the RHS of \eqref{eq:lb_kappa_tau} as
\begin{equation}
\begin{split}
2 \kappa^2 + \frac{\tau\alpha}{2} \left( \tau - 2\sqrt{\frac{\log{n}}{n}} \right) &\geq 2 \frac{\log{n}}{n}+ \frac{\tau\alpha}{2} \left( \tau - 2\sqrt{\frac{\log{n}}{n}} \right) \\
&\geq  2 \frac{\log{n}}{n}- \frac{\alpha}{2} \cdot \frac{\log{n}}{n} \\
&= \frac{\log{n}}{n} \cdot \left( 2 - \frac{\alpha}{2} \right) \\
&= \Omega(n^{-1}),
\end{split}
\end{equation}
where the second inequality uses that $\frac{\tau\alpha}{2} \left( \tau - 2\sqrt{\frac{\log{n}}{n}} \right)$ is a quadratic in $\tau$ whose minimum is $-\frac{\alpha}{2} \frac{\log{n}}{n}$.

\textbf{Case 1, Sub-case 2: $|\tau| > 3\sqrt{\frac{\log{n}}{n}}$}.

Note that the condition on $\tau$ implies that either $\tau > 3 \sqrt{\frac{\log{n}}{n}}$ or $\tau < -3 \sqrt{\frac{\log{n}}{n}}$.

We also note that $\frac{\tau\alpha}{2} \left( \tau - 2\sqrt{\frac{\log{n}}{n}} \right)$ is a quadratic in $\tau$ that is decreasing for $\tau < \sqrt{\frac{\log{n}}{n}} $ and is increasing for $\tau > \sqrt{\frac{\log{n}}{n}} $.  Depending on the value of $\tau$, we lower bound $\frac{\tau\alpha}{2} \left( \tau - 2\sqrt{\frac{\log{n}}{n}} \right)$ as follows:
\begin{align*}
\tau > 3 \sqrt{\frac{\log{n}}{n}} &\Longrightarrow \frac{\tau\alpha}{2} \left( \tau - 2\sqrt{\frac{\log{n}}{n}} \right) > \frac{\alpha}{2} \left( 3 \sqrt{\frac{\log{n}}{n}} \right) \left( 3 \sqrt{\frac{\log{n}}{n}} - 2 \sqrt{\frac{\log{n}}{n}} \right) = \frac{3\alpha}{2} \frac{\log{n}}{n} \\[5pt]
\tau < - 3 \sqrt{\frac{\log{n}}{n}}  &\Longrightarrow \frac{\tau\alpha}{2} \left( \tau - 2\sqrt{\frac{\log{n}}{n}} \right) > \frac{\alpha}{2} \left( -3 \sqrt{\frac{\log{n}}{n}} \right) \left( -3 \sqrt{\frac{\log{n}}{n}} - 2 \sqrt{\frac{\log{n}}{n}} \right) = \frac{15\alpha}{2} \frac{\log{n}}{n}.
\end{align*}

In either case, we have that $\frac{\tau\alpha}{2} \left( \tau - 2\sqrt{\frac{\log{n}}{n}} \right) > \frac{3\alpha}{2} \frac{\log{n}}{n}$. Thus the RHS of \eqref{eq:lb_kappa_tau} is
\begin{equation}
2 \kappa^2 + \frac{\tau\alpha}{2} \left( \tau - 2\sqrt{\frac{\log{n}}{n}} \right) \geq \frac{3\alpha}{2} \frac{\log{n}}{n} = \Omega(n^{-1}),
\end{equation}
as desired.

\noindent \textbf{Case 2: $\kappa$ does not satisfy \eqref{eq:cond_kappa_tau_1}, $\tau$ satisfies \eqref{eq:cond_kappa_tau_2}.}  This means that  either $\alpha + \kappa < \frac{1}{n^2} $ or $\alpha + \kappa > 1-\frac{1}{n^2}$. We will treat each of these sub-cases separately.  

Before doing so, we require the following lower bound on $L_{\alpha, \mu}(\mathbf{X})$: for sufficiently large $n$, $L_{\alpha, \mu}(\mathbf{X}) > -n$. To prove this lower bound, from \eqref{eq:likX_00} we have
\begin{equation}
\label{eq:likX_00_bound}
\begin{split}
L_{\alpha, \mu}(\mathbf{X}) &= n \cdot \log(1+o(n^{-1})) + \alpha n \cdot \log{\alpha} + (1-\alpha)n \cdot \log(1-\alpha) - \frac{1}{2} \sum_{i \in A} (X_i - \mu)^2 - \frac{1}{2} \sum_{i \not\in A} X_i^2 \\
&= n\left( \log(1+o(n^{-1})) + H(\alpha) - \frac{\alpha}{2}\left( \frac{\sum_{i \in A} (X_i - \mu)^2}{\alpha n}  \right) - \frac{1-\alpha}{2} \left( \frac{\sum_{i \not\in A} X_i^2}{(1-\alpha) n} \right) \right) \\
&\geq n\left( - \frac{\alpha}{2}\left( \frac{\sum_{i \in A} (X_i - \mu)^2}{\alpha n}  \right) - \frac{1-\alpha}{2} \left( \frac{\sum_{i \not\in A} X_i^2}{(1-\alpha) n} \right)  \right).
\end{split}
\end{equation}
where $H(\alpha)$ is the binary entropy function.  By standard tail bounds on $\chi^2$ random variables (see e.g. Lemma 1 of \cite{Laurent2000}), we have that $\frac{\sum_{i \in A} (X_i - \mu)^2}{\alpha n} \leq 2$ and  $\frac{\sum_{i \not\in A} (X_i - \mu)^2}{(1-\alpha) n} \leq 2$ with probability at least $1-\frac{1}{n^3}$, for sufficiently large $n$.  Plugging these upper bounds into \eqref{eq:likX_00_bound} yields
\begin{equation}
L_{\alpha, \mu}(\mathbf{X}) \geq n\left( - \frac{\alpha}{2}\left( \frac{\sum_{i \in A} (X_i - \mu)^2}{\alpha n}  \right) - \frac{1-\alpha}{2} \left( \frac{\sum_{i \not\in A} X_i^2}{(1-\alpha) n} \right)  \right) \geq n\left( -\alpha - (1-\alpha) \right) = -n.
\end{equation}

\noindent \textbf{Case 2, Sub-case 1: $\alpha + \kappa < \frac{1}{n^2} $}.

Our strategy for this sub-case, as well as the subsequent ones, will be to upper bound $\likX$ and show that $\likX < -n \leq L_{\alpha, \mu}(\mathbf{X})$. 

We have
\begin{equation}
\begin{split}
\likX &= \sum_{i=1}^n \log\left( (\alpha+\kappa) \cdot \exp\left( -\frac{1}{2} (X_i - \mu - \tau)^2 \right) + (1-\alpha-\kappa) \cdot \exp\left( -\frac{X_i^2}{2} \right) \right) \\
&\leq \sum_{i\in A} \log\left( (\alpha+\kappa) \cdot \exp\left( -\frac{1}{2} (X_i - \mu - \tau)^2 \right) + (1-\alpha-\kappa) \cdot \exp\left( -\frac{X_i^2}{2} \right) \right).
\end{split}
\end{equation}

We upper bound the first term $(\alpha+\kappa) \cdot \exp\left( -\frac{1}{2} (X_i - \mu - \tau)^2 \right)$ in the logarithm by
\begin{equation}
\label{eq:ub1_case1_subcase1}
(\alpha+\kappa) \cdot \exp\left( -\frac{1}{2} (X_i - \mu - \tau)^2 \right) \leq \alpha+\kappa < \frac{1}{n^2}.
\end{equation}

We upper bound the second term $(1-\alpha-\kappa) \cdot \exp\left( -\frac{X_i^2}{2} \right)$ in the logarithm as
\begin{equation}
\label{eq:ub2_case1_subcase1}
\begin{split}
(1-\alpha-\kappa) \cdot \exp\left( -\frac{X_i^2}{2} \right) &\leq \exp\left( -\frac{X_i^2}{2} \right) \\
&\leq \exp\left( -\frac{1}{2} (\mu - 2\sqrt{\log{n}})^2   \right) \\
&\leq \exp\left( -\frac{1}{2} (C\sqrt{\log{n}} - 2\sqrt{\log{n}})^2  \right), \\
&= n^{-(C-2)^2/2} \\
&\leq n^{-2} \text{ for sufficiently large $C$},
\end{split}
\end{equation}

where the first inequality follows from the assumption that,  $X_i > \mu - 2\sqrt{\log{n}}$ and the second inequality follows the fact that $\mu \geq C\sqrt{\log{n}} > 2\sqrt{\log{n}}$ and $-\frac{1}{2} (\mu - 2\sqrt{\log{n}})^2$ is decreasing for $\mu > 2\sqrt{\log{n}}$.

Combining the upper bounds in \eqref{eq:ub1_case1_subcase1} and \eqref{eq:ub2_case1_subcase1} gives us the following upper bound on $\likX$:
\begin{equation}
\likX \leq \sum_{i \in A} \log(n^{-2} + n^{-2}) = \alpha n (\log(2n^{-2})) = -\Theta(n\log{n}).
\end{equation}
Thus, for sufficiently large $n$, we have that $\likX \leq \alpha n (\log(2n^{-2})) < -n = L_{\alpha, \mu}(\mathbf{X})$, as desired.

\noindent \textbf{Case 2, Sub-case 2: $\alpha + \kappa > 1 - \frac{1}{n^2} $}. 

As in the previous sub-case, we will prove that $\likX < -n \leq L_{\alpha, \mu}(\mathbf{X})$.

Since \eqref{eq:cond_kappa_tau_2} holds, we have that $\frac{\mu^2}{\tau^2} > 100$, or equivalently $\frac{9}{10}\mu < \mu+\tau < \frac{11}{10} \mu$. We upper bound $\likX$ as
\begin{equation}
\begin{split}
\likX &\leq \sum_{i\not\in A} \log\left( (\alpha+\kappa) \cdot \exp\left( -\frac{1}{2} (X_i - \mu - \tau)^2 \right) + (1-\alpha-\kappa) \cdot \exp\left( -\frac{X_i^2}{2} \right) \right) \\
&\leq \sum_{i \not\in A}  \log\left( \exp\left( -\frac{1}{2} (X_i - (\mu + \tau))^2 \right) +  \frac{1}{n^2}    \right) \\
&\leq \sum_{i \not\in A}  \log\left( \exp\left( -\frac{1}{2} ( 2\sqrt{\log{n}} - \frac{9}{10}C\sqrt{\log{n}})^2 \right) +  \frac{1}{n^2}    \right)  \\
&= \sum_{i \not\in A}  \log\left( n^{-\frac{1}{2} \left(2-\frac{9}{10}C\right)^2 } +  n^{-2}    \right) \\
&\leq  \sum_{i \not\in A}  \log\left( n^{-2 } +  n^{-2}    \right) \text{ for sufficiently large $C$} \\
&= (1-\alpha) n \log(2n^{-2}),
\end{split}
\end{equation}

where the second inequality follows from the assumption that $1-\frac{1}{n^2} < \alpha + \kappa < 1$, and the third inequality follows from  the fact that $X_i \leq 2\sqrt{\log{n}} \leq \frac{9}{10} C\sqrt{\log{n}} \leq \frac{9}{10}\mu \leq \mu + \tau$ for sufficiently large $C$.

For sufficiently large $n$, we have that $\likX \leq (1-\alpha) n \log(2n^{-2}) = -\Theta(n\log{n}) < -n \leq L_{\alpha, \mu}(\mathbf{X})$, as desired.

\noindent \textbf{Case 3: $\tau$ does not satisfy \eqref{eq:cond_kappa_tau_2}.}

Since $\tau$ does not satisfy \eqref{eq:cond_kappa_tau_2}, we have that either $\mu + \tau > \frac{11}{10}\mu$ or $\mu + \tau < \frac{9}{10}\mu$. We treat each of these sub-cases separately.  In each sub-case, we use the bound $L_{\alpha, \mu}(\mathbf{X}) > -n$ derived in Case 2.

\noindent \textbf{Case 3, Sub-case 1: $\mu + \tau > \frac{11}{10}\mu$.} 

As before, we will show that $\likX < -n \leq L_{\alpha, \mu}(\mathbf{X})$. We upper bound $\likX$ as
\begin{equation}
\begin{split}
\likX &= \sum_{i=1}^n \log\left( (\alpha+\kappa) \cdot \exp\left( -\frac{1}{2} (X_i - \mu - \tau)^2 \right) + (1-\alpha-\kappa) \cdot \exp\left( -\frac{X_i^2}{2} \right) \right) \\
&\leq \sum_{i \in A}  \log\left( \exp\left( -\frac{1}{2} \big(X_i - (\mu + \tau)\big)^2 \right) +  \exp\left( -\frac{X_i^2}{2} \right)    \right) \\
&\leq \sum_{i \in A}  \log\left( \exp\left( -\frac{1}{2} \left(\mu + 2\sqrt{\log{n}}- \frac{11}{10}\mu\right)^2 \right) +  \exp\left( -\frac{1}{2}(\mu - 2\sqrt{\log{n}})^2 \right)    \right)\\
&= \sum_{i \in A}  \log\left( \exp\left( -\frac{1}{2} \left(\frac{1}{10}\mu - 2\sqrt{\log{n}}\right)^2 \right) +  \exp\left( -\frac{1}{2}(\mu - 2\sqrt{\log{n}})^2 \right)    \right) \\
&\leq \sum_{i \in A}  \log\left( \exp\left( -\frac{1}{2} \left(\frac{1}{10}C\sqrt{\log{n}} + 2\sqrt{\log{n}}\right)^2 \right) +  \exp\left( -\frac{1}{2}(C\sqrt{\log{n}} - 2\sqrt{\log{n}})^2 \right)    \right) \\
&= \sum_{i \in A} \log\left( n^{-\frac{1}{2} \left(\frac{C}{10}+2\right)^2} + n^{-\frac{1}{2} (C-2)^2} \right) \\
&\leq \sum_{i \in A} \log\left( n^{-2} + n^{-2}\right) \text{ for sufficiently large $C$},
\end{split}
\end{equation}

The first inequality uses that $\alpha + \kappa \leq 1$ and $1-\alpha - \kappa \leq 1$. The second inequality uses that $X_i \leq \mu + 2\sqrt{\log{n}} < \frac{11}{10}\mu < \mu + \tau$ (where $\mu + 2\sqrt{\log{n}} < \frac{11}{10}\mu \Leftrightarrow \mu \geq 20 \sqrt{\log{n}}$ holds for sufficiently large $C$), and $X_i \geq \mu - 2\sqrt{\log{n}}$. The third inequality uses that $\frac{1}{10}\mu \geq \frac{1}{10} C \sqrt{\log{n}} > 2\sqrt{\log{n}}$ and $\mu \geq C \sqrt{\log{n}} > 2\sqrt{\log{n}}$ (for sufficiently large $C$). 

Thus,  for sufficiently large $n$ we have $\likX \leq -\alpha n \log(2n^{-2}) < -n = L_{\alpha, \mu}(\mathbf{X})$, as desired.

\noindent \textbf{Case 3, Sub-case 2: $\mu + \tau < \frac{9}{10}\mu$.} 

As before, we will show that $\likX < -n \leq L_{\alpha, \mu}(\mathbf{X})$. We upper bound $\likX$ as

\begin{equation}
\begin{split}
\likX &= \sum_{i=1}^n \log\left( (\alpha+\kappa) \cdot \exp\left( -\frac{1}{2} (X_i - \mu - \tau)^2 \right) + (1-\alpha-\kappa) \cdot \exp\left( -\frac{X_i^2}{2} \right) \right) \\
&\leq \sum_{i\in A} \log\left( \exp\left( -\frac{1}{2} \big(X_i - (\mu + \tau)\big)^2 \right) + \exp\left( -\frac{X_i^2}{2} \right) \right) \\
&\leq  \sum_{i\in A} \log\left( \exp\left( -\frac{1}{2} \left( (\mu - 2\sqrt{\log{n}}) - \frac{9}{10}\mu  \right)^2 \right) + \exp\left( -\frac{1}{2}(\mu - 2\sqrt{\log{n}})^2 \right)      \right) \\
&= \sum_{i\in A} \log\left( \exp\left( -\frac{1}{2} \left( \frac{1}{10}\mu - 2\sqrt{\log{n}}  \right)^2 \right) + \exp\left( -\frac{1}{2}(\mu - 2\sqrt{\log{n}})^2 \right)      \right) \\
&\leq  \sum_{i\in A} \log\left( \exp\left( -\frac{1}{2} \left(\frac{1}{10}C\sqrt{\log{n}} - 2\sqrt{\log{n}}\right)^2 \right) + \exp\left( -\frac{1}{2}(C\sqrt{\log{n}} - 2\sqrt{\log{n}})^2 \right)      \right) \\
&= \sum_{i \in A}  \log\left(  n^{-\frac{1}{2} \left( \frac{C}{10} - 2 \right)^2} + n^{-\frac{1}{2} \left( C - 2 \right)^2}   \right) \\
&\leq \sum_{i \in A}  \log\left( n^{-2} + n^{-2}\right) \text{ for sufficiently large $C$}.
\end{split}
\end{equation}

The first inequality uses that $\alpha + \kappa \leq 1$ and $1-\alpha-\kappa \leq 1$.  The second inequality uses that $\mu + \tau < \frac{9}{10}\mu < \mu - 2\sqrt{\log{n}} \leq X_i$ (where the middle inequality $\frac{9}{10}\mu < \mu - 2\sqrt{\log{n}} \Leftrightarrow \mu > 20\sqrt{\log{n}}$ holds for sufficiently large $C$) and $X_i > \mu - 2\sqrt{\log{n}}$. The third inequality uses that $\frac{1}{10}\mu > \frac{1}{10} C \sqrt{\log{n}} > 2\sqrt{\log{n}}$ and $\mu > C\sqrt{\log{n}} > 2 \sqrt{\log{n}}$ for sufficiently large $C$.

So as before, for sufficiently large $n$ we have $\likX \leq -\alpha n \log(2n^{-2}) < -n = L_{\alpha, \mu}(\mathbf{X})$, as desired.
\end{proof}

\subsection{Proofs of Corollaries}

\begin{namedproblem}[\textbf{Corollary \ref{cor:gmm1}}]
Let $\mathbf{X} = (X_1, \dots, X_n) \sim \asdS$, where $|A|=\alpha n$ for $0 < \alpha < 0.5$ and $\mu \geq C\sqrt{\log{n}}$ for a sufficiently large constant $C > 0$. Then $\displaystyle\lim_{n\to\infty} \text{Bias}(|\mleS|/n) = 0$.
\end{namedproblem}

\begin{proof}
Let $B_n$ be the event that $\Big| \frac{|\mleS|}{n} - |A|\Big| \leq \sqrt{\frac{\log{n}}{n}}$. By Theorem \ref{thm:gmm_likelihood}, $P(B_n) \geq 1-\frac{1}{n}$. Note that when $B_n$ does not hold, then  $\Big| \frac{|\mleS|}{n} - |A|\Big| \leq 1$. So we have
\begin{equation}
\begin{split}
| \text{Bias}(|\mleS|/n) |  &\leq E\left[ \Big| \frac{|\mleS|}{n} - |A|\Big| \mid B_n  \right] + E\left[ \Big| \frac{|\mleS|}{n} - |A|\Big| \mid B_n^c  \right] \\
&\leq \left( \sqrt{\frac{\log{n}}{n}} \right) \cdot P(B_n) + 1\cdot P(B_n^c) \\
&\leq \sqrt{\frac{\log{n}}{n}}  + \frac{1}{n}.
\end{split}
\end{equation}

It follows that $\displaystyle\lim_{n\to\infty} \text{Bias}(|\mleS|/n) = \displaystyle\lim_{n\to\infty} \left( \sqrt{\frac{\log{n}}{n}}  + \frac{1}{n} \right) = 0$.
\end{proof}

\begin{namedproblem}[\textbf{Corollary \ref{cor:gmm2}}]
Let $\mathbf{X} = (X_1, \dots, X_n) \sim \text{ASD}_{\mathcal{P}_n}(A, \mu)$, where $|A|=\alpha n$ for $0 < \alpha < 0.5$ and $\mu \geq C \sqrt{\log{n}}$ for a sufficiently large constant $C> 0$. Then $\frac{|A \triangle \widehat{A}_{\text{GMM}}|}{|A|} \leq \frac{2}{\alpha} \sqrt{\frac{\log{n}}{n}} = o(1)$ with probability at least $1-\frac{1}{n}$.
\end{namedproblem}

\begin{proof}

By Lemma \ref{lem:high_prob_norm},  with probability at least $1-\frac{1}{n}$ we have that $X_i \leq 2\sqrt{\log{n}}$ for $i \not\in A$ and $X_j \geq (C+2)\sqrt{\log{n}}$ for $j \in A$. Thus $X_i \leq 2\sqrt{\log{n}} < (C+2)\sqrt{\log{n}} \leq X_j$ for all $i \not\in A$ and $j \in A$, which means $A$ consists of the $\alpha n$ largest observations $X_i$.

Moreover, because the responsibilities $\widehat{r}_i$ are sorted in the same order as the observations $X_i$, we have that $\widehat{A}_{\text{GMM}}$ consists of the $|\widehat{A}_{\text{GMM}}|$ largest observations $X_i$. Thus,  by Theorem \ref{thm:gmm_likelihood}, we have
\begin{equation}
|\widehat{A}_{\text{GMM}} \triangle A| = \Big| |\widehat{A}_{\text{GMM}}| - \alpha n\Big| \leq \Big| |\widehat{A}_{\text{GMM}}| - \widehat{\alpha}_{\text{GMM}} n\Big| + |\widehat{\alpha}_{\text{GMM}} - \alpha| \leq 2\sqrt{n \log n}.
\end{equation}
with probability at least $1-\frac{1}{n}$, for sufficiently large $n$.
Since $|A| = \alpha n$, it follows that $\frac{|\widehat{A}_{\text{GMM}} \triangle A|}{|A|} \leq \frac{2}{\alpha} \sqrt{\frac{\log{n}}{n}} = o(1)$ as desired.
\end{proof}

\end{document}